\def\eqref#1{equation~\ref{#1}}
\def\1{\bm{1}}
\DeclareMathAlphabet{\mathsfit}{\encodingdefault}{\sfdefault}{m}{sl}
\SetMathAlphabet{\mathsfit}{bold}{\encodingdefault}{\sfdefault}{bx}{n}
\newtheorem{remark}{Remark}
\newtheorem{theorem}{Theorem}
\newtheorem{lemma}{Lemma}
\newtheorem{definition}{Definition}
\newtheorem{example}{Example}
\newtheorem{corollary}{Corollary}
\newtheorem{proof}{Proof}
\newtheorem{proposition}{Proposition}
\newcommand{\method}{\textsc{TensorGRaD}}
\title{\method: Tensor Gradient Robust Decomposition for Memory-Efficient Neural Operator Training} %
\author{Sebastian Loeschcke$^{1}$, David Pitt$^{2}$,  Robert Joseph George$^{2}$, Jiawei Zhao$^{3}$, Cheng Luo$^{2}$  \textbf{Yuandong Tian}$^{3}$, \textbf{Jean Kossaifi}$^{4}$, \textbf{Anima Anandkumar}$^{2}$ \\
$^{1}$University of Copenhagen $^{2}$California Institute of Technology, $^{3}$Meta FAIR, $^{4}$NVIDIA AI
}
\begin{document}

\maketitle

\begin{abstract}
Scientific problems require resolving multi-scale phenomena across different resolutions and learning solution operators in infinite-dimensional function spaces. 
Neural operators provide a powerful framework for this, using tensor-parameterized layers to capture complex, multi-dimensional relationships. However, scaling neural operators to high-resolution problems leads to significant computational demands, making the training of industrial-scale models prohibitive.
In this work, we introduce \textbf{\method}, a novel method that directly addresses the memory challenges associated with optimizing large tensor-structured weights.
Our approach, based on a \textit{robust tensor decomposition}, factorizes gradients as the sum of a low-rank tensor and a sparse one to efficiently capture information within optimizer states, including outliers.
Additionally, we provide a recipe for mixed precision training of \method, achieving further memory savings without sacrificing accuracy.
We showcase the effectiveness of \method\ on Fourier Neural Operators, a class of models crucial for solving partial differential equations (PDE).
We provide theoretical guarantees for \method\, demonstrating its fundamental advantage over matrix-based gradient compression methods.
We empirically demonstrate large improvements across various PDE tasks, including the challenging turbulent Navier-Stokes case at a Reynolds number of $10^5$. \method\ reduces total memory usage by over 50\% while maintaining and sometimes even improving accuracy. 
\end{abstract}

\section{Introduction}

Modern deep learning has shifted towards large-scale foundation models, which have enabled unprecedented performance across diverse domains such as natural language processing, computer vision, and scientific computing~\cite{Brown2020,Kirillov2023}.
This represents a paradigm shift from traditional machine learning, where performance improvements are driven by scaling laws—requiring increases in data, compute, and model size~\cite{xiao2025rethinkingconventionalwisdommachine}.
This scaling comes at the cost of growing memory requirements. Adaptive optimizers such as Adam~\cite{kingma2014adam}, while crucial in training these large models, worsen this issue by storing additional moment tensors (e.g., first and second order moments for Adam) for each weight, which significantly increases the memory overhead~\cite{zhao2024galore,2024grass,LoQT}

This memory requirement is exacerbated in the case of scientific computing, both by the size and the nature of the data and models involved. Solving scientific problems typically involves solving partial differential equations (PDEs) and resolving multi-scale phenomena, on very large-scale data~\cite{nature_no}.
This multi-dimensional data is naturally represented using tensors: multidimensional arrays that offer a natural framework for representing and manipulating complex, high-dimensional
 data structures~\cite{siam}.
For instance, in weather forecasting, data can span spatial grids, time steps, and atmospheric variables, leading to high-order tensor representations~\cite{bonev2023sphericalfourierneuraloperators}.

\begin{figure}
    \centering
    \includegraphics[width=0.92\linewidth]{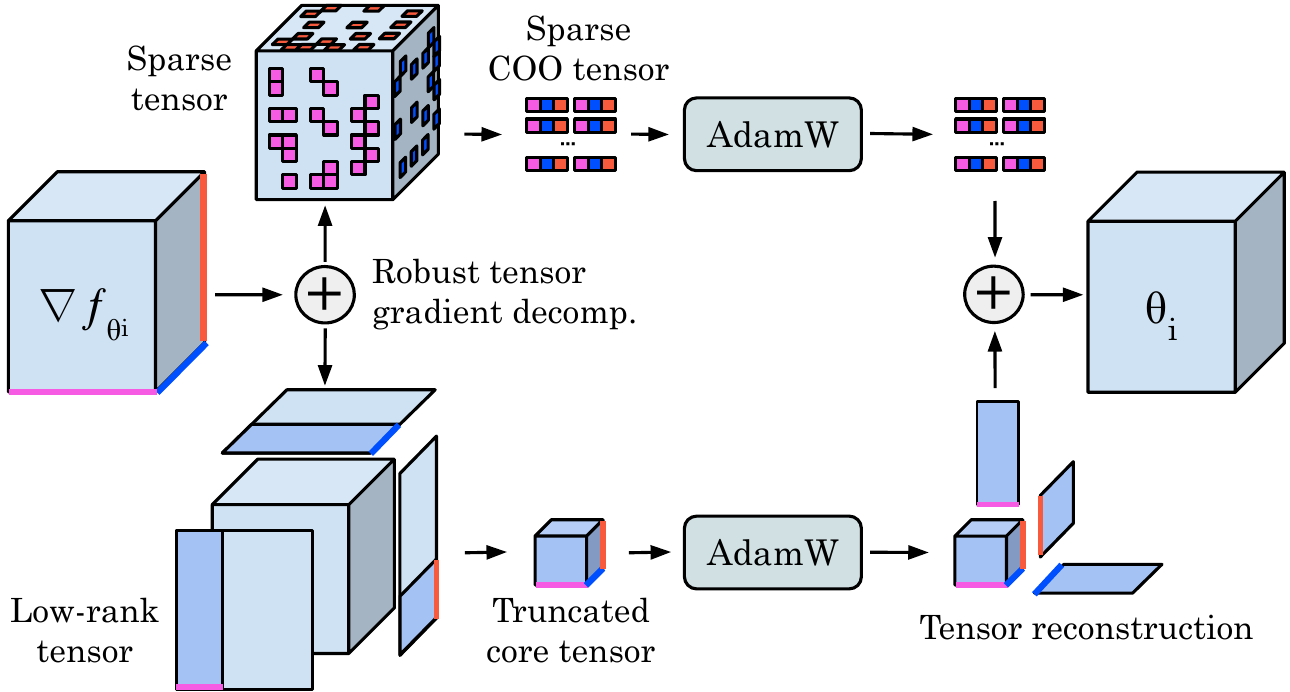}
    \caption{\textbf{Overview of \method}. Low-rank plus sparse decomposition}
    \label{fig:main1}\vspace{-10pt}
\end{figure}%

\textbf{Neural operators} have been proposed as the natural framework to tackle these problems, generalizing deep learning from learning in finite-dimensional spaces to learning in function spaces~\cite{li2023geometryinformedneuraloperatorlargescale}. Unlike neural networks, neural operators learn a mapping between function spaces, making them naturally suited for capturing the multi-scale structure of scientific data. 
To capture these multi-scale relationships, Neural Operators leverage the inherent (multi-dimensional) structure in the data, which requires maintaining high-order tensor weights and gradients to capture complex spatial, temporal, and channel interactions. As a result, unlike typical models in natural language processing or computer vision, where memory is dominated by activations, the memory overhead in neural operators is primarily driven by the tensor-structured weights and gradients. 
This memory requirement grows intractably with the scale of the data and has hindered the scaling of neural operators to large and complex scientific problems.

While many recent methods reduce optimizer memory in large language models, they are not directly applicable to neural operators due to the multi-dimensional structure of their weights and gradients. 
GaLore (Gradient Low-Rank Projection)~\cite{zhao2024galore}, for example, uses a Singular Value Decomposition (SVD) to compute low-rank approximations of gradient matrices before computing optimizer states, significantly reducing memory usage during training. Extending GaLore to neural operators requires flattening gradient tensors into matrices, which disrupts their multi-dimensional structure, discarding important relationships between modes (e.g., spatial, temporal, or channel interactions). The resulting flattened matrix is not naturally low-rank and hence, standard GaLore has poor performance when low-rank structures are enforced.

This not only leads to a suboptimal low-rank approximation but can also degrade model performance, especially in scientific applications where these interactions are crucial.
Alternatives to low-rank gradient projections include GRASS~\cite{2024grass}, which uses structured sparse projections that match low-rank methods at higher memory budgets but underperform under strict constraints. 
Generally, existing methods primarily focus on either low-rank or sparse representations, rarely exploring their combined application to tensor gradients, where multi-dimensional structure is critical.

\textbf{In this work}, we propose \textbf{\method}, a novel method for efficient training of neural operators that directly addresses the memory challenges associated with tensor-structured gradients. Our approach hinges on a \textit{robust tensor decomposition (RTD)}~\cite{NIPS2014_robust_tensor_decomposition} of the gradients during optimization. 
Specifically, we generalize both low-rank and sparse projections to tensors and unify them in a \emph{robust tensor decomposition} framework. We prove analytically that a direct extension of GaLore, relying on matricizing the gradient tensors, fails to preserve the multilinear structure required by Neural Operators. We also verify this empirically in ablation studies.

Our robust tensor decomposition framework is able to
accurately compress gradients $\mathcal{G}$ by decomposing them as a sum $\mathcal{G} = \mathcal{L} + \mathcal{S}$ of a low rank tensor approximation $\mathcal{L}$ with a sparse part $\mathcal{S}$.
We demonstrate that our robust tensor gradient factorization remains stable under a mixed-precision strategy, running activations, weights, and gradients in half precision while maintaining optimizer states in full precision. This setup achieves substantial memory savings without compromising model accuracy. Empirically, we show that using half-precision optimizer states degrades performance, underscoring the importance of our method for preserving gradient information.

Implemented with AdamW, \method\ reduces memory usage by up to $75\%$ for high-resolution neural operator learning, while matching or improving baseline accuracy across several PDE benchmarks. On the challenging Navier–Stokes at $1024 \times 1024$ resolution with a Reynolds number of $10^5$, where turbulent structures emerge across multiple scales, our mixed precision \method\ matches the test $L_2$ loss of the full-precision Adam optimizer, while reducing optimizer memory usage by up to 75\% and cutting and total memory cost of more than 55\% while matching or even improving on baseline performance. Code is available at 
\textcolor{red!60!yellow}{ \url{https://github.com/neuraloperator/tensorgrad}}

\begin{figure}
    \centering
    \begin{subfigure}[b]{0.44\textwidth}
         \includegraphics[width=\linewidth]{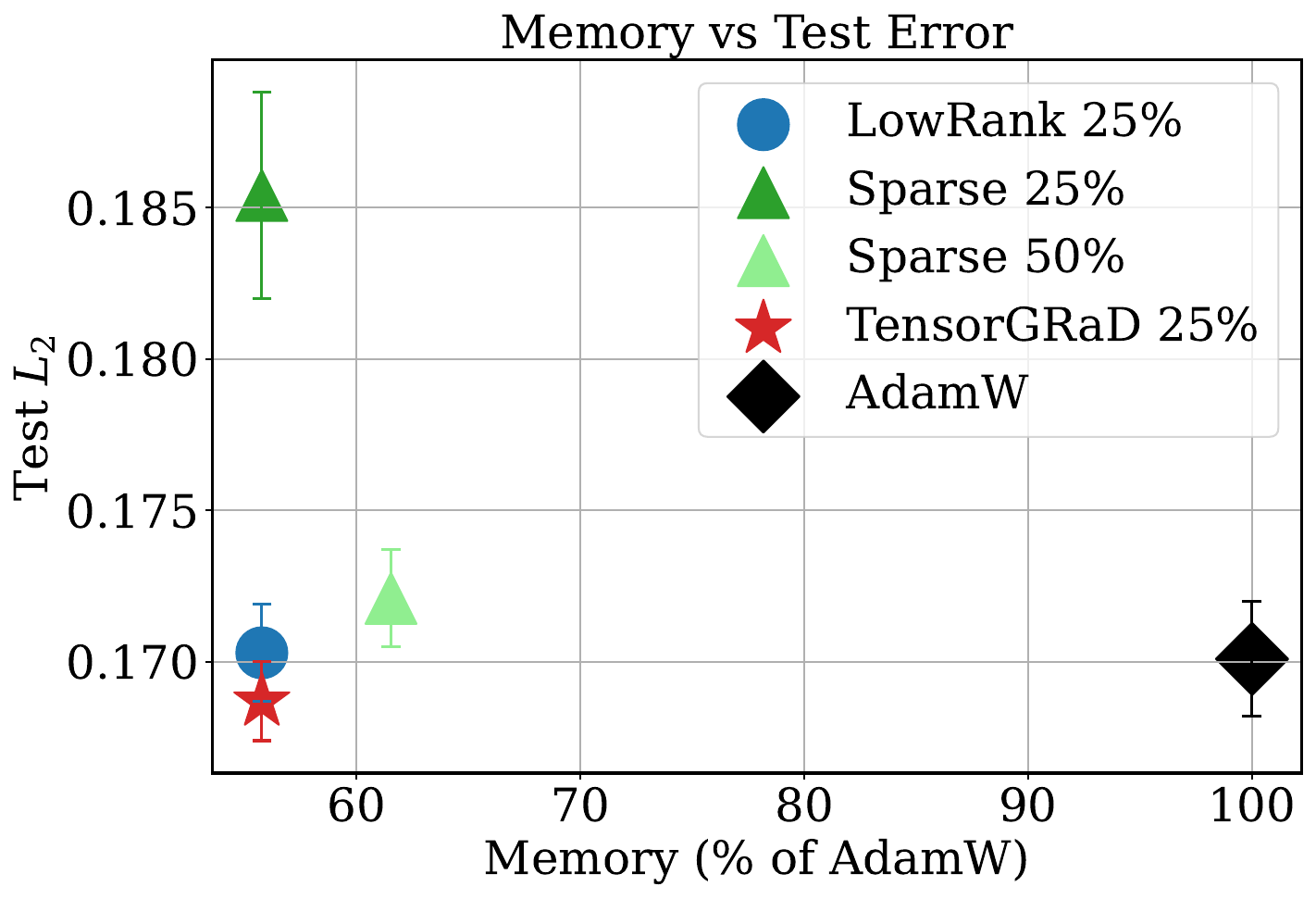}
    \end{subfigure}
    \hfill
    \begin{subfigure}[b]{0.52\textwidth}
        \includegraphics[width=\linewidth]{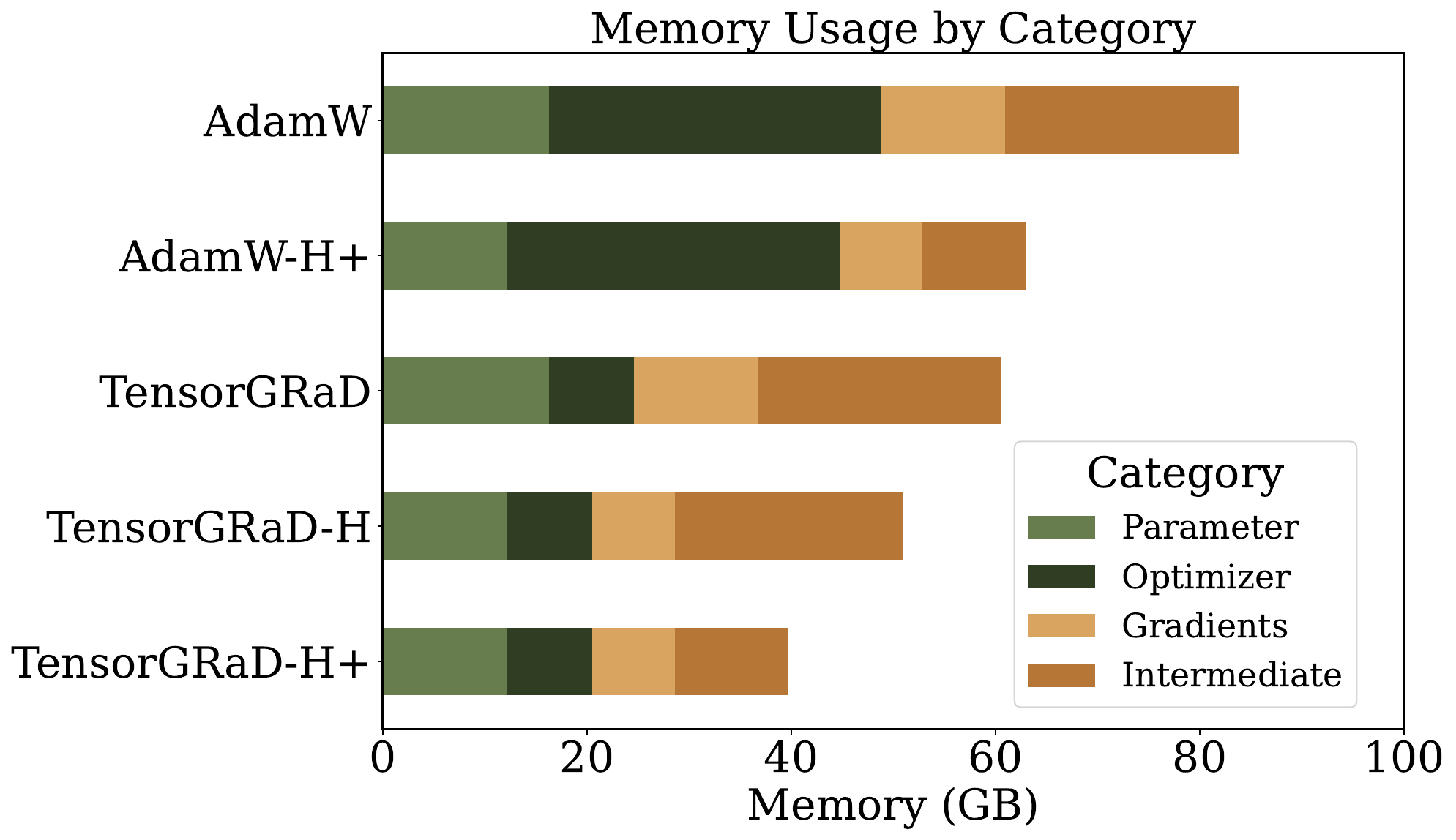}
    \end{subfigure}
    \caption{\textbf{Memory and performance.} 
Left: Comparison of low-rank, structured sparse, and \method\ (mixed precision) vs.\ Adam on Navier–Stokes $1024\times1024$. \method\ offers the best memory–accuracy trade-off. 
Right: Peak CUDA memory for FNO models with 256 channels. \textbf{H}: uses half precision for weights/gradients and full for optimizer states, and \textbf{+} includes activation checkpointing.}
    
    \label{fig:memuse_comparison}
\vspace{-5mm}
\end{figure}

\section{\textbf{\method}}\vspace{-10pt}
\label{sec:method}
In this section, we first introduce the necessary background before going into detail in our method, illustrated in Fig.~\ref{fig:main1}, its training, implementation, and theoretical properties.

\subsection{Background: Tensors and Neural Operators }
\textbf{Tensors} are multidimensional arrays that generalize the concepts of vectors (first-order tensors) and matrices (second-order tensors) to higher orders. An $N$th-order tensor $\mathcal{X} \in \mathbb{C}^{I_1 \times I_2 \times \cdots \times I_N}$ is an $N$-way array where each mode $n$ has dimension $I_n$.

\textbf{Neural Operators} $\mathcal{G}_\theta : \mathcal{A} \times \theta \to \mathcal{U}$ combine linear integral operators $\mathcal{K}$ with pointwise non-linear activations $\sigma$ to approximate non-linear operators, mapping initial conditions $a \in \mathcal{A}$ to solutions $u \in \mathcal{U}$. Their operation is defined as $\mathcal{G}_\theta := \mathcal{Q} \circ (W_L + \mathcal{K}_L) \circ \cdots \circ \sigma(W_1 + \mathcal{K}_1) \circ \mathcal{P}$, where $\mathcal{P}$ and $\mathcal{Q}$ are pointwise neural networks for encoding and decoding, $W_l$ are linear operators, $\mathcal{K}_l$ are integral kernel operators, and $\sigma$ are activation functions. 

The \textbf{Fourier Neural Operator (FNO)} proposes a specific convolution operator for $\mathcal{K}$, defined as $(\mathcal{K}v_l)(x) = \mathcal{F}^{-1}(R \cdot T_K \mathcal{F}v_l)(x)$, where $\mathcal{F}$ and $\mathcal{F}^{-1}$ are the Fourier transform and its inverse, $R$ is a learnable transformation, and $T_K$ truncates to the lowest $K$ Fourier modes. This formulation allows FNO to be discretization-invariant, producing high-quality solutions for query points not in the training grid and enabling transfer between different grid resolutions and discretizations.

\subsection{\method}
\textbf{Robust decomposition of gradients.} Our method hinges on a robust decomposition of the gradient during training.
Instead of minimizing a reconstruction error under some Gaussian error assumption, robust decomposition decomposes inputs as a low-rank part and a sparse part, even when it contains gross corruption or outliers~\cite{robust_pca}.
Robust Tensor Decomposition (RTD)~\cite{NIPS2014_robust_tensor_decomposition} extends this concept to higher-order tensors, proving that low-rank components can be separated from sparse corruptions using convex optimization. Later work~\cite{yang2017unifiedframework} provided convergence guarantees for low-rank plus sparse recovery in matrices.

Leveraging these advances in robust decomposition, we transform the gradients into compressed representations before computing and storing their optimizer moments. Specifically, we use two complementary forms of structure in gradient tensors: unstructured sparsity and low-rank decompositions. Each form compresses a distinct aspect of the gradient: sparse representations preserve sharp, localized signals, while low-rank approximations model smooth, global structure. These transformations are applied sequentially and combined to form a \textit{robust tensor decomposition}.

\paragraph{Unstructured sparse gradient tensor.}
We represent localized gradient information using a sparse COO-format tensor $\hat{\mathcal{G}}_S$ supported on a fixed set of indices $\Omega \subseteq [I_1] \times \cdots \times [I_N]$. This index set is constructed by selecting $k = \lceil \rho I \rceil$ entries from $\mathcal{G} \in \mathbb{C}^{I_1 \times \cdots \times I_N}$ according to a sparsification strategy, e.g., by inspiration from GRASS~\cite{2024grass} top-$k$ magnitude, probabilistic sampling, or uniform random selection. The corresponding values are extracted to define $\hat{\mathcal{G}}_S = \operatorname{Sparse}(\mathcal{G}, \Omega),$
where $\hat{\mathcal{G}}_S$ is a $k$-nonzero sparse tensor in COO format, consisting of index–value pairs. 
The sparse index set $\Omega$ is recomputed only every $T$ steps and reused in between, while the sparse tensor $\hat{\mathcal{G}}_S$ is extracted from the current gradient at every step.

This format is compatible with standard sparse tensor operations, enabling direct addition, scaling, and indexing without reconstructing a dense tensor. 
Overall, this representation requires storing $k$ integer indices and $k$ complex values. It supports efficient computation in the sparse format, such as gather and scatter operations, with no dense intermediates.

\paragraph{Low-rank gradient tensor decomposition.}
To compress high-dimensional gradient tensors, we use a Tucker decomposition~\cite{Tucker1966SomeMN,kolda2009tensor}, a higher-order generalization of low-rank matrix factorization. Given a tensor $\mathcal{G} \in \mathbb{C}^{I_1 \times \cdots \times I_N}$, we approximate it as
$
\mathcal{G} \approx \llbracket \mathcal{C};\, U^{(1)}, \dots, U^{(N)} \rrbracket,$
where $\mathcal{C}$ is a core tensor of size $\mathbb{C}^{r_1 \times \cdots \times r_N}$ and $U^{(n)} \in \mathbb{C}^{I_n \times r_n}$ are orthonormal factor matrices.
We compute the decomposition once and discard the core, retaining only the factor matrices. These are then reused to compress incoming gradients into a factorized representation:
$
\hat{\mathcal{G}}_L = \mathcal{G} \times_1 {U^{(1)}}^\top \cdots \times_N {U^{(N)}}^\top$.
Optimizer states are maintained directly on $\hat{\mathcal{G}}_L$, and the transformed tensor is reconstructed after the update via:
$\tilde{\mathcal{G}}_L = \hat{\mathcal{G}}_L \times_1 U^{(1)} \cdots \times_N U^{(N)}.$

This decomposition reduces memory by maintaining only the factor matrices (each of size $\mathbb{C}^{I_n \times r_n}$) and the compressed optimizer state.
It offers three key properties central to our method:
\begin{itemize}[topsep=1pt,itemsep=1pt, left=1pt]
\setlength\itemsep{0em}
\item \textbf{SVD generalization:} In the special case of $N=2$, the Tucker decomposition reduces to the standard matrix SVD, linking our method naturally to GaLore.
\item \textbf{Orthonormality and efficiency:} The factor matrices $U^{(n)}$ are orthonormal, allowing stable compression via mode-wise multiplication with $U^{(n)\top}$, and reconstruction using $U^{(n)}$ directly without requiring matrix inversion.
\item \textbf{Structure preservation:} Tucker factorization maintains mode-wise information, avoiding the loss of semantic structure associated with tensor flattening and Kronecker approximations.
\end{itemize}

\paragraph{Residual and composition.}
The two components are applied sequentially. After forming the sparse or low-rank approximation $\tilde{\mathcal{G}}_1$, we compute the residual $
\mathcal{R} = \mathcal{G} - \tilde{\mathcal{G}}_1.
$
We then use $\mathcal{R}$ to compute $\tilde{\mathcal{G}}_2$ instead of ${\mathcal{G}}$, i.e., after computing the residual either the low-rank decomposition or the sparse tensor is computed on $\mathcal{R}$.
 This composition allows each branch to focus on distinct parts of the tensors. Using sparse top-k removal first can remove high-frequency components or outliers. Alternatively, using low-rank first can capture the smoothness, followed by a sparse projector that can capture the most significant part not converted by the low-rank tensors.

\paragraph{Optimizer update.}
Each component is updated independently using Adam in its compressed space. First and second moment estimates $(\mathcal{M}_S, \mathcal{V}_S)$ and $(\mathcal{M}_L, \mathcal{V}_L)$ are maintained for the sparse and low-rank parts, respectively. The full update is reconstructed as:
$$
\Delta \mathcal{W} = \alpha \left( \tilde{\mathcal{G}}_L + \lambda\,\tilde{\mathcal{G}}_S \right),
\qquad
\mathcal{W}_{t+1} = \mathcal{W}_t + \eta \cdot \Delta \mathcal{W}.
$$
The order of the decompositions matters only during the forward pass, as the first component defines the residual for the second. For memory efficiency, the low-rank component is reconstructed first, and the sparse values are added directly into the same tensor via scatter operations. This way, we avoid having two full tensors in memory at once.
This robust decomposition reduces memory overhead by maintaining compact optimizer states in compressed formats. Unstructured sparsity captures fine-grained outliers, while mode-wise low-rank decompositions preserve global structure. By combining them, \method\ achieves higher fidelity under strong memory constraints than either approach alone. We present the pseudocode in~\ref{pseudocode}.

Fig.~\ref{fig:robust_and_tradeoff} compares the distribution of reconstruction errors for a complex gradient tensor ($64^3 \times 32$) from an FNO layer. We evaluate three strategies: (1) unstructured sparse followed by low-rank compression ($5\% + 20\%$), (2) low-rank followed by sparse ($20\% + 5\%$), and (3) pure low-rank compression at $25\%$. The reconstruction error is measured as the absolute difference between the original gradient $\mathcal{G}$ and its approximation $\tilde{\mathcal{G}}$, i.e., $|\mathcal{G} - \tilde{\mathcal{G}}|$. All layers exhibit similar trends. The pure low-rank method introduces more high-magnitude outliers, while the combined strategies produce tighter distributions. Applying sparse compression first yields fewer large errors but slightly higher average reconstruction error, suggesting a trade-off between error spread and overall magnitude.

\subsection{Mixed precision training}
\label{sec:mixed_precision}\vspace{-5pt}
We show that our gradient compression method remains stable when activations, weights, and gradients are computed in half precision, provided that optimizer states are maintained in full precision. This extends recent work on FNO training by Tu et al.~\cite{tu2024guaranteedapproximationboundsmixedprecision}, which established approximation guarantees for mixed-precision FNO training using AMP. However, their approach retains weights in full precision and casts them to half precision during computation. In contrast, we explore full half-precision training beyond AMP, including weight storage. Additionally, we find that storing optimizer states in half precision significantly degrades performance, further emphasizing projected optimizer states as an alternative. Combining mixed-precision training with \method\ allows us to achieve further memory savings without sacrificing performance.

\subsection{\textbf{Implementation}}\vspace{-5pt}
The \textbf{low-rank component} of \method\ uses the efficient Tucker decomposition from TensorLy~\cite{kossaifi2019tensorly}, implemented via Higher-Order Orthogonal Iteration. See Appendix~\ref{sec:impl_hoi} for more details. All subsequent operations, like compressing the gradients and reconstructing the low-rank updates, are performed using PyTorch. The \textbf{unstructured sparse component} is implemented natively in PyTorch. We extract the top-$k$ or randomly sampled values based on a given sparsification strategy and store them as index–value pairs. This format supports direct operations like elementwise scaling and addition without dense reconstruction. Together with gradient compression and mixed precision training, these techniques allow \textbf{\method} to scale efficiently to large neural operators.
We refer to this full setup as \textbf{\method+}, and highlight its most memory-efficient in Fig.~\ref{fig:memuse_comparison}.

\begin{algorithm}
\caption{\textsc{\method}: Adam with Sparse and Low-Rank Gradient Compression}
\label{pseudocode}
\begin{algorithmic}[1]
\Require Weight tensor $\mathcal{W} \in \mathbb{C}^{N_1 \times N_2 \times N_3 \times N_4}$. Step size $\eta$, scale factor $\alpha$, decay rates $\beta_1, \beta_2$, rank $r$, sparsity $\rho$, sparsity scale factor $\lambda$, subspace change frequency $T$. 
\State Initialize step $t \gets 0$
\State Initialize first-order moments $\mathcal{M}_L \in \mathbb{C}^{r \times r \times r \times r} \gets 0$, $\mathcal{M}_S \in \mathbb{C}^{\rho N_1 \times \rho N_2 \times \rho N_3 \times \rho N_4} \gets 0$
\State Initialize second-order moments $\mathcal{V}_L \in \mathbb{C}^{r \times r \times r \times r} \gets 0$, $\mathcal{V}_S \in \mathbb{C}^{\rho N_1 \times \rho N_2 \times \rho N_3 \times \rho N_4} \gets 0$
\Repeat
    \State $\mathcal{G}_t \gets -\nabla_{\mathcal{W}} \phi_t(\mathcal{W}_t)$
    
    \If{$t \bmod T = 0$}
        \State $\Omega \gets \operatorname{SparseIndices}(\mathcal{G}_t, \rho, \text{strategy})$ \Comment{Select new sparse indices}
        \State $\mathcal{R}_L \gets \mathcal{G}_t - \operatorname{Sparse}(\mathcal{G}_t, \Omega)$ \Comment{Subtract sparse tensor from input}
        \State $\{U^{(n)}\}_{n=1}^4 \gets \operatorname{TuckerFactors}(\mathcal{R}_L, r)$ \Comment{Update low-rank factors}
    \EndIf
    
    \State $\hat{\mathcal{G}}_S \gets \operatorname{Sparse}(\mathcal{G}_t, \Omega)$ \Comment{Create sparse tensor at current step}
    \State $\hat{\mathcal{G}}_L \gets (\mathcal{G}_t - \hat{\mathcal{G}}_S) \times_1 {U^{(1)}}^\top \cdots \times_4 {U^{(4)}}^\top$ \Comment{Low-rank compression}
    
    \State $\hat{\mathcal{G}}_S \gets \textsc{AdamUpdate}(\hat{\mathcal{G}}_S, \mathcal{M}_S, \mathcal{V}_S, \beta_1, \beta_2, t)$
    \State $\hat{\mathcal{G}}_L \gets \textsc{AdamUpdate}(\hat{\mathcal{G}}_L, \mathcal{M}_L, \mathcal{V}_L, \beta_1, \beta_2, t)$

    \State $\tilde{\mathcal{G}} \gets \alpha \cdot (\hat{\mathcal{G}}_L \times_1 U^{(1)} \cdots \times_4 U^{(4)})$ \Comment{Reconstruct low-rank part}
    \State $\tilde{\mathcal{G}} \gets \tilde{\mathcal{G}} + \lambda \cdot \hat{\mathcal{G}}_S$ \Comment{Add scaled sparse tensor to low-rank tensor}

    \State $\mathcal{W}_{t+1} \gets \mathcal{W}_t + \eta \cdot \tilde{\mathcal{G}}$
    \State $t \gets t + 1$
\Until{convergence}
\end{algorithmic}
\end{algorithm}

\subsection{Theoretical Results of \textbf{\method}}\vspace{-5pt}
We extend the theoretical foundations of GaLore~\cite{zhao2024galore} to tensor-structured weights, proving both convergence guarantees and low-rank emergence during training. Our analysis shows that gradients of FNO models naturally develop low-rank structure in each tensor mode during training, while \textbf{\method} achieves convergence through mode-wise projections. All the proofs and background details are in Appendix sections~\ref{sec:tensor1}, \ref{sec:tensor2}, and \ref{sec:theory}.

\begin{theorem}[\textbf{\method} Convergence]
For a gradient tensor $\mathcal{G}_t \in \mathbb{R}^{I_1 \times I_2 \times \cdots \times I_d}$, let $\{P_k \in \mathbb{R}^{I_k \times r_k}\}_{k=1}^d$ be fixed orthonormal projection matrices for each mode k with ranks $\{r_k\}_{k=1}^d$.  Suppose for each mode k:
\begin{itemize}
    \item $\mathcal{A}_i$, $\mathcal{B}_i$, $\mathcal{C}_i$ have $L_A^{(k)}$, $L_B^{(k)}$, $L_C^{(k)}$ mode-k continuity, $\|\mathcal{W}_t\|_{(k)} \leq D_k$ (mode-k spectral norm bound), $\hat{\mathcal{B}}_{it}^{(k)} := P_k^\top \mathcal{B}_i^{(k)}(\mathcal{W}_t) P_k$, $\hat{\mathcal{C}}_{it}^{(k)} := P_k^\top \mathcal{C}_i^{(k)}(\mathcal{W}_t) P_k$ ,$\kappa_t^{(k)} := \frac{1}{N}\sum_i \lambda_{\min}(\hat{\mathcal{B}}_{it}^{(k)})\lambda_{\min}(\hat{\mathcal{C}}_{it}^{(k)})$
\end{itemize}

Then \textbf{\method} with $\rho_t \equiv 1$ satisfies for each mode k:
\[
\|(\mathcal{R}_t)_{(k)}\|_F \leq \left[1-\eta(\kappa_{t-1}^{(k)}-L_A^{(k)}-L_B^{(k)}L_C^{(k)}D_k^2)\right] \|(\mathcal{R}_{t-1})_{(k)}\|_F
\]

As a result, if $\min_{t,k} \kappa_t^{(k)} > L_A^{(k)} + L_B^{(k)}L_C^{(k)}D_k^2$ for all modes k, then $\mathcal{R}_t \to 0$ and \textbf{\method} converges with the fixed projections $\{P_k\}_{k=1}^d$. Proof~\ref{proof:theorem1}.

\begin{remark}[Mode-k Continuity]
The mode-k continuity assumption on $\mathcal{A}_i$, $\mathcal{B}_i$, $\mathcal{C}_i$ is mild and holds generically for neural network parameters.
\end{remark}
\end{theorem}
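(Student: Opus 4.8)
The plan is to mirror the GaLore convergence proof (from \cite{zhao2024galore}), but carry it out mode-by-mode on the matricizations of the gradient tensor. The key observation is that once we fix orthonormal projectors $\{P_k\}$ and restrict attention to a single mode $k$, the Tucker-compressed update in that mode behaves exactly like the matrix GaLore update on the unfolding $(\mathcal{G}_t)_{(k)}$, projected by $P_k$. So the strategy is: (i) write down the recursion for the mode-$k$ reconstruction error $(\mathcal{R}_t)_{(k)}$, (ii) expand the gradient using the assumed ``form'' $\mathcal{G}_i(\mathcal{W}) = \mathcal{A}_i(\mathcal{W}) + \mathcal{B}_i(\mathcal{W})\,\mathcal{W}\,\mathcal{C}_i(\mathcal{W})$ (in the mode-$k$ unfolding sense — this is the structural assumption inherited from the FNO gradient analysis in Appendix~\ref{sec:theory}), (iii) bound each of the three resulting terms using the mode-$k$ Lipschitz constants $L_A^{(k)}, L_B^{(k)}, L_C^{(k)}$ and the spectral bound $D_k$, and (iv) extract the contraction factor $1-\eta(\kappa_{t-1}^{(k)} - L_A^{(k)} - L_B^{(k)}L_C^{(k)}D_k^2)$.

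\textbf{First I would} set up notation: let $G_t := (\mathcal{G}_t)_{(k)}$ be the mode-$k$ unfolding, $R_t := (\mathcal{R}_t)_{(k)}$, and $\hat G_t := P_k^\top G_t$ the compressed mode-$k$ gradient (the other-mode projections commute through the unfolding and are absorbed since they are orthonormal, so they do not change the relevant Frobenius norms — this commutation step deserves an explicit lemma). With $\rho_t \equiv 1$ the update in mode $k$ reads $W_{t} = W_{t-1} + \eta\, P_k \hat G_{t-1} = W_{t-1} + \eta\, P_k P_k^\top G_{t-1}$. Writing $G_{t-1} = \frac1N\sum_i \big(\mathcal{A}_i^{(k)}(W_{t-1}) + \mathcal{B}_i^{(k)}(W_{t-1})\, W_{t-1}\, \mathcal{C}_i^{(k)}(W_{t-1})\big)$ and substituting the (stage-$t$) optimum characterization, I would express $R_t$ as $R_{t-1}$ minus the projected gradient step, then use $\hat{\mathcal{B}}_{it}^{(k)}, \hat{\mathcal{C}}_{it}^{(k)}$ to rewrite the dominant $\mathcal{B}\,W\,\mathcal{C}$ term as $\approx \frac1N\sum_i \hat{\mathcal{B}}_{i,t-1}^{(k)} R_{t-1} \hat{\mathcal{C}}_{i,t-1}^{(k)}$, whose Frobenius norm contracts $R_{t-1}$ by at least the factor $1-\eta\kappa_{t-1}^{(k)}$ (using $\|BXC\|_F \geq \lambda_{\min}(B)\lambda_{\min}(C)\|X\|_F$ for symmetric PSD $B,C$). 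The $\mathcal{A}$-term contributes an error at most $\eta L_A^{(k)}\|R_{t-1}\|_F$ by mode-$k$ Lipschitzness, and the ``wrong-argument'' discrepancy in the $\mathcal{B},\mathcal{C}$ factors contributes at most $\eta L_B^{(k)} L_C^{(k)} D_k^2 \|R_{t-1}\|_F$, using $\|W_{t-1}\|_{(k)} \leq D_k$.

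\textbf{The hard part will be} handling the cross-terms and the ``wrong-argument'' bookkeeping cleanly: the factors $\mathcal{B}_i^{(k)}, \mathcal{C}_i^{(k)}$ are evaluated at $W_{t-1}$, not at the stage optimum, so one must carefully Taylor/Lipschitz-expand $\mathcal{B}_i^{(k)}(W_{t-1}) - \mathcal{B}_i^{(k)}(W^\star)$ and similarly for $\mathcal{C}$, and control the resulting second-order terms by $D_k^2$ — this is where the $L_B^{(k)}L_C^{(k)}D_k^2$ term comes from, and getting the constant exactly right (rather than with an extra factor) requires care. A secondary subtlety is justifying that the multi-mode projection truly decouples into independent per-mode recursions: because Tucker compression applies $P_k^\top$ in every mode simultaneously, one must argue that the mode-$k$ unfolding of the full reconstruction error is exactly $(I - P_k P_k^\top)$ applied to the mode-$k$ unfolding of a tensor whose other-mode content is itself already projected, and that the Frobenius norm bound is unaffected by the orthonormal other-mode factors. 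Once the single-step contraction inequality is established, the final ``as a result'' claim is immediate: if $\min_{t,k}\kappa_t^{(k)}$ strictly exceeds $L_A^{(k)} + L_B^{(k)}L_C^{(k)}D_k^2$ for every mode, each $\|(\mathcal{R}_t)_{(k)}\|_F$ decays geometrically, hence $\mathcal{R}_t \to 0$, and iterating over $t$ gives convergence of \method\ with the fixed projectors.
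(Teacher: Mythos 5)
Your proposal follows essentially the same route as the paper's proof: mode-$k$ unfolding of the parametric gradient form, substitution of the SGD update to obtain a recursion $(\mathcal{R}_t)_{(k)} = (I-\eta\mathcal{S}_{t-1}^{(k)})(\mathcal{R}_{t-1})_{(k)} + \mathcal{E}_t^{(k)}$, a Lipschitz bound $\|\mathcal{E}_t^{(k)}\|_F \le \eta(L_A^{(k)}+L_B^{(k)}L_C^{(k)}D_k^2)\|\mathcal{R}_{t-1}\|_F$, and the eigenvalue bound $\lambda_{\min}(\mathcal{S}_{t-1}^{(k)})\ge\kappa_{t-1}^{(k)}$ giving the contraction factor. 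The only notable difference is that you explicitly flag the other-mode Kronecker factors appearing in the mode-$k$ unfolding of the multi-mode projection as needing a commutation lemma — a subtlety the paper's own proof silently absorbs by writing $(\mathcal{R}_t)_{(k)}=P_k^\top(\mathcal{G}_t)_{(k)}$ — so your plan is, if anything, slightly more careful on that point.
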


\section{Experimental Setup and Results}
We conduct a comprehensive evaluation of \method\ on a diverse set of benchmark datasets for NOs, representing a range of PDEs with varying complexity and dimensionality.

\begin{figure}
    \centering
    \begin{subfigure}[b]{0.51\textwidth}
         \includegraphics[width=\linewidth]{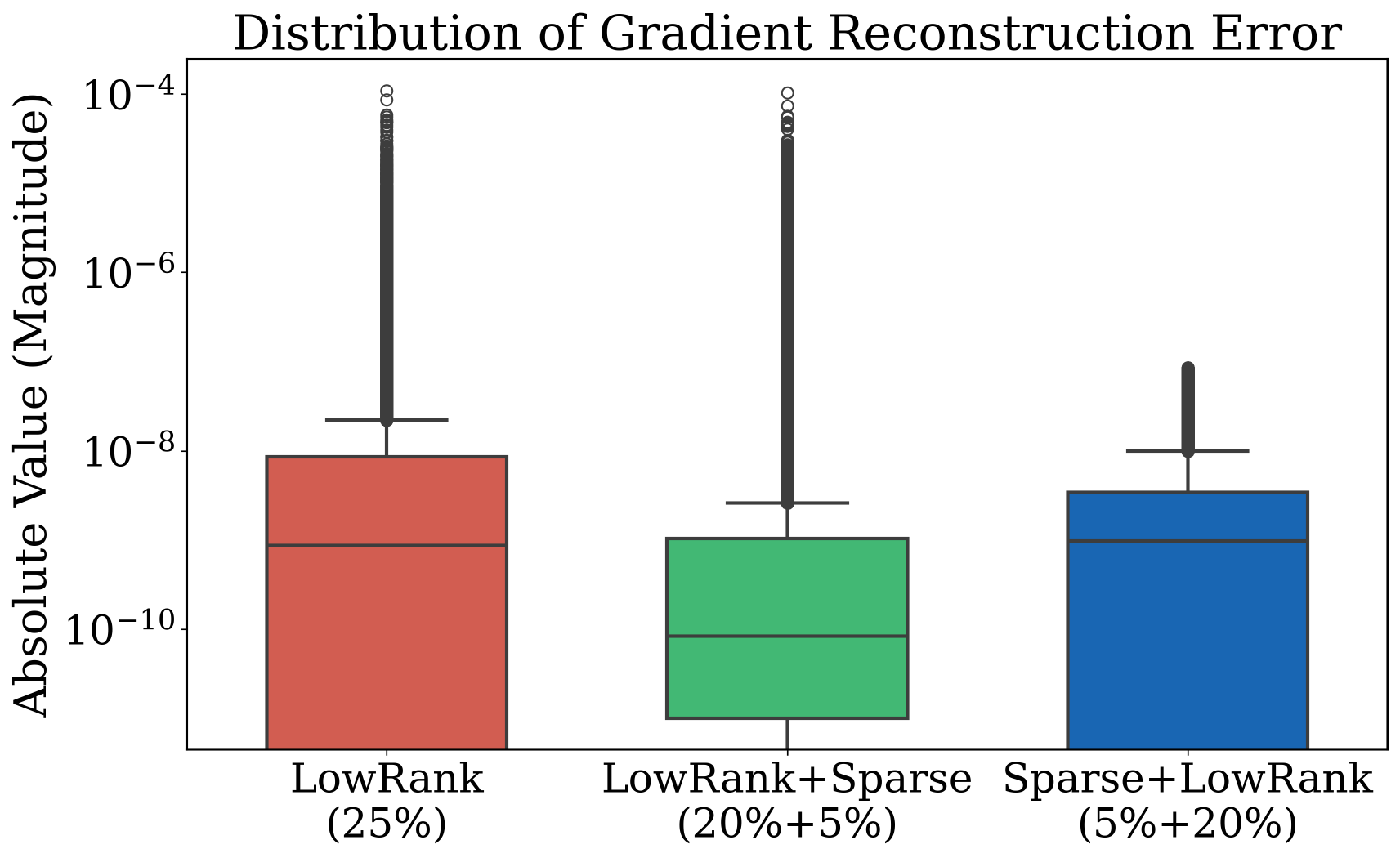}
         \label{fig:outlier_comparison}
    \end{subfigure}
    \hfill
    \begin{subfigure}[b]{0.455\textwidth}
         \includegraphics[width=\linewidth]{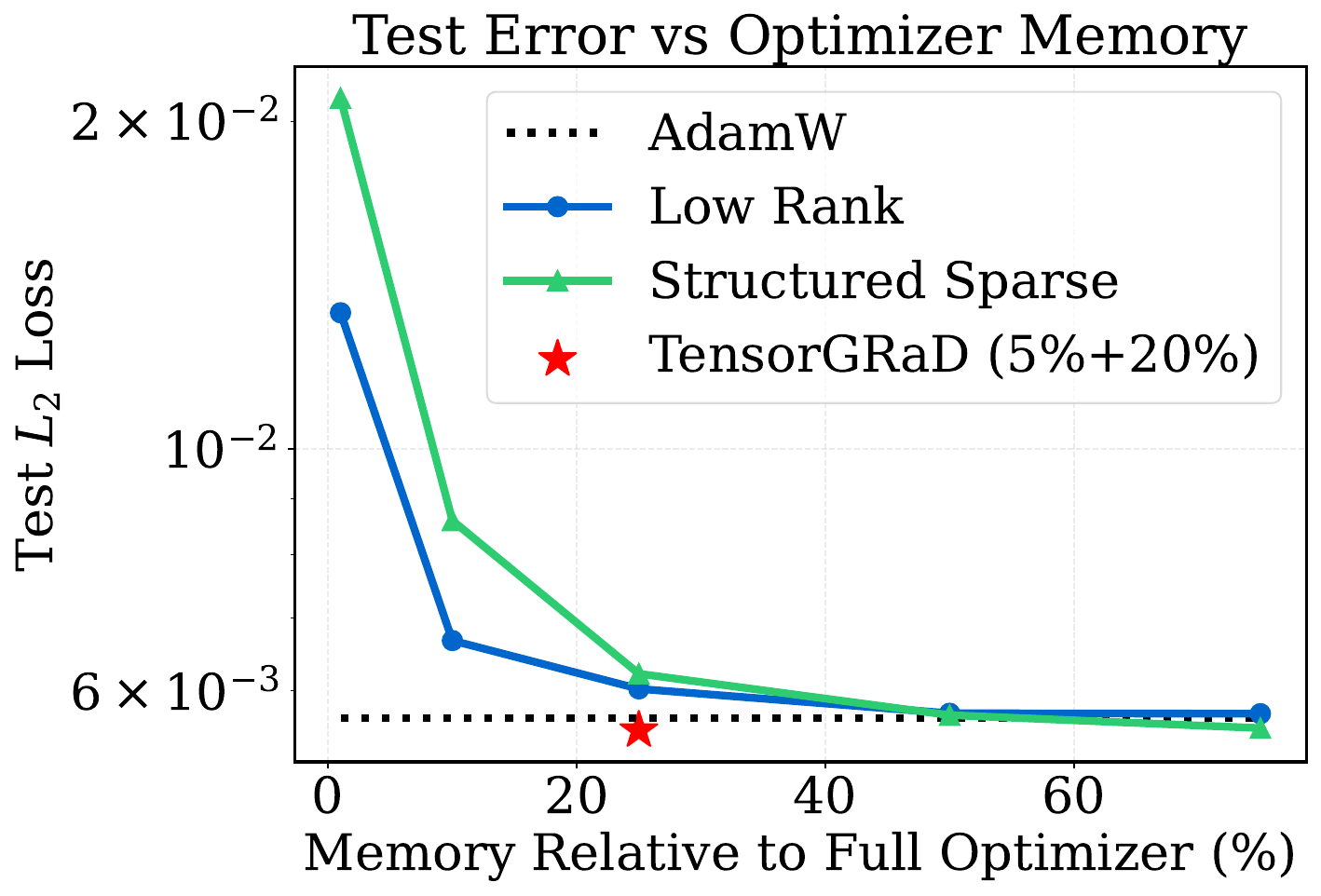}
         \label{fig:mixed_precision_left_memory_righthandside}
    \end{subfigure}\vspace{-10pt}
    \caption{
    \textbf{Left: Gradient reconstruction error.} Box plot of gradient reconstruction error for different compression strategies on a complex FNO layer. \method\ variants reduce high-magnitude outliers. 
    \textbf{Right: Accuracy–memory trade-off.} Performance of low-rank and structured sparse across compression ratio vs.\ \method\ at 25\% and Adam baselines. \method\ achieves the best trade-off at.}
    \label{fig:robust_and_tradeoff}\vspace{-10pt}
\end{figure}

\paragraph{Datasets.}
We report results on the several PDE datasets: 
\textbf{Burgers Equation:} A one-dimensional nonlinear PDE with viscosity modeling fluid dynamics, trained on 1000 samples of Gaussian random fields at 128-point resolution.
\textbf{Darcy Flow:} An elliptic PDE describing fluid flow through porous media with variable coefficients, trained on 4000 samples discretized on a 421×421 grid.
\textbf{Electromagnetic Wave Propagation:} A complex-valued nonlinear Schrödinger equation modeling optical pulse propagation in waveguides with second-harmonic generation, trained on 800 samples with varying physical parameters.
\textbf{Navier-Stokes} We study the 2D Kolmogorov flow, a variant of the incompressible Navier–Stokes equations with periodic forcing~\cite{wang2024beyond}. This dataset is particularly challenging and has a Reynolds number of $Re \approx 2 \times 10^5$, representing a highly turbulent regime.
Full dataset specifications are provided in Appendix~\ref{app:dataset}.

\paragraph{Model Architecture, Training, and Evaluation.}
All models are based on the Fourier Neural Operator (FNO) architecture and trained using the Adam optimizer. Training details, including learning rates, batch sizes, and loss functions, are provided in Appendix~\ref{tab:detailed_fno_arch}. For memory profiling methodology, see Appendix~\ref{app:memory_profiling}. Code will be provided in the supplementary. 

In evaluating \method, we vary the total compression ratio by adjusting the rank of the low-rank decomposition and the density of the sparse tensor to assess the trade-off between memory efficiency and performance.  To further reduce memory usage, we apply \textbf{activation checkpointing}~\cite{chen2016trainingdeepnetssublinear}, which recomputes intermediate activations during backpropagation.
All models are implemented in PyTorch and trained on NVIDIA A100, H100, and H200 GPUs. 
The main paper focuses on the most challenging Navier–Stokes at high resolution dataset and ablations of different sparse–low-rank combinations for \method. We present additional experiments in the appendix, including comparisons to the direct mode-unfolding extension of GaLore. 
We report performance using the $L_2$ test loss, and quantify improvements wrt. baseline performance via relative gain percentage.

\begin{table}[h]
\centering
\caption{
\textbf{Memory and accuracy comparison on Navier–Stokes} $1024\times1024$ with Reynolds number $10^{5}$.  
Train and test losses are $L_{2}\times10^{-2}$ (mean ± 1 standard error over three seeds).  
“Mixed” uses half-precision weights and gradients with a mixed-precision forward pass.  
Memory is a rounded peak GPU allocation.}
\begin{tabular}{lccccc}
\toprule
Model & Rank & Memory (GB) & Precision & Train $L_2$ & Test $L_2$ \\
\midrule
Low-Rank Only & 25\% & 46 & Full  & \num{5.37} $\pm$ \num{0.08} & \num{17.19} $\pm$ \num{0.23} \\
              &      & 29 & Mixed & \num{6.92} $\pm$ \num{0.19} & \num{17.09} $\pm$ \num{0.19} \\
\midrule
Sparse Only   & 25\% & 46 & Full  & \num{6.39} $\pm$ \num{0.32} & \num{18.73} $\pm$ \num{0.08} \\
              &      & 29 & Mixed & \num{7.37} $\pm$ \num{0.14} & \num{18.54} $\pm$ \num{0.34} \\
\midrule
\textbf{\method} & 5\%+20\% & 46 & Full  & \num{5.36} $\pm$ \num{0.05} & \textbf{16.82} $\pm$ \num{0.18} \\
                 &           & 29 & Mixed & \num{6.42} $\pm$ \num{0.15} & \num{16.87} $\pm$ \num{0.15} \\
\midrule
Adam Baseline & 100\% & 52 & Full  & \num{3.94} $\pm$ \num{0.22} & \num{17.02} $\pm$ \num{0.18} \\
               &        & 37 & Mixed & \num{4.86} $\pm$ \num{0.26} & \num{17.01} $\pm$ \num{0.19} \\
\bottomrule
\end{tabular}
\label{tab:chuwei1024data}
\end{table}

\paragraph{\textbf{Results on Navier–Stokes $\mathbf{1024\times1024}$}.}We present results on the Navier–Stokes dataset at $1024\times1024$ resolution with $\mathrm{Re}=10^5$, focusing on the performance–memory trade-offs achieved by \method. Further results for other PDEs and detailed ablations are included in the appendix. We highlight how combining low-rank and sparse gradient compression outperforms each technique in isolation and demonstrate that \method\ is compatible with mixed-precision training. 

Tab.~\ref{tab:chuwei1024data} summarizes the trade-off between memory and accuracy on the challenging NS1024 dataset. \method\ achieves the best performance across all settings: using just a 25\% optimizer state (5\% unstructured top-$k$ sparse entries and 20\% low-rank), it achieves the lowest test loss of $16.82 \times 10^{-2}$ surpassing all baselines, including full-precision Adam ($17.02 \times 10^{-2}$).
Pure 25\% low-rank compression ($17.19 \times 10^{-2}$) and 50\% structured sparsity ($17.21 \times 10^{-2}$) both are close to matching Adam, but structured sparsity at 25\% density degrades to $18.54 \times 10^{-2}$. This suggests that for high-resolution turbulent flows, low-rank methods are more effective than structured sparse updates.

Mixed-precision training further reduces memory without harming accuracy. With \method, mixed-precision yields $16.87 \times 10^{-2}$ and still matches full-precision Adam while reducing the total memory by 55\%. We also observe that methods that match or exceed Adam in test loss often exhibit higher training loss, suggesting that compression introduces a beneficial regularization effect. In contrast, the direct tensorized GaLore approach performs poorly despite using more memory, highlighting the importance of structure-aware gradient compression.

\paragraph{Sparse and low-rank combinations.}
We evaluate different combinations of low-rank (LR) and sparse gradient compression in \method, varying the order, sparsity type, and selection strategy. We distinguish between structured sparsity (SS), which selects aligned slices across modes, and unstructured sparsity (US), which selects arbitrary entries. For selection strategies, we compare top-$k$ (based on magnitude) and rand-$k$ (uniform sampling). In sequential variants (denoted $A \rightarrow B$), the first component receives the full gradient and the second compresses the residual. Additive variants (denoted $A + B$) apply both directly to the gradient and sum their outputs.

The best-performing configuration is $\text{US} \rightarrow \text{LR}$ with $5\%$ sparsity and $20\%$ low-rank, achieving test losses of $5.72$ (top-$k$) and $5.73$ (rand-$k$), outperforming both compression techniques applied in isolation. In contrast, reversing the order ($\text{LR} \rightarrow \text{US}$) leads to higher test losses (e.g., $6.29$ and $6.20$), indicating that removing outliers first improves the quality of the low-rank basis. Structured sparsity performs worse across all variants; for instance, $\text{LR} \rightarrow \text{SS}$ at $5\% + 20\%$ results in test errors $6.72$ (top-$k$) and $6.35$ (rand-$k$). 

Despite some variants achieving similar accuracy, their practicality may differ. For example, configurations with higher unstructured sparsity require storing a large index set, increasing memory and compute overhead. In contrast, the $5\%+20\%$ $\text{US} \rightarrow \text{LR}$ setup balances accuracy and memory.

\begin{table}[h]
\centering
\setlength{\tabcolsep}{5pt}
\caption{
\textbf{Test $L_2$ loss ($\times 10^{-3}$) for different combinations of low-rank (LR), structured sparse (SS), and unstructured sparse (US) gradient updates}. Each cell shows top-$k$ / rand-$k$ results. Sequential forms (denoted $A \rightarrow B$) apply $A$ to the full gradient and $B$ to the residual. “Sum” applies both independently to the full gradient and sums the results.
}
\begin{tabular}{lccc}
\toprule
Method (topk / randk) & 20\%+5\% & 45\%+5\% & 5\%+20\% \\
\midrule
LR $\rightarrow$ SS           & 7.09 / 6.44 & 6.91 / 6.32 & 6.72 / 6.35 \\
LR $\rightarrow$ US           & 6.29 / 6.20 & 6.22 / 6.19 & 6.47 / 6.26 \\
SS $\rightarrow$ LR           & 6.56 / 6.26 & 6.66 / 5.96 & 6.22 / 6.21 \\
US $\rightarrow$ LR           & 6.24 / 6.10 & 6.19 / 6.03 & \textbf{5.72} / 5.73 \\
LR + US (sum)                 & 6.18 / 6.12 & 6.17 / 6.06 & --         \\
\bottomrule
\end{tabular}
\label{tab:lowrank_sparse_ablation_grouped}
\end{table}

\paragraph{\textbf{Mixed-precision training}.}\quad
 We evaluate three different precision configurations on the Navier–Stokes 128 dataset ($128 \times 128$, $\mathrm{Re}=10^3$). In the first setting, all tensors and optimizer states are stored in full precision. The second, referred to as Mixed-1, uses half precision for weights, activations, and gradients except the Fast Fourier Transform (FFT) part (see Sec.~\ref{sec:mixed_precision}), while keeping optimizer states in full precision. Mixed-2 is identical to Mixed-1, except that optimizer states are also stored in half precision. 

Results in Tab.~\ref{tab:ns128_precision_ablation} show that in full precision, \method\ (25\%) matches Adam ($5.72$ vs.\ $5.66 \times 10^{-3}$) and outperforms low-rank (25\%: $6.02$) and structured sparse (25\%: $6.22$). Structured sparsity at 50\% performs best ($5.54$), but \method\ offers better efficiency at lower memory.

When moving to Mixed-1 precision \method\ maintains strong performance, with a test loss of $5.71 \times 10^{-3}$. This matches the full-precision setting and confirms that \method\ remains stable under reduced numerical precision. Notably, some models improve slightly (e.g., Adam drops to $5.62$), consistent with prior observations that mixed precision can act as a mild regularizer~\cite{tu2024guaranteedapproximationboundsmixedprecision}.

In contrast, the Mixed-2 setup, where optimizer states are also stored in half precision, leads to significant degradation. \method\ drops to $7.10$, Adam to $6.92$, and low-rank (25\%) to $7.21$.
We saw that Adam’s moment tensors often contain values on the order of $10^{-5}$ on both the real and imaginary parts, and half precision lacks sufficient mantissa bits to represent them accurately.

These results underscore two key findings. First, \method\ is fully compatible with mixed-precision training, provided that optimizer states are maintained in full precision. Second, \method's low-rank and sparse optimizer states preserve essential gradient information more effectively than directly storing them in reduced precision. This makes \method\ a compelling approach for reducing memory without sacrificing accuracy, especially when paired with mixed precision. Our improved memory is demonstrated in Fig.~\ref{fig:memuse_comparison}.

\begin{table}[h]
\centering
\caption{\textbf{Navier–Stokes ($128 \times 128$, $\mathrm{Re} = 10^3$) precision ablation}: test $L_{2}\times10^{-3}$ under three precision schemes.  
\textbf{FP}: full precision.  
\textbf{Mixed-1} stores gradients, weights, and activations (except FFT) in half precision while keeping optimizer states in full.  
\textbf{Mixed-2} is identical to Mixed-1 but also stores optimizer states in half precision.  
\textbf{LR}: low-rank; \textbf{US/SS}: unstructured/structured sparse.  
Lower is better; best per column is \textbf{bold}.}
\label{tab:ns128_precision_ablation}
\setlength{\tabcolsep}{6pt}  %
\begin{tabular}{lccc}
\toprule
Method & Full & Mixed-1 & Mixed-2 (Half Optim. states) \\
\midrule
Adam          &\textbf{ 5.66} & 5.62 & \textbf{6.92} \\
LR 50\%       & 5.71 & 5.70 & 7.08 \\
LR 25\%       & 6.02 & 5.87 & 7.21 \\
SS 50\%       & 5.54 & \textbf{5.56} & 7.14 \\
SS 25\%       & 6.22 & 6.14 & 7.78 \\
LR+US 25\% (5+20) & 5.72 & 5.71  & 7.10 \\
\bottomrule
\end{tabular}
\end{table}

We show additional results in the Appendix~\ref{app:additional_results}. In Appendix~\ref{sec_app:update_freq}, we ablate the effect of different update frequencies for computing factor matrices for the tensor low-rank part and for the sparse part. Appendix~\ref{sec_app:structured_sparsity} compares structured sparsity patterns using top-$k$ versus random-$k$ selection strategies. In Appendix~\ref{sec_app:matricizing_vs_tensor}, we provide a detailed comparison between our proposed tensor-based low-rank decomposition and a baseline method that applies a GaLore-style low-rank projection to matricized tensors. Finally, Appendix~\ref{tab:1} includes extended benchmark results across multiple datasets: Burgers, Darcy, ElectroMagnetic, and Navier–Stokes 1024.

\section{Related Work}
\label{sec:2}
Our work, \textbf{\method}, introduces a novel approach to efficiently training neural operators by decomposing gradients. While significant work has been done in related areas, the specific approach of gradient decomposition in tensors has not been explored. \textbf{Tensor Methods in Deep Learning:} Tensor decomposition has been widely used to compress deep networks~\cite{novikov2015tensorizing, lebedev2015speeding, kim2016compression,9420085,PANAGAKIS20241009}, but these methods focus on weight tensors rather than gradients during training. \textbf{Sparse gradient updates:}
GRASS~\cite{2024grass} introduced structured sparsity for matrices using sampling strategies like Top$-k$ magnitude sampling. In tensor settings, this approach requires unfolding tensors, disrupting the inherent mode-wise structure. Instead, we use unstructured sparsity, selecting individual tensor entries directly without unfolding them.

\textbf{Neural Operators:} Recent advancements have led to neural operators~\cite{li2020fourier, kovachki2021neural}, with FNOs showing remarkable success in scientific computing tasks~\cite{li2021fourier}. However, these methods have not explored gradient decomposition for memory efficiency.
\textbf{Efficient Training Techniques:} Various approaches reduce memory footprints of large models. LoRA~\cite{hu2022lora} adds a low-rank weight matrix to a frozen pre-trained matrix. FLoRA~\cite{si2024floralowrankcorespace} extends this to higher dimensions using Tucker decomposition. For neural operators, MG-TFNO~\cite{kossaifi2024multi} combines tensor decomposition with multi-grid approaches, while iFNO~\cite{george2024incrementalspatialspectrallearning} incrementally scales FNO weight ranks during training.
\textbf{Low-rank Plus Sparse:}
Robust PCA~\cite{candes2009robustprincipalcomponentanalysis} separates low-rank matrices from sparse noise, extended to tensors by RTD~\cite{NIPS2014_robust_tensor_decomposition}. Hybrid decompositions have been explored for model weights~\cite{sltrain2024} and attention matrices~\cite{chen2021scatterbrain}, but not for compressing gradient tensors.
\textbf{Mixed Precision Training} utilizes lower precision formats for certain operations, reducing memory usage and potentially accelerating training on compatible hardware~\cite{tu2024guaranteedapproximationboundsmixedprecision}.
\textbf{Combination with existing methods} \method\ can complement many existing techniques, potentially leading to greater memory benefits by integrating with methods like FLoRA or MG-TFNO and frameworks like iFNO.

\section{Conclusion}
We presented \textbf{\method}, a memory-efficient optimization framework for training large-scale tensor-structured models.
By combining low-rank factorization with unstructured sparse updates in a robust tensor decomposition of the gradients, \method\ achieves substantial memory savings without sacrificing performance.
We further introduce a mixed-precision training strategy that complements our method and improves efficiency.
 We validate our findings on challenging PDE benchmarks and thorough ablations. Our approach enables training of large-scale neural operators on high-resolution PDE data.

\paragraph{Limitations.}\quad 
While \method\ provides substantial memory savings and strong empirical performance, it also has limitations. First, the computational overhead of Tucker decomposition may be nontrivial, despite amortization via infrequent updates. Second, selecting appropriate ranks and sparsity levels remains a challenge, particularly for heterogeneous datasets where optimal compression may vary across layers or training stages. Third, although we show \method\ is compatible with mixed precision training, it currently assumes full-precision optimizer states; future extensions could explore quantized or compressed representations. Finally, our current evaluation is limited to PDE-driven neural operator: applying \method\ to other domains such as vision or language models with tensorized architectures is future work.

\paragraph{Broader Impact.}\quad By enabling high-resolution scientific models to train on commodity hardware, \method\ lowers the barrier to entry for resource-constrained researchers and institutions. This may democratize access to cutting-edge simulation tools for climate modeling, fluid dynamics, etc. At the same time, the efficient training enabled by \method\ can reduce the energy footprint of large-scale scientific machine learning workloads. We are excited to see \method\ adopted in broader scientific applications.

\section{Acknowledgements.}
Sebastian Loeschcke is supported by the Danish Data Science Academy, which is funded by the Novo Nordisk Foundation (NNF21SA0069429) and VILLUM FONDEN (40516). Robert Joseph George is supported by a Caltech Graduate Fellowship. David Pitt is supported by the Schmidt Scholars in Software Engineering program. Anima Anandkumar is supported by the Bren Named Chair, Schmidt AI 2050 Senior fellow, and ONR (MURI grant N00014-18-12624).

\newpage
\appendix
\section*{Appendix}

\section{Dataset}
\label{app:dataset}
\subsection{Navier-Stokes Datasets}

\paragraph{Navier-Stokes 1024:} 

\begin{figure}
    \centering
    \includegraphics[width=\textwidth]{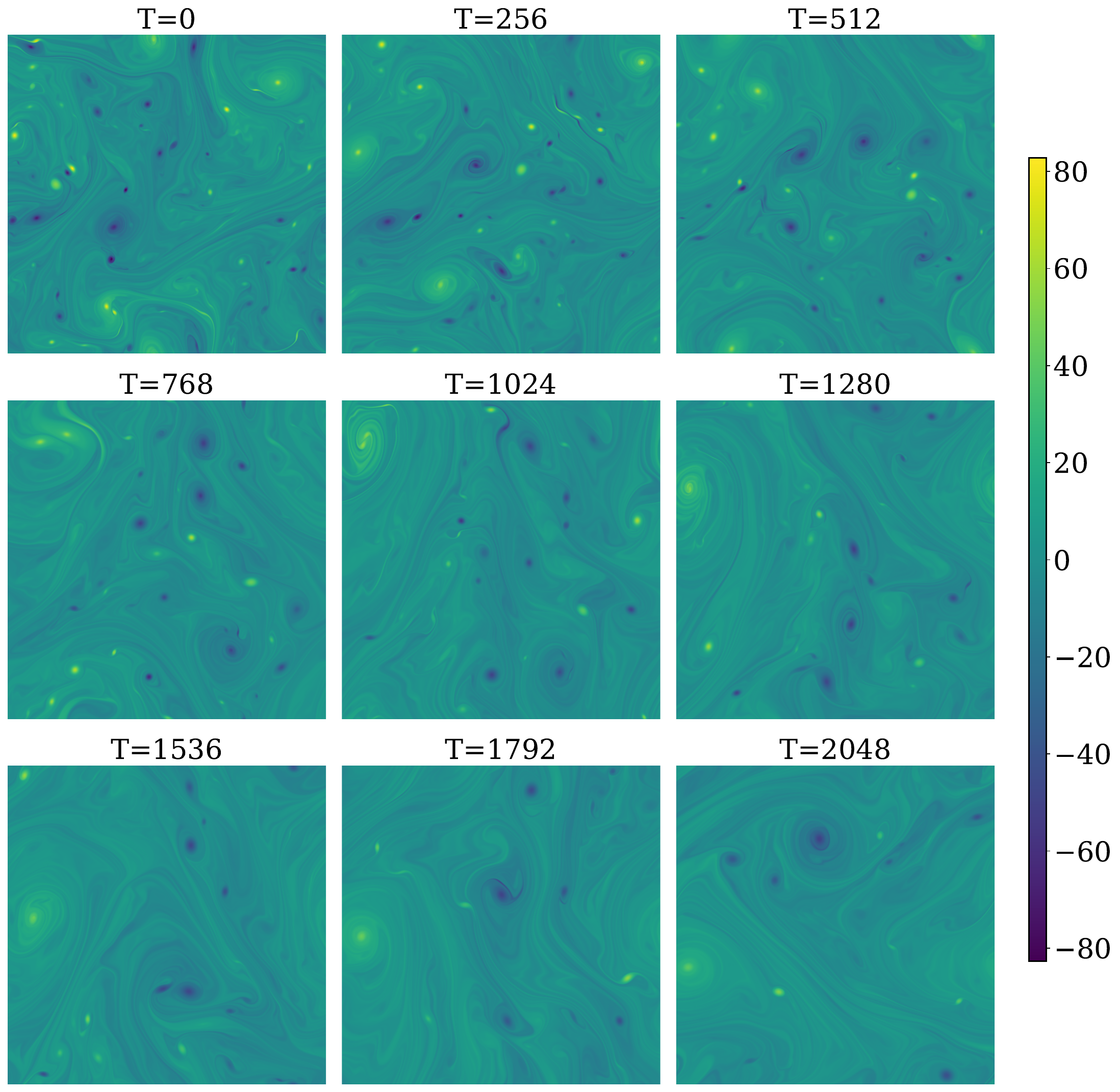}
    \caption{Navier-Stokes $1024\times 1024$ with Reynolds number at $2\times 10^5$}
    \label{fig:Navier_stokes_1024}
\end{figure}

We use the 2D Kolmogorov flow of Wang et al.~\cite{wang2024beyond}, a periodically forced, incompressible variant of the Navier–Stokes equations. The velocity field $\mathbf u(x,y,t) \in \mathbb{R}^2$ evolves on a periodic domain $[0, 2\pi]^2$ according to:
$$
\partial_t \mathbf{u} = - (\mathbf{u} \cdot \nabla) \mathbf{u} - \nabla p + \nu \Delta \mathbf{u} + (\sin(4y), 0)^T, \quad \nabla \cdot \mathbf{u} = 0, \quad (x,y,t) \in [0, L]^2 \times \mathbb{R}_+,
$$
Our analysis focuses on the vorticity form, where the vorticity $\omega = \nabla \times \mathbf{u}$ evolves as:
$$
\partial_t \omega = - \mathbf{u} \cdot \nabla \omega + \nu \Delta \omega + \nabla \times (\sin(4y), 0)^T.
$$
The behavior of this flow is characterized by the Reynolds number $Re = \frac{\overline{u} l}{\nu}$, where $\nu$ is the kinematic viscosity, $\overline{u}$ is the root-mean-square velocity, and $l$ is the characteristic length. Higher $Re$ values correspond to more turbulent flows. In our setup, $L = 2\pi$ and $\nu = 10^{-4}$, leading to $Re \approx 2 \times 10^5$, representing a highly turbulent regime.

\paragraph{Initial Condition and Data Collection:}
The initial velocity field $\mathbf{u_0}(x)$ is sampled from a Gaussian random field $\mathcal{N}(0, C)$ with covariance 
\[
C = 7^{3/2} (-\Delta + 49I)^{-2.5}.
\]
Data is collected from $t = 30$ onward, ensuring a statistically steady state. Our dataset, sourced from~\cite{wang2024beyond}, comprises $[7681, 1024, 1024]$ entries, from which we generate 1577 training samples and 279 test samples, with $\Delta t=256$. 
Figure~\ref{fig:Navier_stokes_1024} illustrates 9 representative samples from the dataset, selected at uniform intervals of 256 steps.

\paragraph{Input-Output Pair Construction:}
Input-output pairs are defined by:
\begin{itemize}
    \item $T$: The timestep difference between the input and output.
    \item $t_{\text{skip}}$: The number of frames skipped between samples.
\end{itemize}

For example, with indices $[0, 1, 2, \ldots, 32]$, setting $T=16$ and $t_{\text{skip}}=8$ creates pairs $[(0,16), (8,24), (16,32)]$. Our setup uses $T = 256$ and $t_{\text{skip}} = 4$, resulting in 1577 training samples and 279 test samples.

\paragraph{Navier-Stokes (NS128):} We also experiment with the two-dimensional Navier-Stokes equation in vorticity form:
\begin{equation}
\begin{aligned}
\partial_t \omega + \nabla^\perp \phi \cdot \omega &= \frac{1}{Re} \Delta\omega + f, \quad x \in \mathbb{T}^2, t \in (0, T] \\
-\Delta\phi &= \omega, \quad \int_{\mathbb{T}^2} \phi = 0, \quad x \in \mathbb{T}^2, t \in (0, T]
\end{aligned}
\end{equation}
with a Reynolds number $Re = 10^3$ and final time $T = 5$. The domain is discretized on a $1024 \times 1024$ grid. We generate 10,000 training samples and 2,000 test samples using a pseudo-spectral method. To test scalability, we also evaluate our approach on a subsampled resolution of $128 \times 128$. Memory profiling is performed at the full $1024 \times 1024$ resolution.

\paragraph{Burgers Equation:} We consider the one-dimensional Burgers equation on the torus:

\begin{equation}
\partial_t u + u u_x = \nu u_{xx}, \quad x \in \mathbb{T}, t \in (0, T]
\end{equation}

with initial condition $u_0 \in L^2(\mathbb{T}; \mathbb{C})$ and viscosity $\nu > 0$. We set $T = 1$ and $\nu = 0.01$. Input functions are sampled from a Gaussian random field, and solutions are obtained using a pseudo-spectral method. We use 1000 samples for training and 200 for testing, with 128 resolution.

\paragraph{Darcy Flow:} The Darcy flow problem is defined by the elliptic PDE:

\begin{equation}
-\nabla \cdot (a(x) \nabla u(x)) = f(x), \quad x \in (0,1)^2
\end{equation}

with boundary conditions $u(x) = 0$ for $x \in \partial(0,1)^2$. The input $a$ is sampled from a Gaussian random field, and $f$ is fixed. We use 4000 training samples and 1000 test samples, with the domain discretized on a 421 × 421 grid.

\paragraph{Electromagnetic Wave Propagation:}
Lastly, we present a dataset that represents complex-valued data inherently. We consider the propagation of optical pulses in a nonlinear waveguide with second-order nonlinearity ($\kappa^2$). The problem is governed by the nonlinear Schrödinger equation (NLSE) with additional terms for second-harmonic generation:
\begin{equation}
\frac{\partial A}{\partial z} = -\mathrm{i}\frac{\beta_2}{2}\frac{\partial^2 A}{\partial t^2} + \mathrm{i}\gamma|A|^2A + \mathrm{i}\kappa A^*e^{\mathrm{i}\Delta kz}
\end{equation}

where $A$ is the complex electric field envelope, $\mathrm{i}$ is the imaginary unit, $z$ is the propagation distance, $t$ is time, $\beta_2$ is the group velocity dispersion, $\gamma$ is the nonlinear parameter, $\kappa$ is the coupling coefficient for second-harmonic generation, and $\Delta k$ is the phase mismatch. Our dataset consists of 800 training samples and 200 testing samples. 
The input consists of several parameters: the poling region length ranging from 2mm to 15mm, the poling period mismatch varying from -50nm to +50nm, and the pump pulse energy spanning from a few fJ to thousands of fJ. Additionally, the input includes the complex electric field envelope of the input pulse. The output of the system is the complex electric field envelope of the resulting output pulse.

\section{FNO Memory Usage}

Fig.~\ref{fig:profiling} illustrates the memory usage patterns in Fourier Neural Operators (FNOs) as the number of modes increases. This analysis provides crucial insights into the scalability challenges faced when training large FNO models.

\begin{figure}[h]
\centering
\includegraphics[width=0.8\textwidth]{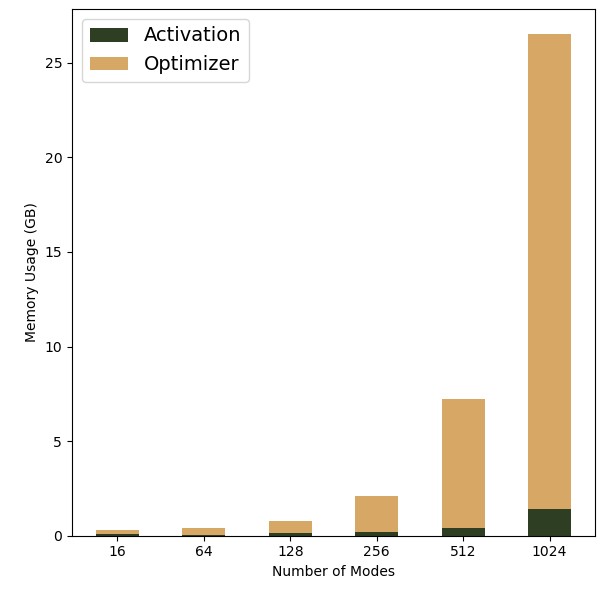}
\caption{Memory usage in FNO as a function of the number of modes}
\label{fig:profiling}
\end{figure}

\begin{figure}
    \centering
    \includegraphics[width=1.0\linewidth]{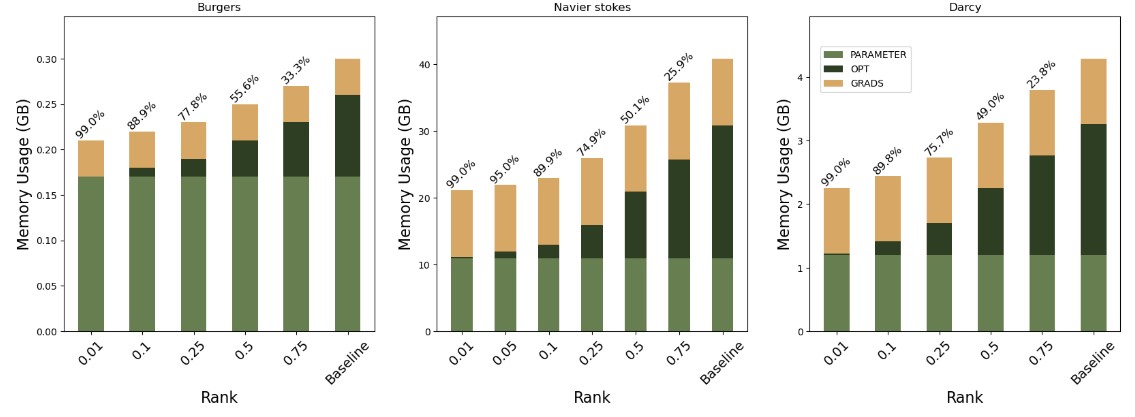}
    \caption{Memory usage of FNO and GINO on various datasets on an NVIDIA A100. On top of the bars, we showcase the reduction in optimizer memory in \% using \textbf{\method}.}
    \label{fig:profiling2}
\end{figure}

As evident from the figure, the memory consumption is divided into two main categories: activation memory and optimizer memory. The activation memory, represented by the dark green bars, remains relatively constant and low across different numbers of modes. This stability in activation memory is a positive attribute of FNOs, indicating that the forward and backward passes do not significantly increase memory requirements as the model complexity grows.

However, the optimizer memory, shown in yellow, exhibits a dramatic increase as the number of modes grows. This exponential growth in optimizer memory becomes particularly pronounced for models with more than 128 modes. For instance, when the number of modes reaches 1024, the optimizer memory dominates the total memory usage, far exceeding the memory required for activations.

This trend highlights a critical bottleneck in scaling FNO models to higher resolutions or more complex problems. The optimizer's memory footprint, which includes storage for gradients, momentum, and adaptive learning rate parameters, becomes the primary limiting factor. This observation motivates the need for memory-efficient optimization techniques like \textbf{\method}, which specifically target the reduction of optimizer memory usage while maintaining model performance.

\section{Matrix GaLore}

GaLore (Gradient Low-Rank Projection)~\cite{zhao2024galore} is a memory-efficient optimization method designed to reduce the memory overhead of gradient updates by leveraging the low-rank structure often present in gradient matrices. 
Specifically, GaLore operates on weight matrices $W \in \mathbb{R}^{m \times n}$ and their corresponding gradient matrices $G \in \mathbb{R}^{m \times n}$. For a given rank $r$, GaLore computes the SVD of the gradient matrix every $T$ steps, forms projection matrices using the first $r$ singular vectors, then projects the gradient onto this low-rank subspace to perform optimization. After computing the optimizer update, the gradients are projected back to their full rank for use in the model. This approach allows GaLore to maintain a low memory footprint by storing and updating only the low-rank representations of gradients.

\paragraph{Challenges of applying GaLore to neural operators.}\quad
\label{sec:naive_galore}
In order to apply standard GaLore to tensor weights, the weights must first be reshaped into a matrix to compute the SVD for projection into a low-rank space. GaLore takes one rank parameter, $r$, and projects high-rank gradients onto the first $r$ basis vectors of the corresponding SVD rotation matrix. When the weight matrix corresponds to an operator that maps between vectors, a single rank cutoff can be applied while preserving most information. 

However, in the tensor case, weights correspond to higher-order maps between function spaces. Depending on the chosen strategy for reshaping tensor weights into a matrix, applying a single-dimension rank cutoff to the matrix may discard key information - for instance, for a tensor $W \in \mathbb{C}^{A \times B \times m \times m}$, where $A$ is the number of input channels, $B$ is the number of output channels, and $m$ is the number of truncated Fourier basis modes along each dimension, reshaping $W$ into $W' \in \mathbb{C}^{ABm \times m}$ and cutting off the first dimension at rank $r$ may remove all information about Fourier modes along the first dimension, making function learning impossible. We call this method \textit{GaLore} and provide several comparisons to demonstrate its flaws. 

One flaw is the \textbf{Loss of mode-specific information}: by collapsing multiple tensor dimensions into one matrix dimension, we lose the ability to preserve different amounts of information along each tensor mode. The other is that we have an \textbf{imbalanced projection}: Projecting only on one side of the reshaped matrix (e.g., only $U$ or only $V$ from the SVD) can severely limit the operator's capacity. However, projecting on both sides often leads to training instability and failure to converge. This method also encounters \textbf{rank selection issues}: Choosing a single rank cutoff for the reshaped matrix makes it difficult to balance information preservation across all the original tensor dimensions. A rank that preserves enough information for one dimension may be too restrictive for another.

\section{Tensor Decomposition}

Tensors are multidimensional arrays that generalize the concepts of vectors (first-order tensors) and matrices (second-order tensors) to higher orders. An $N$th-order tensor $\mathcal{X} \in \mathbb{C}^{I_1 \times I_2 \times \cdots \times I_N}$ is an $N$-way array where each mode $n$ has dimension $I_n$. Like matrices, in tensors, we can decompose the tensors into low-rank factors using the Tucker decomposition, also known as the higher-order SVD (HOSVD), which decomposes a tensor into a core tensor multiplied by a matrix along each mode:

\begin{equation}
    \mathcal{X} \approx \mathcal{G} \times_1 U^{(1)} \times_2 U^{(2)} \cdots \times_N U^{(N)} = \llbracket \mathcal{G}; U^{(1)}, U^{(2)}, \ldots, U^{(N)} \rrbracket
\end{equation}

where $\mathcal{G} \in \mathbb{C}^{R_1 \times R_2 \times \cdots \times R_N}$ is the core tensor, $U^{(n)} \in \mathbb{C}^{I_n \times R_n}$ are factor matrices, and $\times_n$ denotes the $n$-mode product.  Two critical aspects of the Tucker decomposition make it particularly suitable for our \textbf{\method} method:

\begin{enumerate}[wide]
    \item \textbf{Equivalence to SVD in 2D}: In the special case of 2D tensors (matrices), the Tucker decomposition reduces to the familiar SVD. The core tensor $\mathcal{G}$ becomes equivalent to the diagonal matrix $\Sigma$ in SVD, while the factor matrices correspond to the orthogonal matrices $U$ and $V$~\cite{siam}. This property ensures that our method seamlessly extends the principles of matrix-based techniques to higher-order tensors.
\item \textbf{Orthogonality of factor matrices}: The factor matrices $U^{(n)}$ in Tucker decomposition are orthogonal, mirroring the properties of $U$ and $V$ in SVD. This orthogonality is crucial for the efficiency and stability of the GaLore method. Specifically:
\begin{enumerate}[wide]
    \item \textit{Projection efficiency}: The orthogonality allows us to project tensors onto the subspace spanned by these matrices through simple matrix multiplication, without the need for costly inverse computations.
\item \textit{Easy inversion}: When we need to reverse the projection, we can simply use the transpose of these orthogonal matrices instead of computing their inverses. This property is expressed mathematically as $(U^{(n)})^T U^{(n)} = I$, where $I$ is the identity matrix.
\item \textit{Numerical stability}: Orthogonal matrices have a condition number of 1, ensuring that the projection and its inverse are numerically stable operations, even for high-dimensional tensors.
\end{enumerate}

\subsection{Implementation}
\label{sec:impl_hoi}
The \textbf{low-rank component} of \method\ is implemented using TensorLy~\cite{kossaifi2019tensorly}, which provides an efficient implementation of the Tucker decomposition based on Higher-Order Orthogonal Iteration (HOI). Given an input tensor $\mathcal{X}$, HOI approximates the factor matrices ${U^{(n)}}_n$ by computing truncated SVDs of mode-$n$ unfoldings and iteratively refines them to minimize the Frobenius reconstruction error. The decomposition supports warm restarts, allowing factors to be reused across steps to reduce iteration cost.
All subsequent operations, like compressing the gradients and reconstructing the low-rank updates, are performed using PyTorch.

\end{enumerate}

\section{Structured Sparse Projections}
\label{appendix:sparse}
Sparse projections provide an alternative to low-rank compression by selectively retaining a subset of gradient values. GRASS~\cite{2024grass} introduced this approach for gradient matrices, where optimizer statistics are maintained only for a selected subset of rows, leading to substantial memory savings without requiring an SVD.

This idea can be directly extended to tensor gradients by applying structured sparsity, where the mask $\Omega$ selects entire slices along one or more tensor modes. For a gradient tensor $\mathcal{G} \in \mathbb{C}^{I_1 \times \cdots \times I_N}$, the structured mask is defined as a Cartesian product of per-mode index sets:
$$\Omega = M^{(1)} \times M^{(2)} \times \cdots \times M^{(N)},$$
and the projected tensor is $\hat{\mathcal{G}} = \mathcal{G}[\Omega]$. This form of sparsity is efficient, as the mask stores only $\sum_n r_n$ indices, and projection is implemented via multi-dimensional slicing.

The selection strategies presented in GRASS~\cite{2024grass} can also be applied to tensors for constructing the sets $M^{(n)}$, including random sampling (Rand-$k$), and norm-based rules such as Top-$k$ and Prob-$k$. However, norm-based selection requires unfolding the tensor along each mode and computing slice norms $s^{(n)}i = \lVert \mathcal{G}{i::\cdots} \rVert_2$, which may discard important structural information depending on the unfolding.

Structured sparsity is particularly effective when gradient mass is concentrated along specific tensor modes. However, it lacks flexibility in targeting scattered or irregularly positioned high-importance entries.

To restore the sparse tensor to its original shape during optimizer updates, we use a back-projection operation that places the retained values back into their original positions and fills all other entries with zeros. This is implemented using the adjoint of the projection operator:
$$\tilde{\mathcal{G}} = \mathcal{P}\Omega^{\top}(\hat{\mathbf{g}}) =, \operatorname{unvec}\left(P\Omega^{\top} \hat{\mathbf{g}}\right),$$
where $\hat{\mathbf{g}} = P_\Omega \operatorname{vec}(\mathcal{G})$ is the compressed vector of retained entries.

\section{Profiling Methodology}
\label{app:memory_profiling}

To analyze the performance and memory usage of our \textbf{\method} method, we implemented a comprehensive profiling setup using PyTorch's built-in profiler. This allowed us to gain detailed insights into the computational and memory requirements of our algorithm compared to baseline methods.

\textbf{Detailed Memory Breakdown.} We implemented a detailed memory tracking system to break GPU memory usage down into key categories.

PyTorch provides access to a memory profiler, which collects granular information about each block of memory allocated on the CPU and GPU within the context window in which it is invoked. The profiler is run over a set number of iterations, each of which corresponds to a forward and backward pass through the model and one step of the optimizer. The profiler discards data from the first iteration due to the additional overhead of initialization and allocation, as well as a specified number of warmup steps. The profiler collects data for a set number of iterations, and averages this process over a series of repeats. IN our experiments, we discarded 1 step, warmed up for 1 step, collected data for 3 steps, and performed 3 repeats.

To break down memory by category, we relied on the automatic categorization utility provided by the profiler's Memory Timeline, which coalesces all individual blocks into one of eight categories, which we enumerate and explain below.

\begin{itemize}[leftmargin=*]
    \item \textbf{Model parameters}: Tensors registered as instances of \texttt{torch.nn.Parameter}.
    \item \textbf{Optimizer states:}  Optimizer states are tensors stored within the optimizer that are used to compute the final gradient. For Adam optimizer, this includes first and second moment estimates, which are each a tensor of the same size as the gradients themselves.
    \item \textbf{Inputs:}  memory allocations that occur during data loading and preprocessing.
    \item \textbf{Activations:}  Temporary tensors created during the forward pass of the model. 
    \item \textbf{Gradients:} The tensors added to model parameters during the optimizer step. They are computed for each parameter by backpropagating loss through the model.
    \item \textbf{Autograd Detail:} Extra memory allocated during PyTorch's autograd operation, including memory used for storing computational graphs and intermediate results needed for backpropagation.
    \item \textbf{Temporary:}  Temporary Buffers are short-lived tensors that are created and destroyed within a single operation or a small set of operations. For \textbf{\method}, it is often used in complex computations like FFTs or tensor decompositions within galore.
    \item \textbf{"None":} PyTorch's profiler is often unable to categorize memory immediately upon allocation. In our tests, we compared figures generated by coalescing a profiler's memory timeline one-to-one with more granular block-level traces acquired from PyTorch's CUDA memory snapshot to conclude that memory tagged as None is either eventually reclassified as another type, or corresponds to an intermediate activation that is deallocated during backpropagation. For this reason, we chose to classify "None" memory "Intermediate". For simplicity in the final figure, we also included temporary buffers, autograd detail buffers and activations in the Intermediate category, though our profiler maintains the capability to collect data for each category separately. 
    
\end{itemize}
To accurately break down the profiler's memory timeline into these categories, we took the following approach. First, we noticed that the peak memory allocation often occured at a step when the total of None memory was very high, and that totals reported during that timestep for other categories were zero or very low. To obtain informative breakdowns for those categories, informed by our analysis of memory tagged as None, we identified the timestep at which peak memory usage occured \emph{after excluding None-tagged memory from the total}. This allowed us to break down all non-intermediate memory by its true classification once properly tagged. Once we obtained this breakdown, we subtracted the total from the true total memory allocation observed at the peak to obtain the true amount of intermediate memory allocated at peak.

\begin{algorithm}[!h]
\caption{GaLore}
\begin{algorithmic}[1]
\Require A layer weight tensor $\mathcal{W} \in \mathbb{C}^{N_1 \times N_2 \times N_3 \times N_4}$. Step size $\eta$, scale factor $\alpha$, decay rates $\beta_1, \beta_2$, rank $r$, subspace change frequency $T$, chosen dimension $d$.
\State Initialize first-order moment $\mathcal{M}_0 \in \mathbb{C}^{r \times N_2 \times N_3 \times N_4} \gets 0 $ (Assuming matrization 1)
\State Initialize second-order moment $\mathcal{V}_0 \in \mathbb{R}^{r \times N_2 \times N_3 \times N_4} \gets 0 $ (Assuming matrization 1)
\State Initialize step $t \gets 0$
\Repeat
    \State $\mathcal{G}_t \in \mathbb{C}^{N_1 \times N_2 \times N_3 \times N_4} \gets -\nabla_\mathcal{W} \phi_t(\mathcal{W}_t)$
    \State $G_t^{(d)} \gets \text{Reshape}(\mathcal{G}_t, (N_d, \prod_{i \neq d} N_i))$ \Comment{Reshape tensor to matrix}
    \If{$t \bmod T = 0$}
        \State $U, \Sigma, V^\top \gets \text{SVD}(G_t^{(d)})$ \Comment{Compute SVD}
        \State $P \gets V[:,:r]^\top$ \Comment{Select $r$ right singular vectors}
    \EndIf
    \State $R_t \gets G_t^{(d)} P^\top$ \Comment{Project gradient into compact space}
    \State \textbf{UPDATE}($R_t$) by Adam:
    \State \quad $M_t \gets \beta_1 \cdot M_{t-1} + (1 - \beta_1) \cdot R_t$
    \State \quad $V_t \gets \beta_2 \cdot V_{t-1} + (1 - \beta_2) \cdot |R_t|^2$
    \State \quad $M_t \gets M_t / (1 - \beta_1^t)$
    \State \quad $V_t \gets V_t / (1 - \beta_2^t)$
    \State \quad $N_t \gets M_t / (\sqrt{V_t} + \epsilon)$
    \State $\tilde{G}_t^{(d)} \gets \alpha \cdot N_t P$ \Comment{Project back to original space}
    \State $\tilde{\mathcal{G}}_t \gets \text{Reshape}(\tilde{G}_t^{(d)}, (N_1, N_2, N_3, N_4))$ \Comment{Reshape back to tensor}
    \State $\mathcal{W}_t \gets \mathcal{W}_{t-1} + \eta \cdot \tilde{\mathcal{G}}_t$
    \State $t \gets t + 1$
\Until{convergence criteria met}
\State \Return $\mathcal{W}_t$
\end{algorithmic}
\end{algorithm}

\section{Additional Results}
\label{app:additional_results}

Our experiments demonstrate the effectiveness of \textbf{\method} across various datasets, showing significant improvements in both performance and memory efficiency as shown in Tab.~\ref{tab:1}. 
For the Burgers equation, our method consistently outperformed the baseline FNO, with performance improving as rank increased. On the Darcy flow problem,  \textbf{\method} achieved up to a 50\% gain in test loss at rank 0.25, while reducing optimizer memory by 76\%. The Navier-Stokes experiments showcased \textbf{\method}'s ability to handle complex problems, maintaining comparable performance at lower ranks while dramatically reducing memory usage. Electromagnetic wave propagation simulations saw up to 11\% gains.

\begin{table}[h]
\centering
\caption{
Full results for memory and accuracy comparison on Navier–Stokes $1024\times1024$ with Reynolds number $10^{5}$.  
    Train and test losses are $L_{2}\times10^{-2}$ (mean ± 1 standard error over three seeds).  
    “Mixed” uses half-precision weights and gradients with a mixed-precision forward pass. Memory is a rounded peak GPU allocation.}
\begin{tabular}{lccccc}
\hline
Model & Memory (GB) & Precision & Train  $L_2$ & Test $L_2$   \\
\hline
\hline
\textbf{Low-Rank Only} \\
25\%           & 46 & Full  & \num{5.37} $\pm$ \num{0.08} & \num{17.19} $\pm$ \num{0.23} \\
               & 29  & Mixed & \num{6.92} $\pm$\num{0.19}                   & \num{17.09} $\pm$\num{0.19}                  \\
25\% rand-svd  & 46 & Full  & \num{5.93}  $\pm$\num{0.36}                 & \num{17.23}  $\pm$\num{0.21}                   \\
50\% Matrix (d=1) &  49 & Full  & \num{29.63} $\pm$  \num{1.46}& \num{33.11} $\pm$ \num{1.21}   \\
\hline
\textbf{Sparse Only} \\
25\%           & 46 & Full  & \num{6.39}  $\pm$ \num{0.32}                   & \num{18.73}  $\pm$ \num{0.08}                 \\
               & 29   & Mixed &  \num{7.37}  $\pm$ \num{0.14}                             &  \num{18.54}  $\pm$ \num{0.34}                              \\
50\%           & 49  & Full  & \num{5.35} $\pm$ \num{0.08} & \num{17.21} $\pm$ \num{0.16}\\
               & 32   & Mixed & \num{6.02} $\pm$ \num{0.28}                          & \num{17.38} $\pm$  \num{0.31}                            \\
\hline
\textbf{\method} \\
25\% (5+20)    & 46 & Full  & \num{5.36} $\pm$ \num{0.05} & \num{16.82} $\pm$ \num{0.18} \\
               & 29   & Mixed & \num{6.42} $\pm$ \num{0.15} & \num{16.87} $\pm$ \num{0.15}  \\
\hline
\textbf{Adam Baseline} \\
No Compression & 52 & Full  & \num{3.94} $\pm$ \num{0.22} & \num{17.02} $\pm$ \num{0.18}   \\
               & 37   & Mixed & \num{4.86} $\pm$ \num{0.26} & \num{17.01} $\pm$ \num{0.19}  \\
\hline
\label{tab:chuwei1024data_app}
\end{tabular}
\end{table}

\subsection{Update frequency of tensor decomposition ablation.}
\label{sec_app:update_freq}

Fig.~\ref{fig:gap_ablation_plot} illustrates how the projection gap affects \method\ on the NS128 task over the first 150 epochs, comparing 25\% and 50\% rank settings. In line with findings from~\cite{zhao2024galore,2024grass}, increasing the projection gap can improve generalization, as switching subspaces introduces a small amount of noise that can harm the performance, if done too frequently

We observe that low-rank projections are less sensitive to the projection gap and tend to perform better in the early stages of training. In our experiments, they consistently lead during roughly the first half of training, after which structured sparsity gradually catches up. A likely explanation is that low-rank projections quickly capture the dominant modes of the gradient with fewer parameters, enabling faster initial convergence. In contrast, structured sparsity requires more iterations to recover the same signal, as it discards parts of the gradient during each step. This slower adaptation may, however, contribute to better generalization by reducing overfitting in later epochs.

\begin{figure}
    \centering
    \begin{subfigure}[b]{0.48\textwidth}
         \includegraphics[width=\linewidth] {new_figures/ranks/no_mixed/model_size_comparison_precision.pdf}
    \end{subfigure}
    \hfill
    \begin{subfigure}[b]{0.48\textwidth}
        \includegraphics[width=\linewidth]
        {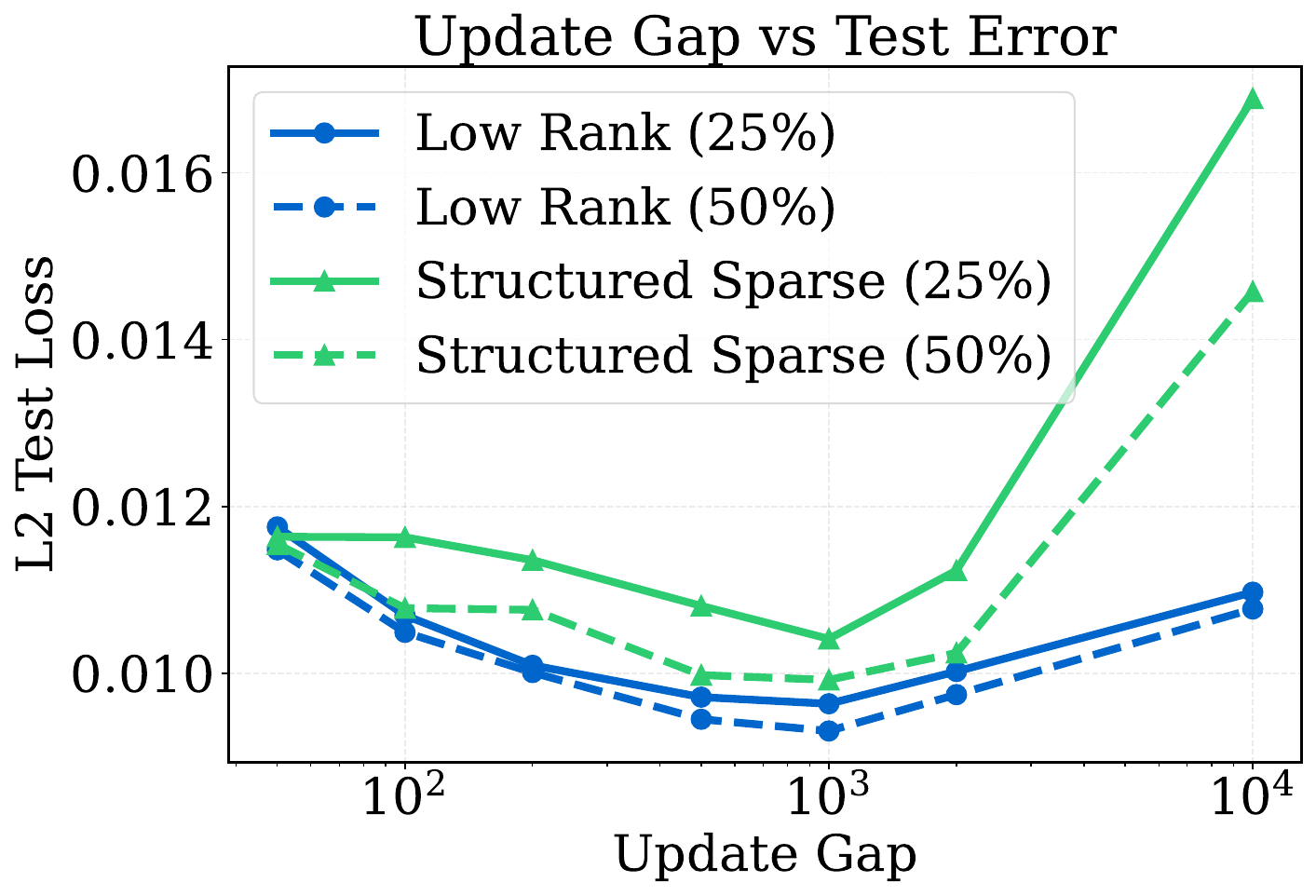}
    \end{subfigure}
    \caption{Left: Low-rank, structured, at various rank percentages vs \method\ at $25\%$ and baseline Adam optimizer. Right: Effect of projection gap on $L_2$ test loss for sparse and low-rank projections at 25\% and 50\% compression, trained for 150 epochs on the NS128 dataset.}
    \label{fig:gap_ablation_plot}
\vspace{-5mm}
\end{figure}

\subsection{Tensor Low-rank vs matrix GaLore}
\label{sec_app:matricizing_vs_tensor}

We highlight that our higher-order low-rank factorization version of \method\ shows superior performance to the direct extension of GaLore as described in~\ref{sec:naive_galore}. In Tab.~\ref{tab:2}), we show results for Darcy Flow comparing a higher-order low-rank factorization to construct the projection matrix instead of naively unfolding the tensor to 2d and removing the inherent structure. Comparing different unfoldings (variations of $d$), we observed up to a 48\% improvement in test loss with a rank of 0.25, while reducing the optimizer state memory from $2.09$GB to $0.5$GB.

\begin{table}
\centering
\caption{Model performance on Darcy-flow.}
\begin{tabular}{lccccccc}
\hline
Model & \multicolumn{6}{c|}{Test Loss (1e-2) at Rank Ratio} & Gain (\%) \\
& 0.01 & 0.1 & 0.25 & 0.5 & 0.75 & 1.0 & \\
\hline
FNO Baseline & - & - & - & - & - & 0.205 & / \\
FNO - Tensor Low-rank & \textbf{0.147} & \textbf{0.108} & \textbf{0.105} & \textbf{0.107} & \textbf{0.140} & \textbf{0.173} & \textbf{49} \\
FNO - GaLore (d=1) & 0.256 & 0.232 & 0.212 & 0.245 & 0.201 & 0.190 & 8 \\
FNO - GaLore (d=2) & 0.203 & 0.192 & 0.168 & 0.178 & 0.170 & 0.180 & 19 \\
FNO - GaLore (d=3) & 0.234 & 0.212 & 0.201 & 0.193 & 0.196 & 0.182 & 11 \\
\hline
\label{tab:2}
\end{tabular}
\end{table}

\begin{table}[h]
\centering
\caption{Evaluating low-rank tensor factorization across various tasks. }
\begin{tabular}{lcccccc}
\hline
Model & Rank & Memory & Train & Test $H_1$ & Test $L_2$  & Gain \\
& Ratio & (GB) & (Loss ($\times 10^{-2}$)) & (Loss ($\times 10^{-2}$)) & (Loss ($\times 10^{-2}$)) &(\%) \\
\hline
\hline
{\textbf{Darcy}} \\
Baseline & 1.0 & 8.88 & 0.7151 & 1.6230 & 0.2050 & / \\
GaLore (d=2) & 0.25 & 7.34 & 0.4200 & 1.3210 & 0.1680 & 19 \\
\textbf{Tensor low-rank} & 0.25 & 7.32 & \textbf{0.2930} & \textbf{0.8680} & \textbf{0.1050} & \textbf{48.8} \\
\hline
\multicolumn{5}{l}{\textbf{Navier-Stokes}} & \\
Baseline & 1.0 & 77 & 1.0630 & 1.9010 & 0.6152 & / \\
GaLore (d=1) & 0.5 & 68 & 4.3340 & 5.5830 & 1.9952 & -223 \\
\textbf{Tensor low-rank} & 0.5 & 55 & 1.2340 & 2.0850 & 0.6480 &  -5.4\\

\hline
\multicolumn{5}{l}{\textbf{ElectroMagnetic}} & \\
Baseline & 1.0 & 4.83 & 2.973 & 0.1902 & 0.2000  & / \\
GaLore (d=2) & 0.25 & 4.83 & 2.392 & 0.1802 & 0.1900 & 5 \\
\textbf{Tensor low-rank} & 0.25 & 4.63 & \textbf{2.132} & \textbf{0.1681}  & \textbf{0.1782} & \textbf{11} \\
\hline
\multicolumn{5}{l}{\textbf{Burgers}} & \\
Baseline & 1.0 & 3.94 &0.0064& 0.0050 & 0.0026 & / \\
GaLore (d=2) & 0.5 & 3.88 & 0.0052 & 0.0100 & 0.0062 &  -250 \\
\textbf{Tensor low-rank} & 0.5 & 3.87 & \textbf{0.0026} &\textbf{0.0041} &\textbf{0.0025} & \textbf{+5} \\
\hline
\label{tab:1}
\end{tabular}
\end{table}

\label{sec:matrix}
We evaluate three approaches to matricizing a tensor gradient with shape $C_{in} \times C_{out} \times M_x \times M_y$. The first, which we call "rollout=1", combines the last 3 dimensions into one matrix dimension, resulting in a matrix of shape $C_{in} \times (C_{out} * M_x * M_y)$. The second, "rollout=2", combines the first two dimensions into the first matrix dimension and the last two dimensions into the second matrix dimension, resulting in a matrix of shape $(C_{in} * C_{out}) \times (M_x * M_y)$. The last, "rollout=3", combines the last three dimensions into the second matrix dimension, resulting in a matrix of shape $C_{in} \times (C_{out} * M_x * M_y)$. We showcase results and comparisons for all three approaches in Tab.~\ref{reshape_abl}.

\label{app:abl}
All of the subsequent results are with varying rank ratios on the \textbf{\method} method for all datasets. We report both the training and testing loss/accuracy.
\begin{table}[!h]
\centering
\caption{Model performance on Burgers}
\label{tab:burgers_}
\begin{tabular}{lccccr}
\hline
Model & Rank Ratio & \makecell{Train Loss (1e-4)} & \makecell{Test Loss(1e-4)} & Gain (\%) \\
\hline
FNO Baseline & Full Rank & 0.205 & 0.262 & / \\
FNO - \textbf{Tensor low-rank} & 0.1 & 0.115 & 0.321 & -19 \\
FNO - \textbf{Tensor low-rank} & 0.25 & 0.095 & 0.271 & -4 \\
FNO - \textbf{Tensor low-rank} & 0.5 & 0.086 & 0.253 & +5 \\
FNO - \textbf{Tensor low-rank} & 0.75 & 0.083 & 0.246 & +8 \\
FNO - \textbf{Tensor low-rank} & 1.00 & 0.083 & \textbf{0.242} & \textbf{+9} \\
\hline
\end{tabular}
\end{table}

\begin{table}[!h]
\centering
\label{tab:22}
\caption{Model performance on Darcy-flow}
\begin{tabular}{lccccr}
\hline
Model & Rank Ratio & \makecell{Train Loss (1e-2)} & \makecell{Test Loss(1e-2)} & Gain (\%)\\
\hline
FNO Baseline & Full Rank & 0.715 & 0.205 & / \\
FNO - \textbf{Tensor low-rank} & 0.01 & 0.465 & 0.147 & +30  \\
FNO - \textbf{Tensor low-rank} & 0.1 & 0.323 & 0.108 & +48 \\
FNO - \textbf{Tensor low-rank} & 0.25 & \textbf{0.293} & \textbf{0.105} & \textbf{+49}\\
FNO - \textbf{Tensor low-rank} & 0.5 & 0.275 & 0.107 & +49\\
FNO - \textbf{Tensor low-rank} & 0.75 & 0.379 & 0.140 & +40 \\
FNO - \textbf{Tensor low-rank} & 1.00 & 0.715 & 0.173 & +16  \\
\hline
\end{tabular}
\end{table}

\begin{table}[!h]
\centering
\caption{Model performance on EM.}
\label{tab:EM_}
\begin{tabular}{lccccccc}
\hline
Model & \multicolumn{6}{c|}{Test Loss (1e-2) at Rank Ratio} & Gain (\%) \\
& 0.01 & 0.1 & 0.25 & 0.5 & 0.75 & 1.0 & \\
\hline
FNO Baseline & - & - & - & - & - & 0.200 & / \\
FNO - \textbf{Tensor low-rank} & 0.187 & 0.185 & \textbf{0.178} & 0.176 & 0.174 & 0.206 & \textbf{11} \\
FNO - GaLore (d=1) & 0.213 & 0.192 & 0.193 & 0.189 & 0.194 & 0.200 & 7 \\
FNO - GaLore (d=2) & 0.205 & 0.206 & 0.195 & 0.196 & 0.201 & 0.199 & 3\\
\hline
\label{tab:12}
\end{tabular}
\end{table}

\begin{table}[!h]
\centering
\caption{Ablation: GaLore and \textbf{Tensor low-rank} Rank Comparison}
\begin{tabular}{lccccr}
\hline
Method & \% orig. parameters & \makecell{GaLore Test $L_2$ ($\times 10^{-2}$)} & \makecell{\textbf{Tensor low-rank} Test $L_2$ ($\times 10^{-2}$)}\\
\hline
GaLore (d=1) & 25 & $2.495 \pm 0.920$ & \textbf{0.9141} $\pm 0.0064$ \\
 & 50 & $3.594 \pm 0.885$ & \textbf{0.7622} $\pm 0.0984$\\
 & 75 & $3.298 \pm  1.96$ & \textbf{0.6697} $\pm 0.0746$\\
\hline
GaLore (d=2) & 25 & $8.715 \pm 0.252$ & \textbf{0.9141} $\pm 0.0064$ \\
 & 50 & $8.683 \pm 0.0014$ & \textbf{0.7622} $\pm 0.0984$\\
 & 75 & $8.950 \pm 0.0141$ & \textbf{0.6697} $\pm 0.0746$\\
\hline
GaLore (d=3) & 25 & $8.723 \pm 0.0149$ & \textbf{0.9141} $\pm 0.0064$ \\
 & 50 & $8.702 \pm 0.0108$ & \textbf{0.7622} $\pm 0.0984$\\
 & 75 & $8.585 \pm 0.0171$ & \textbf{0.6697} $\pm 0.0746$\\
\hline
\end{tabular}
\label{reshape_abl}
\end{table}

\subsection{Structured Sparsity Mask Strategy}
\label{sec_app:structured_sparsity}

Tab.~\ref{tab:sparse_ablation} shows an ablation of three structured sparse projection strategies: \textbf{Top-$k$}, \textbf{Probability}, and \textbf{Rand-$k$}, evaluated at 25\% and 50\% sparsity.

We find that \textbf{Rand-$k$} consistently outperforms both Top-$k$ and Probability sampling, especially at higher sparsity (50\%). This aligns with prior findings in GRASS~\cite{2024grass}.

The performance gap stems from how each method selects entries:
\begin{itemize}
    \item \textbf{Top-$k$} selects slices with the highest norms.
    \item \textbf{Probability} sampling assigns selection probabilities proportional to slice norms.
    \item \textbf{Rand-$k$} uniformly samples slices at random, without relying on norm computations.
\end{itemize}

Both Top-$k$ and Probability flatten the tensor by computing slice-wise norms, discarding its multi-dimensional structure. In contrast, Rand-$k$ respects the original tensor layout, which helps preserve informative patterns and leads to better generalization.

\begin{table}[h]
\centering
\caption{Ablation of sparse projection methods at 25\% and 50\% sparsity. Train and test losses are reported as $L_2$ loss ($\times 10^{-3}$) with mean ± std over multiple runs. }
\begin{tabular}{lcccc}
\hline
Method &  Train loss & Test $L_2$ \\
\hline
\hline
\textbf{Top-$k$} \\
25\%     & \num{12.05} & \num{6.50} \\
50\%     & \num{12.92} & \num{7.93} \\
\hline
\textbf{Probability} \\
25\%     & \num{10.91} & \num{6.20} \\
50\%     & \num{9.59} & \num{6.07} \\
\hline
\textbf{Rand-$k$} \\
25\%     & \num{12.01} & \num{6.22} \\
50\%     & $\mathbf{9.14}$ & $\mathbf{5.54}$ \\
\hline
\label{tab:sparse_ablation}
\end{tabular}
\end{table}

\begin{table}[!h]
\centering
\caption{Model performance on EM}
\begin{tabular}{lccccr}
\hline
Model & Rank Ratio & \makecell{Train Loss} & \makecell{Test Loss} & Gain (\%) \\
\hline
Complex FNO Baseline & Full Rank & 2.973 & 0.200  & / \\
Complex FNO - \textbf{Tensor low-rank} & 0.01 & 4.198 & 0.249 & -20 \\
Complex FNO - \textbf{Tensor low-rank} & 0.1 & 2.936 & 0.217 & -8 \\
Complex FNO - \textbf{Tensor low-rank} & 0.25 & \textbf{2.132} & \textbf{0.178} & +11 \\
Complex FNO - \textbf{Tensor low-rank} & 0.5 & {2.430} & {0.184} & {+8} \\
Complex FNO - \textbf{Tensor low-rank} & 0.75 & 2.719 & 0.192 & +4 \\
Complex FNO - \textbf{Tensor low-rank} & 1.00 & {2.397} & 0.185 & +8 \\
\hline
\end{tabular}
\end{table}
\section{Architecture and Training Details}
\paragraph{Sobolev Loss for PDE Training.}
In training NOs for PDEs we employ both the $L^2$ and Sobolev $H^1$ losses to provide a comprehensive assessment of model performance. While the $L^2$ loss measures point-wise accuracy of predictions, the $H^1$ loss, defined as $\|u - \hat{u}\|_{H^1}^2 = \|u - \hat{u}\|_{L^2}^2 + \|\nabla u - \nabla \hat{u}\|_{L^2}^2$, accounts for both the function values and their gradients. This is particularly crucial for PDEs, as it ensures that the learned solutions not only match the target values but also preserve the smoothness and differential properties inherent in the physical systems being modeled. 

\paragraph{Sobolev Loss for Complex Wave Phenomena.}
The Sobolev $H^1$ loss proves especially valuable when dealing with complex wave phenomena, as demonstrated in our experiments with the EM Dataset using Complex-FNOs. In this case, the $H^1$ loss not only measures the accuracy of the predicted complex electric field envelope but also ensures that its spatial derivatives are correctly captured. This is crucial for accurately representing the rapid oscillations and sharp peaks characteristic of EM waves. Our results show that \textbf{Tensor low-rank} with a rank ratio of 0.25 achieved an 11\% improvement in overall test loss compared to the baseline, with the $H^1$ loss decreasing from 0.1902 to 0.1681. This improvement is particularly significant given the challenging nature of the EM dataset, which involves predicting the complex electric field envelope resulting from nonlinear interactions in waveguides. The enhanced performance in $H^1$ loss indicates that our model not only matches the amplitude of the EM waves more accurately but also better captures the rapid spatial variations and peak formations. This is critical in applications such as optical pulse propagation, where precise modeling of field gradients and peak intensities is essential for predicting phenomena like second-harmonic generation and phase matching.
\begin{table}[!h]
\centering
\small
\begin{tabular}{|p{1.5cm}|p{1.5cm}|p{7cm}|p{2.5cm}|}
\hline
\textbf{Dataset} & \textbf{Model} & \textbf{Architecture Details} & \textbf{Optimizer \& Scheduler} \\
\hline
Burgers & FNO & 
\begin{itemize}[nosep]
    \item 4 layers, 90 modes
    \item 256 hidden channels, 256 projection channels
    \item Skip Connections: 'linear'
    \item Positional embedding: 'grid'
\end{itemize} & 
Adam with step LR $3e-4$, weight decay $2e-6$ 500 epochs, batch size 16. Trained with $H_1$ loss.\\
\hline
NS128 & FNO & 
\begin{itemize}[nosep]
    \item 4 layers, 64 x 64 modes
    \item 64 hidden channels, 256 projection channels
    \item Skip: 'linear'
    \item Use channel MLP: 1
    \item Channel MLP expansion: 0.5, dropout: 0
\end{itemize} & 
Complex Adam with step LR 3e-4, weight decay 1e-4, 500 epochs, batch size 8.  Update decomposition frequency: 1000. Trained with $H_1$ loss.\\ 
\hline
NS1024 - max memory test & FNO & 
\begin{itemize}[nosep]
    \item 4 layers, 128 modes
    \item 255 hidden channels, 256 projection channels
    \item Skip: 'linear'
    \item Channel MLP expansion: 0.5, dropout: 0
\end{itemize} & 
Complex Adam with step LR 5e-3, weight decay 1e-4, 100 epochs in total: batch size 8 for 50 iterations and resolution 256, then batch size 2 for resolution 1024. Update decomposition frequency: 500. Trained with $L2$ loss. \\
\hline
NS1024 $\text{Re}=10^5$ & FNO & 
\begin{itemize}[nosep]
    \item 4 layers, 128 modes
    \item 128 hidden channels, 256 projection channels
    \item Skip: 'linear'
    \item Channel MLP expansion: 0.5, dropout: 0
\end{itemize} & 
Complex Adam with step LR 5e-3, weight decay 1e-4, 100 epochs in total: batch size 8 for 50 iterations and resolution 256, then batch size 2 for resolution 1024. Update decomposition frequency: 500. Trained with $L2$ loss.\\ 
\hline
Darcy Flow & FNO & 
\begin{itemize}[nosep]
    \item 4 layers, 64 modes
    \item 128 hidden channels, 128 projection channels
    \item Skip: 'linear'
\end{itemize} & 
Adam with step LR $1e-3$, weight decay $1e-4$, 250 epochs, batch size 2. Trained with $L_2$ loss.\\
\hline
EM Wave & Complex-FNO & 
\begin{itemize}[nosep]
    \item 8 layers, 128 modes
    \item 128 hidden channels, 128 projection channels
    \item Skip: 'linear'
    \item Complex data: True
    \item Complex activation function: True
\end{itemize} & 
Complex Adam with step LR 1e-4, weight decay 2e-6, batch size 32, 1000 epochs. Trained with $H_1$ loss.\\
\hline
\end{tabular}
\caption{Detailed FNO Architecture Specifications for Different Datasets}
\label{tab:detailed_fno_arch}
\end{table}

\begin{section}{Slowdown in Training}

While \textbf{Tensor low-rank} does introduce additional computational overhead from the tensor decomposition step, we have carefully analyzed the impact on training speed and efficiency. Our experiments have shown that the memory savings achieved by \textbf{Tensor low-rank} often outweigh the slight increase in computational cost, resulting in an overall improvement in training time and resource utilization.
Specifically, we have measured the training time for \textbf{Tensor low-rank} compared to the baseline FNO model and the GaLore approach. Our results indicate that the slowdown in training time is modest, typically in the range of 5-20\%, depending on the dataset and model configuration. This is a reasonable trade-off given the significant memory savings (up to 75\% reduction in optimizer memory) that \textbf{Tensor low-rank} provides.

\begin{table}[!h]
\centering
\begin{tabular}{lccc}
\hline
\textbf{Model} & \textbf{Rank} & \textbf{Time/epoch(s)} & \textbf{Slowdown (\%)} \\
\hline
Baseline & 1.0 & 34.96 & -- \\
GaLore & 0.20 & 34.47 & -1.40 \\
GaLore & 0.25 & 34.79 & -0.48 \\
GaLore & 0.50 & 36.27 & 3.75 \\
GaLore & 0.75 & 37.50 & 7.26 \\
\textbf{Tensor low-rank} (40, 40, 40, 24) & 0.20 & 36.53 & 5.98 \\
\textbf{Tensor low-rank} (48, 48, 48, 24) & 0.25 & 38.30 & 10.08 \\
\textbf{Tensor low-rank} (56, 56, 56, 24) & 0.50 & 40.63 & 12.03 \\
\textbf{Tensor low-rank} (64, 64, 56, 32) & 0.75 & 44.93 & 19.84 \\
\hline
\end{tabular}
\caption{Comparison of model execution times, ranks, and relative slowdown}
\label{tab:model_comparison}
\end{table}

Moreover, we have incorporated techniques such as "warm-restart" initialization of the tensor decomposition to amortize the computational overhead across training iterations. This helps minimize the impact on the overall training efficiency. We have also explored opportunities to further optimize the tensor decomposition computations, which could potentially reduce the training time slowdown even further.

\end{section}
\begin{remark}[Real-Valued Analysis]
For clarity of presentation, we develop the theory of \textbf{Tensor low-rank} assuming all tensors are real-valued, i.e., $\mathcal{W}_l, \mathcal{G}_t \in \mathbb{R}^{N_1 \times N_2 \times N_3 \times N_4}$ and all associated operations are in real space. This simplification allows us to focus on the core geometric and algebraic properties without the additional complexity of complex conjugates and Hermitian operations. The extension to complex-valued tensors (as needed for Fourier Neural Operators where weights may be complex in the frequency domain) is straightforward: inner products become Hermitian inner products, transposes become conjugate transposes, and orthogonality conditions incorporate complex conjugates. All main results remain valid with these natural modifications.
\end{remark}

\section{Parameter Complexity Analysis}

\begin{figure}
    \centering
    \includegraphics[width=0.9\linewidth]{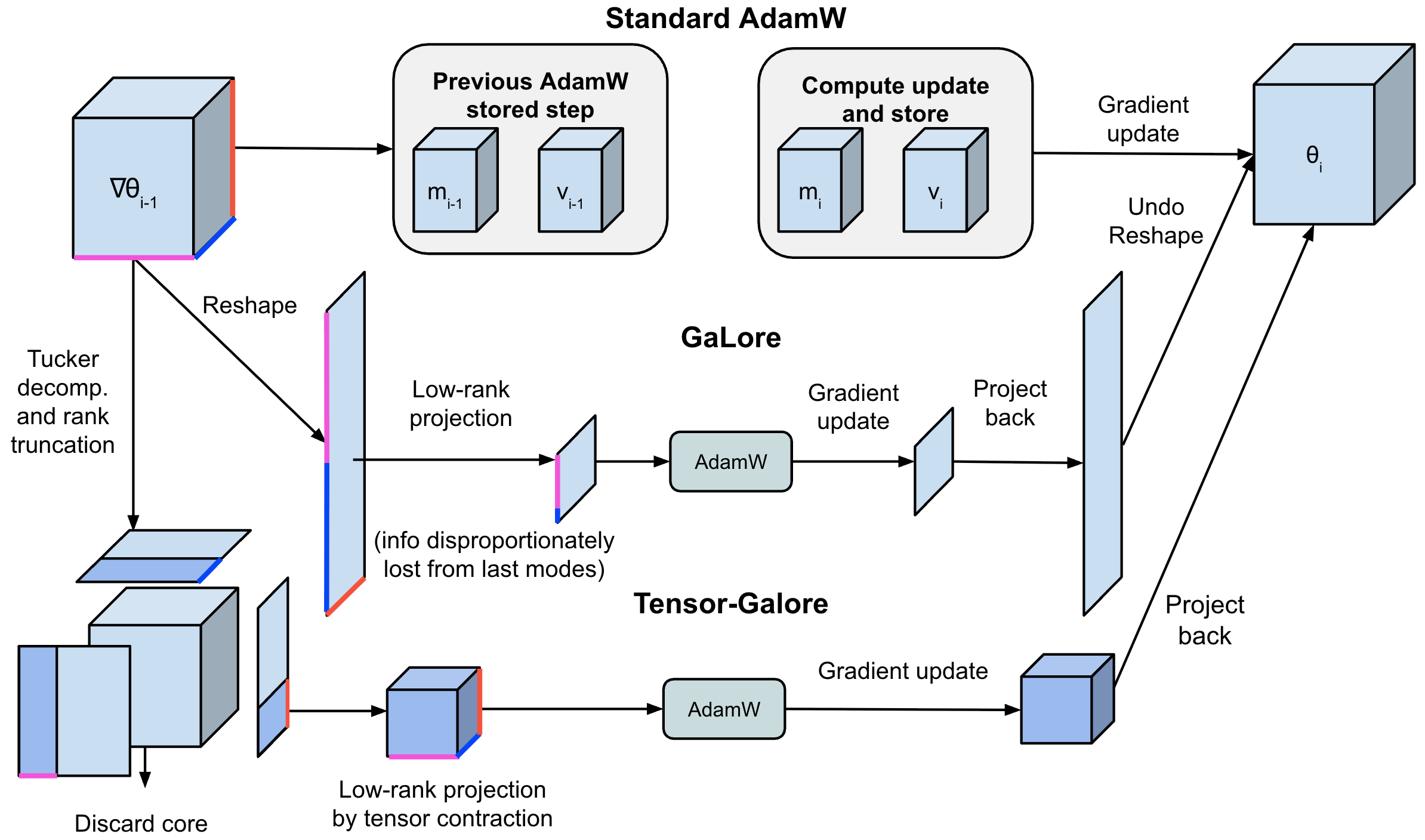}
    \caption{Comparison of the higher-order low-rank decomposition Tensor Low-Rank with standard Adam and GaLore. GaLore applies matrix-based low-rank projection after reshaping tensors. Our \textbf{Tensor low-rank} method leverages tensor decomposition to perform low-rank projection directly on tensor gradients, preserving multidimensional structure.}
    \label{fig:old_main1}
\end{figure}

\begin{algorithm}
\caption{Adam with low-rank tensor decomposition}
\begin{algorithmic}[1]
\Require A layer weight tensor $\mathcal{W} \in \mathbb{C}^{N_1 \times N_2 \times N_3 \times N_4}$. Step size $\eta$, scale factor $\alpha$, decay rates $\beta_1, \beta_2$, rank $r$, subspace change frequency $T$.
\State Initialize first-order moment $\mathcal{M}_0 \in \mathbb{C}^{r \times r \times r \times r} \gets 0$
\State Initialize second-order moment $\mathcal{V}_0 \in \mathbb{C}^{r \times r \times r \times r} \gets 0$
\State Initialize step $t \gets 0$
\Repeat
    \State $\mathcal{G}_t \in \mathbb{C}^{N_1 \times N_2 \times N_3 \times N_4} \gets -\nabla_\mathcal{W} \phi_t(\mathcal{W}_t)$
    \If{$t \bmod T = 0$}
        \State $\mathcal{C}, \{U^{(n)}\}_{n=1}^4 \gets \text{Projection}(\mathcal{G}_t, \text{rank}=r)$ instead of tucker
    \Else
        \State $\mathcal{C}, \{U^{(n)}\}_{n=1}^4 \gets \mathcal{C}_{t-1}, \{U^{(n)}_{t-1}\}_{n=1}^4$ \Comment{Reuse the previous projector.}
    \EndIf
    \State $\mathcal{R}_t \gets \mathcal{G}_t \times_1 {U^{(1)}}^\top \times_2 {U^{(2)}}^\top \times_3 {U^{(3)}}^\top \times_4 {U^{(4)}}^\top$ \Comment{Project gradient into compact space.}
    \State \textbf{UPDATE}($\mathcal{R}_t$) by Adam:
    \State \quad $\mathcal{M}_t \gets \beta_1 \cdot \mathcal{M}_{t-1} + (1 - \beta_1) \cdot \mathcal{R}_t$
    \State \quad $\mathcal{V}_t \gets \beta_2 \cdot \mathcal{V}_{t-1} + (1 - \beta_2) \cdot |\mathcal{R}_t\Bar{\mathcal{R}_t}|$ \Comment{We use the complex conjugate update.}
    \State \quad $\mathcal{M}_t \gets \mathcal{M}_t / (1 - \beta_1^t)$
    \State \quad $\mathcal{V}_t \gets \mathcal{V}_t / (1 - \beta_2^t)$
    \State \quad $\mathcal{N}_t \gets \mathcal{M}_t / (\sqrt{\mathcal{V}_t} + \epsilon)$
    \State $\tilde{\mathcal{G}}_t \gets \alpha \cdot \mathcal{N}_t \times_1 U^{(1)} \times_2 U^{(2)} \times_3 U^{(3)} \times_4 U^{(4)}$ \Comment{Project back to original space.}
    \State $\mathcal{W}_t \gets \mathcal{W}_{t-1} + \eta \cdot \tilde{\mathcal{G}}_t$
    \State $t \gets t + 1$
\Until{convergence criteria met.}
\State \Return $\mathcal{W}_t$
\end{algorithmic}
\end{algorithm}

\label{sec:memory}
To understand the theoretical advantages of \textbf{Tensor low-rank} over matrix-based GaLore, we provide a detailed analysis of the parameter complexity for both approaches. This analysis demonstrates why tensor decomposition leads to more efficient memory usage while maintaining expressiveness.

\subsection{Memory Analysis}

We provide a theoretical analysis of the memory requirements for \textbf{Tensor low-rank} compared to baseline methods and matrix GaLore variants. Consider a weight tensor $W \in \mathbb{C}^{N_1 \times N_2 \times N_3 \times N_4}$ in a FNO Spectral layer. Tab.~\ref{tab:memory_requirements} summarizes the memory requirements for different methods. The baseline approach stores the full tensor and its corresponding optimizer states. For a rank ratio $r$ $(0 < r \leq 1)$, \textbf{Tensor low-rank} requires storing the factor matrices, resulting in substantial memory savings, especially for the optimizer states. In this table, we assume the use of a complex-valued Adam optimizer, which typically requires two additional tensors (first and second moments) for each parameter.

\begin{table}[h]
\centering
\caption{Theoretical memory requirements for different methods}
\label{tab:memory_requirements}
\begin{tabular}{lcc}
\hline
Method & Weight Parameters & Optimizer States (Adam)\\
\hline
Baseline & $N_1N_2N_3N_4$ & $2N_1N_2N_3N_4$ \\
Matrix GaLore (rollup dim $1$) & $N_1N_2N_3N_4$ & $2r(N_1 + N_2N_3N_4)$ \\
\textbf{Tensor low-rank} (Tucker) & $N_1N_2N_3N_4$& $2r(N_1+N_2+N_3+N_4)$ \\
\hline
\end{tabular}
\end{table}

\subsubsection{Problem Setup}

Consider a 4D tensor weight $\mathcal{W} \in \mathbb{R}^{I_1 \times I_2 \times I_3 \times I_4}$ from a Fourier Neural Operator layer, where:
\begin{itemize}
    \item $(I_1, I_2)$ correspond to input/output channels
    \item $(I_3, I_4)$ correspond to spatial frequency modes
\end{itemize}

\subsubsection{Matrix-based Approach (GaLore)}

In the matrix-based GaLore approach, we must first reshape the tensor into a matrix. There are several possible matricization strategies:

\begin{enumerate}
    \item $\mathbf{W}_{(1)} \in \mathbb{R}^{I_1 \times (I_2I_3I_4)}$ and all permutations of $\mathbf{W}_{(i)}$
    \item $\mathbf{W}_{(12)} \in \mathbb{R}^{(I_1I_2) \times (I_3I_4)}$
\end{enumerate}

For a rank-$R$ SVD approximation of the matricized tensor:

\begin{equation}
    \mathbf{W} \approx \mathbf{U}\mathbf{\Sigma}\mathbf{V}^H
\end{equation}

The parameter count for storing the low-rank factors is:
\begin{itemize}
    \item For $\mathbf{W}_{(1)}$: $R(I_1 + I_2I_3I_4)$ parameters
    \item For $\mathbf{W}_{(12)}$: $R(I_1I_2 + I_3I_4)$ parameters
\end{itemize}

\subsubsection{Tensor-based Approach (\textbf{Tensor low-rank})}

In \textbf{Tensor low-rank}, we use Tucker decomposition with ranks $(R_1, R_2, R_3, R_4)$:

\begin{equation}
    \mathcal{W} \approx \mathcal{G} \times_1 \mathbf{U}^{(1)} \times_2 \mathbf{U}^{(2)} \times_3 \mathbf{U}^{(3)} \times_4 \mathbf{U}^{(4)}
\end{equation}

where:
\begin{itemize}
    \item $\mathcal{G} \in \mathbb{R}^{R_1 \times R_2 \times R_3 \times R_4}$ is the core tensor
    \item $\mathbf{U}^{(n)} \in \mathbb{R}^{I_n \times R_n}$ are the factor matrices
\end{itemize}

The total parameter count is:
\begin{equation}
    P_{Tucker} = R_1R_2R_3R_4 + \sum_{n=1}^4 I_nR_n
\end{equation}

\subsubsection{Comparative Analysis}

Let's consider a practical case where:
\begin{itemize}
    \item $N = I_1 = I_2$ (equal input/output channels)
    \item $M = I_3 = I_4$ (equal spatial dimensions)
    \item For Tucker: $r_{max} = R_1 = R_2 = R_3 = R_4$ (equal ranks)
    \item For matrix SVD: $R = r_{max}^2$ (equivalent rank)
\end{itemize}

Then:

\begin{enumerate}
    \item Matrix GaLore (best case):
    \begin{equation}
        P_{Matrix} = r_{max}^2(N^2 + M^2)
    \end{equation}

    \item \textbf{Tensor low-rank}:
    \begin{equation}
        P_{Tensor} = r_{max}^4 + 2r_{max}N + 2r_{max}M
    \end{equation}
\end{enumerate}

In typical neural operator architectures:
\begin{itemize}
    \item $N \gg r_{max}$ (number of channels much larger than rank)
    \item $M \gg r_{max}$ (spatial dimensions much larger than rank)
\end{itemize}

Therefore:
\begin{itemize}
    \item Matrix case complexity: $O(r^2(N^2 + M^2))$
    \item Tensor case complexity: $O(N + M + r^4)$
\end{itemize}

\subsubsection{Memory Savings Analysis}

For concrete numbers, consider a typical FNO layer with:
\begin{itemize}
    \item $N = 64$ channels
    \item $M = 128$ modes
    \item $r_{max} = 16$ (rank)
\end{itemize}

Matrix GaLore parameters:
\begin{equation}
    P_{Matrix} = 256(64^2 + 128^2) \approx 5.2M
\end{equation}

\textbf{Tensor low-rank} parameters:
\begin{equation}
    P_{Tensor} = 65,536 + 2(16)(64) + 2(16)(128) \approx 70K
\end{equation}

This represents a $\sim$75x reduction in parameter count, which directly translates to memory savings in the optimizer states. The savings become even more pronounced as the spatial dimensions ($M$) increase, which is crucial for high-resolution problems.

\subsubsection{Impact on Expressiveness}

Despite the significant reduction in parameters, \textbf{Tensor low-rank} maintains expressiveness because:
\begin{enumerate}
    \item The Tucker decomposition preserves the natural tensor structure of the operator
    \item Each mode has its own rank parameter, allowing for more flexible approximation
    \item The core tensor captures higher-order interactions between modes
\end{enumerate}

This explains why \textbf{Tensor low-rank} can achieve comparable or better performance while using significantly less memory than matrix-based approaches.

\section{Tensor Operations and Notation}
\label{sec:tensor1}
\begin{definition}[Tensor]
An order-d tensor $\mathcal{A} \in \mathbb{R}^{I_1 \times I_2 \times \cdots \times I_d}$ is a d-dimensional array with entries $a_{i_1,i_2,\ldots,i_d}$, where $1 \leq i_k \leq I_k$ for $k = 1,\ldots,d$.
\end{definition}

\begin{definition}[Mode-k Unfolding]
The mode-k unfolding of tensor $\mathcal{A}$, denoted as $\mathcal{A}_{(k)} \in \mathbb{R}^{I_k \times (I_1\cdots I_{k-1}I_{k+1}\cdots I_d)}$, arranges the mode-k fibers as columns of the resulting matrix. Specifically:
\[
(\mathcal{A}_{(k)})_{i_k,j} = a_{i_1,\ldots,i_d}
\]
where $j = 1 + \sum_{m=1,m\neq k}^d (i_m-1)\prod_{n=1,n\neq k}^{m-1} I_n$.
\end{definition}

\begin{definition}[Mode-k Product]
The mode-k product of a tensor $\mathcal{A} \in \mathbb{R}^{I_1 \times \cdots \times I_d}$ with a matrix $U \in \mathbb{R}^{J \times I_k}$, denoted as $\mathcal{A} \times_k U$, results in a tensor $\mathcal{B} \in \mathbb{R}^{I_1 \times \cdots \times I_{k-1} \times J \times I_{k+1} \times \cdots \times I_d}$ with entries:
\[
(\mathcal{A} \times_k U)_{i_1,\ldots,i_{k-1},j,i_{k+1},\ldots,i_d} = \sum_{i_k=1}^{I_k} a_{i_1,\ldots,i_d} u_{j,i_k}
\]
\end{definition}

\begin{proposition}[Properties of Mode-k Product]
For a tensor $\mathcal{A}$ and matrices $U, V$ of appropriate sizes:
\begin{enumerate}
    \item $(U \times_k \mathcal{A})_{(k)} = U\mathcal{A}_{(k)}$
    \item $\mathcal{A} \times_k U \times_l V = \mathcal{A} \times_l V \times_k U$ for $k \neq l$
    \item $\mathcal{A} \times_k U \times_k V = \mathcal{A} \times_k (VU)$
\end{enumerate}
\end{proposition}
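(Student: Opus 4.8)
The plan is to prove all three identities by reducing everything to part~(1), which rewrites a mode-$k$ product as an ordinary matrix product of unfoldings, and then to exploit the fact that the mode-$k$ unfolding is a bijection between order-$d$ tensors of a fixed shape and matrices of the corresponding shape. I interpret the left-hand side of part~(1) as $(\mathcal{A}\times_k U)_{(k)}$ in the notation of the Definition of the mode-$k$ product.

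For part~(1) I would argue directly from the two definitions. Fix $\mathcal{A}\in\mathbb{R}^{I_1\times\cdots\times I_d}$ and $U\in\mathbb{R}^{J\times I_k}$, and set $\mathcal{B}=\mathcal{A}\times_k U\in\mathbb{R}^{I_1\times\cdots\times I_{k-1}\times J\times I_{k+1}\times\cdots\times I_d}$. The key observation is that the column index $j=1+\sum_{m\neq k}(i_m-1)\prod_{n<m,\,n\neq k}I_n$ appearing in the unfolding depends only on the indices $(i_m)_{m\neq k}$ and the dimensions $(I_m)_{m\neq k}$, none of which change when passing from $\mathcal{A}$ to $\mathcal{B}$; in particular it does not involve the mode-$k$ index. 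Hence $(\mathcal{B}_{(k)})_{p,j}=b_{i_1,\ldots,p,\ldots,i_d}=\sum_{i_k=1}^{I_k}a_{i_1,\ldots,i_d}\,u_{p,i_k}=\sum_{i_k}u_{p,i_k}(\mathcal{A}_{(k)})_{i_k,j}=(U\mathcal{A}_{(k)})_{p,j}$, which is the claim.

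For part~(2), with $k\neq l$, I would compute a generic entry of each side. Writing $U\in\mathbb{R}^{J\times I_k}$ and $V\in\mathbb{R}^{P\times I_l}$, the entry of $\mathcal{A}\times_k U\times_l V$ at a multi-index carrying $j$ in slot $k$ and $p$ in slot $l$ equals $\sum_{i_l}\bigl(\sum_{i_k}a_{\ldots i_k\ldots i_l\ldots}u_{j,i_k}\bigr)v_{p,i_l}=\sum_{i_k}\sum_{i_l}a_{\ldots}u_{j,i_k}v_{p,i_l}$, since these are finite sums over disjoint index ranges and may be interchanged; the identical computation with the two contractions in the opposite order yields the entry of $\mathcal{A}\times_l V\times_k U$, and the two agree. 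One small point to check is that the ambient shapes of the two sides are literally the same tuple, which holds because modifying slot $k$ and slot $l$ are independent operations.

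For part~(3), I would take mode-$k$ unfoldings of both sides and apply part~(1) twice: $(\mathcal{A}\times_k U\times_k V)_{(k)}=V\,(\mathcal{A}\times_k U)_{(k)}=V\,U\,\mathcal{A}_{(k)}=(VU)\,\mathcal{A}_{(k)}=(\mathcal{A}\times_k(VU))_{(k)}$. Since the mode-$k$ unfolding is injective on tensors of a given shape — it is merely a fixed relabelling of the entries — equality of the unfoldings forces equality of the tensors, giving $\mathcal{A}\times_k U\times_k V=\mathcal{A}\times_k(VU)$. I expect the only mildly delicate part to be part~(1): one must verify carefully that the linearization index $j$ is genuinely unchanged by the mode-$k$ product, since this is exactly what allows the sum over $i_k$ to be read off as a matrix–matrix product. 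Everything after that is bookkeeping.
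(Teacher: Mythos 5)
Your proof is correct: part (1) rests on the key observation that the column-linearization index $j$ in the mode-$k$ unfolding involves only the indices and dimensions of the modes other than $k$, and is therefore literally unchanged when $\mathcal{A}$ is replaced by $\mathcal{A}\times_k U$, after which the entrywise computation $(\mathcal{B}_{(k)})_{p,j}=\sum_{i_k}u_{p,i_k}(\mathcal{A}_{(k)})_{i_k,j}$ is exactly the matrix product; part (2) is a valid exchange of finite sums over independent index ranges, and part (3) correctly combines two applications of (1) with the injectivity of the unfolding map. The paper states this proposition without proof, treating it as a standard fact of tensor algebra, so there is no argument to compare against; yours is the standard one and is complete.
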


\begin{definition}[Tensor Inner Product]
The inner product of two tensors $\mathcal{A}, \mathcal{B} \in \mathbb{R}^{I_1 \times \cdots \times I_d}$ is:
\[
\langle \mathcal{A}, \mathcal{B} \rangle = \sum_{i_1=1}^{I_1} \cdots \sum_{i_d=1}^{I_d} a_{i_1,\ldots,i_d}\overline{b_{i_1,\ldots,i_d}}
\]
\end{definition}

\begin{definition}[Tensor Norms]
For a tensor $\mathcal{A}$:
\begin{enumerate}
    \item Frobenius norm: $\|\mathcal{A}\|_F = \sqrt{\langle \mathcal{A}, \mathcal{A} \rangle}$
    \item Mode-k spectral norm: $\|\mathcal{A}\|_{(k)} = \|\mathcal{A}_{(k)}\|_2$
    \item Spectral norm: $\|\mathcal{A}\| = \max_{\|x^{(k)}\|=1} \|\mathcal{A} \times_1 x^{(1)} \times_2 \cdots \times_d x^{(d)}\|$
\end{enumerate}
\end{definition}

\begin{definition}[Tensor Outer Product]
The outer product of vectors $u^{(k)} \in \mathbb{R}^{I_k}$ for $k=1,\ldots,d$ is a tensor $\mathcal{A} = u^{(1)} \circ u^{(2)} \circ \cdots \circ u^{(d)}$ with entries:
\[
a_{i_1,\ldots,i_d} = u^{(1)}_{i_1}u^{(2)}_{i_2}\cdots u^{(d)}_{i_d}
\]
\end{definition}

\begin{definition}[Tensor Contraction]
The contraction of a tensor $\mathcal{A} \in \mathbb{R}^{I_1 \times \cdots \times I_d}$ along modes $p$ and $q$ (where $I_p = I_q$) is:
\[
(\text{Contract}_{p,q}(\mathcal{A}))_{i_1,\ldots,i_{p-1},i_{p+1},\ldots,i_{q-1},i_{q+1},\ldots,i_d} = \sum_{i=1}^{I_p} a_{i_1,\ldots,i_{p-1},i,i_{p+1},\ldots,i_{q-1},i,i_{q+1},\ldots,i_d}
\]
\end{definition}

\subsection{Tensor Trace and Inner Products}

\begin{definition}[Tensor Inner Product]
For tensors $\mathcal{A}, \mathcal{B} \in \mathbb{R}^{I_1 \times I_2 \times \cdots \times I_d}$, their inner product is:
\[
\langle \mathcal{A}, \mathcal{B} \rangle = \sum_{i_1=1}^{I_1} \sum_{i_2=1}^{I_2} \cdots \sum_{i_d=1}^{I_d} \mathcal{A}_{i_1,i_2,\ldots,i_d}\mathcal{B}_{i_1,i_2,\ldots,i_d}
\]
\end{definition}

\begin{definition}[Tensor Trace]
For a tensor $\mathcal{A}$, there are several equivalent ways to understand its trace:

1. Mode-wise trace:
\[
\text{tr}_k(\mathcal{A}) = \sum_{i_k=1}^{I_k} \mathcal{A}_{i_1,\ldots,i_k,\ldots,i_d}|_{i_k=i_k}
\]

2. Using mode-k unfolding:
\[
\text{tr}(\mathcal{A}_{(k)}) = \sum_{i=1}^{I_k} (\mathcal{A}_{(k)})_{i,i}
\]

3. Inner product interpretation:
When used in expressions like $\text{tr}(d\mathcal{W}_l^\top \times_1 X \times_2 Y)$, this is actually computing:
\[
\langle d\mathcal{W}_l, X \otimes Y \rangle
\]
\end{definition}

\begin{proposition}[Key Properties]
For the trace operation in tensor gradients:

1. Inner Product Form:
\[
\text{tr}(d\mathcal{W}^\top \times_1 X \times_2 Y) = \langle d\mathcal{W}, X \otimes Y \rangle
\]

2. Differential Form:
For scalar function $\phi$ and tensor $\mathcal{W}$:
\[
d\phi = \text{tr}(d\mathcal{W}^\top \times_1 X \times_2 Y) \implies \frac{\partial \phi}{\partial \mathcal{W}} = X \otimes Y
\]

3. Mode-wise Consistency:
\[
\text{tr}(d\mathcal{W}^\top \times_1 X \times_2 Y) = \text{tr}(X^\top d\mathcal{W}_{(1)}Y)
\]
where $d\mathcal{W}_{(1)}$ is the mode-1 unfolding.
\end{proposition}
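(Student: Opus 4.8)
The plan is to establish all three identities by reducing each to a coordinate-level contraction and then invoking the definitions of the mode-$k$ product, the tensor inner product, and the mode-$k$ unfolding, together with the already-stated Properties of the Mode-$k$ Product. Before any computation I would first pin down the ambiguous notation, since the statements are bookkeeping identities and their content is entirely in the conventions: I read $\text{tr}(\,\cdot\,)$ applied to $d\mathcal{W}^\top \times_1 X \times_2 Y$ as the full contraction of the object obtained by pairing $d\mathcal{W}$ against $X$ on mode $1$ and against $Y$ on mode $2$ (i.e. summation over all the index pairs these products create), and $X \otimes Y$ as the tensor of the same shape as $d\mathcal{W}$ whose entries factor into an $X$-entry and a $Y$-entry along modes $1$ and $2$. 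With these conventions made precise, each claim becomes a re-indexing exercise.

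For (1), I would expand the left-hand side entrywise with the Mode-$k$ Product definition: $d\mathcal{W}\times_1 X\times_2 Y$ has entries $\sum_{j_1,j_2} d\mathcal{W}_{j_1,j_2,\dots}\,X_{i_1,j_1}\,Y_{i_2,j_2}$, and taking the trace contracts the free indices $i_1,i_2$ (and the remaining modes) against the identity, collapsing the whole expression to $\sum_{j_1,j_2,\dots} d\mathcal{W}_{j_1,j_2,\dots}\,(X\otimes Y)_{j_1,j_2,\dots}$, which is exactly $\langle d\mathcal{W}, X\otimes Y\rangle$ by the Tensor Inner Product definition. For (2), I would combine (1) with the defining property of the gradient: if $d\phi = \langle d\mathcal{W},\mathcal{G}\rangle$ for every admissible perturbation $d\mathcal{W}$, then non-degeneracy (positive-definiteness) of the Frobenius inner product forces $\mathcal{G}$ to be unique, so substituting $\mathcal{G}=X\otimes Y$ from (1) gives $\partial\phi/\partial\mathcal{W}=X\otimes Y$.

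For (3), I would use property $(U\times_k\mathcal{A})_{(k)} = U\mathcal{A}_{(k)}$ from the Properties of the Mode-$k$ Product to transfer the mode-$1$ multiplication by $X$ onto the mode-$1$ unfolding $d\mathcal{W}_{(1)}$, and the analogous action of the mode-$2$ multiplication by $Y$ on the compound column index of $d\mathcal{W}_{(1)}$; together with the elementary matrix identity $\text{tr}(A^\top B)=\langle A,B\rangle_F$ (the $d=2$ instance of the tensor inner product), both $\text{tr}(d\mathcal{W}^\top\times_1 X\times_2 Y)$ and $\text{tr}(X^\top d\mathcal{W}_{(1)}Y)$ collapse to the same scalar sum, which already equals the quantity computed in (1). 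The main obstacle is not any deep argument but keeping the notational conventions consistent across the three parts — what "transpose of a tensor" means, exactly which indices $\text{tr}$ contracts, and the precise index-ordering of the mode-$1$ unfolding when $Y$ acts on its compound column index (the bookkeeping in (3)); once those are fixed, each identity follows by direct summation.
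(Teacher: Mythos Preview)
Your proposal is sound, and in fact more detailed than what the paper offers: the paper states this proposition without proof, treating item (1) essentially as the \emph{definition} of the tensor trace in this context (see the third clause of the paper's ``Tensor Trace'' definition, which simply declares that $\text{tr}(d\mathcal{W}_l^\top \times_1 X \times_2 Y)$ ``is actually computing $\langle d\mathcal{W}_l, X\otimes Y\rangle$''). Your approach of deriving (1) from the primitive mode-$k$ product and inner-product definitions, then obtaining (2) via non-degeneracy of the Frobenius pairing and (3) via the unfolding identity $(U\times_k\mathcal{A})_{(k)}=U\mathcal{A}_{(k)}$, is the right way to make this rigorous, and your emphasis on first pinning down the conventions (what ``transpose'' and ``$\text{tr}$'' mean here, and how $Y$ acts on the compound column index of $d\mathcal{W}_{(1)}$) is exactly the point --- the paper's own notation is loose on precisely these issues, so the content of any proof is entirely in that bookkeeping.
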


\begin{example}
In the logsoftmax gradient computation:
\begin{align*}
-d\phi &= \text{tr}(d\mathcal{W}_l^\top \times_1 (P_1^\perp y)^\top \mathcal{J}_l \times_2 f_{l-1}^\top) \\
&= \langle d\mathcal{W}_l, \mathcal{J}_l^\top P_1^\perp y \otimes f_{l-1} \rangle
\end{align*}
This leads to the gradient term:
\[
\mathcal{G}_l = \mathcal{J}_l^\top P_1^\perp y \otimes f_{l-1}
\]
\end{example}

\begin{remark}[Connection to Matrix Case]
When working with matrices, the trace operation reduces to the familiar form:
\[
\text{tr}(A^\top B) = \langle A, B \rangle = \sum_{i,j} A_{ij}B_{ij}
\]
The tensor trace generalizes this to handle higher-order tensors while preserving the key property that it relates to directional derivatives through inner products.
\end{remark}

\subsection{Stable Rank for Tensors}

\begin{definition}[Matrix Stable Rank]
For a matrix $A$, the stable rank is defined as:
\[
\text{sr}(A) := \frac{\|A\|_F^2}{\|A\|_2^2}
\]
where $\|\cdot\|_F$ is the Frobenius norm and $\|\cdot\|_2$ is the spectral norm.
\end{definition}

\begin{definition}[Tensor Stable Rank]
For a non-zero tensor $\mathcal{T} \in \mathbb{R}^{N_1 \times N_2 \times ... \times N_d}$, we define the mode-wise stable rank vector as:
\[
\text{sr}(\mathcal{T}) = [\text{sr}_1(\mathcal{T}), \text{sr}_2(\mathcal{T}), ..., \text{sr}_d(\mathcal{T})]
\]
where for each mode $k$:
\[
\text{sr}_k(\mathcal{T}) := \frac{\|\mathcal{T}\|_F^2}{\|\mathcal{T}_{(k)}\|_2^2}
\]
Here:
\begin{itemize}
    \item $\mathcal{T}_{(k)}$ is the mode-k unfolding of tensor $\mathcal{T}$
    \item $\|\mathcal{T}\|_F^2 = \sum_{i_1,...,i_d} |\mathcal{T}_{i_1,...,i_d}|^2$ is the tensor Frobenius norm
    \item $\|\mathcal{T}_{(k)}\|_2$ is the spectral norm of the mode-k unfolding
\end{itemize}
\end{definition}

\begin{lemma}[Tensor-Matrix Norm Relations]
For any tensor $\mathcal{T}$ and its mode-k unfolding $\mathcal{T}_{(k)}$:
\[
\|\mathcal{T}\|_F = \|\mathcal{T}_{(k)}\|_F
\]
This follows from the fact that unfolding is just a rearrangement of entries.
\end{lemma}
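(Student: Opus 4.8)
The plan is to observe that mode-$k$ unfolding is nothing more than a relabeling of the tensor's entries, so the two Frobenius norms are sums of squares over the same multiset of values.

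First I would recall the explicit definition of the unfolding given above: the entry $a_{i_1,\dots,i_d}$ of $\mathcal{T} \in \mathbb{R}^{I_1 \times \cdots \times I_d}$ is placed at position $(i_k, j)$ of $\mathcal{T}_{(k)} \in \mathbb{R}^{I_k \times (\prod_{m \neq k} I_m)}$, where $j = 1 + \sum_{m \neq k}(i_m - 1)\prod_{n=1,\,n\neq k}^{m-1} I_n$. Since no arithmetic is performed on the entry values, the lemma is purely combinatorial: it suffices to show that the index map $\Phi : (i_1,\dots,i_d) \mapsto (i_k, j)$ is a bijection from $\prod_{m=1}^d [I_m]$ onto $[I_k] \times \big[\prod_{m \neq k} I_m\big]$.

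The second and main step is to verify this bijectivity. The first output coordinate $i_k$ is copied verbatim, so the claim reduces to showing that $(i_m)_{m \neq k} \mapsto j$ enumerates $\{1,\dots,\prod_{m \neq k} I_m\}$ without repetition. This is exactly the well-definedness and uniqueness of the mixed-radix representation with bases $(I_m)_{m\neq k}$ taken in the prescribed order: using $0 \le i_m - 1 \le I_m - 1$ together with the fact that the place values $\prod_{n=1,\,n\neq k}^{m-1} I_n$ are the partial products of those bases, a short induction on the number of active modes shows that $j-1$ ranges bijectively over $\{0,\dots,\prod_{m \neq k} I_m - 1\}$. This is the only step that requires any care; it is a standard counting argument and I would not grind through it in detail.

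Finally, since $\Phi$ is a bijection and $(\mathcal{T}_{(k)})_{\Phi(i_1,\dots,i_d)} = a_{i_1,\dots,i_d}$, reindexing the sum of squared magnitudes gives $\|\mathcal{T}_{(k)}\|_F^2 = \sum_{i_k,j} |(\mathcal{T}_{(k)})_{i_k,j}|^2 = \sum_{i_1,\dots,i_d} |a_{i_1,\dots,i_d}|^2 = \|\mathcal{T}\|_F^2$, and taking nonnegative square roots yields $\|\mathcal{T}\|_F = \|\mathcal{T}_{(k)}\|_F$. The argument does not depend on $k$, so every mode unfolding shares this common Frobenius norm, which is the form in which the result is used later in the analysis.
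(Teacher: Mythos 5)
Your proof is correct and takes the same route as the paper, which justifies the lemma with the single observation that unfolding is a rearrangement of entries; you simply make the implicit bijection between index sets explicit and then reindex the sum of squares. Nothing is missing and nothing differs in substance.
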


\begin{proposition}[Properties of Tensor Stable Rank]
For a non-zero tensor $\mathcal{T}$:
\begin{enumerate}
    \item Each $\text{sr}_k(\mathcal{T}) \geq 1$
    \item $\text{sr}_k(\mathcal{T}) \leq \text{rank}(\mathcal{T}_{(k)})$
    \item $\text{sr}_k(\mathcal{T})$ is invariant under orthogonal transformations in mode $k$
    \item For a rank-1 tensor, $\text{sr}_k(\mathcal{T}) = 1$ for all $k$
\end{enumerate}
\end{proposition}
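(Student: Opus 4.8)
The plan is to translate each statement into an elementary fact about the singular values of the mode-$k$ unfolding $\mathcal{T}_{(k)}$. By the Tensor-Matrix Norm Relations lemma, $\|\mathcal{T}\|_F = \|\mathcal{T}_{(k)}\|_F$, so $\text{sr}_k(\mathcal{T}) = \|\mathcal{T}_{(k)}\|_F^2 / \|\mathcal{T}_{(k)}\|_2^2$. Let $\sigma_1 \ge \sigma_2 \ge \dots \ge \sigma_r > 0$ be the nonzero singular values of $\mathcal{T}_{(k)}$, where $r = \text{rank}(\mathcal{T}_{(k)})$; since $\mathcal{T} \ne 0$ the unfolding is nonzero, so $r \ge 1$ and $\sigma_1 > 0$ and the ratio is well defined. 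Then $\|\mathcal{T}_{(k)}\|_F^2 = \sum_{i=1}^r \sigma_i^2$ and $\|\mathcal{T}_{(k)}\|_2^2 = \sigma_1^2$.

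For part~(1), the bound $\sum_{i=1}^r \sigma_i^2 \ge \sigma_1^2$ gives $\text{sr}_k(\mathcal{T}) \ge 1$. For part~(2), $\sum_{i=1}^r \sigma_i^2 \le r\,\sigma_1^2$ gives $\text{sr}_k(\mathcal{T}) \le r = \text{rank}(\mathcal{T}_{(k)})$. For part~(4), a rank-$1$ tensor $\mathcal{T} = u^{(1)} \circ \cdots \circ u^{(d)}$ has mode-$k$ unfolding $\mathcal{T}_{(k)} = u^{(k)}\big(\bigotimes_{m \ne k} u^{(m)}\big)^{\top}$, with the Kronecker factors taken in the index order used by the unfolding; this matrix is rank-$1$, so $r = 1$ in the notation above and parts~(1) and~(2) together force $\text{sr}_k(\mathcal{T}) = 1$ for every $k$.

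For part~(3), I would take an orthogonal matrix $Q$ acting in mode $k$ and set $\mathcal{T}' := Q \times_k \mathcal{T}$. By the listed property $(Q \times_k \mathcal{T})_{(k)} = Q\,\mathcal{T}_{(k)}$ of the mode-$k$ product, $\mathcal{T}'_{(k)} = Q\,\mathcal{T}_{(k)}$; left multiplication by an orthogonal matrix preserves both $\|\cdot\|_F$ and $\|\cdot\|_2$, and applying the lemma to $\mathcal{T}'$ gives $\|\mathcal{T}'\|_F = \|\mathcal{T}'_{(k)}\|_F = \|\mathcal{T}_{(k)}\|_F$. Dividing, $\text{sr}_k(\mathcal{T}') = \text{sr}_k(\mathcal{T})$. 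The same computation handles orthogonal maps on the complementary modes, since those act as right multiplication of $\mathcal{T}_{(k)}$ by an orthogonal matrix; over $\mathbb{C}$ one replaces ``orthogonal'' by ``unitary'' everywhere, consistent with the Real-Valued Analysis remark.

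I do not expect a genuine obstacle here: each item is a one-line singular-value estimate once the unfolding norm identity is available. The only points needing care are (i) stating precisely what ``orthogonal transformation in mode $k$'' means and checking that the lemma is invoked for $\mathcal{T}'$ as well as $\mathcal{T}$ in part~(3), and (ii) confirming the standard identity that the mode-$k$ unfolding of an outer product of vectors is a rank-$1$ matrix, which follows directly from the definitions of the outer product and of mode-$k$ unfolding.
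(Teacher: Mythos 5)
Your proof is correct and follows essentially the same route as the paper's: both reduce everything to the unfolding norm identity $\|\mathcal{T}\|_F=\|\mathcal{T}_{(k)}\|_F$ and the elementary matrix facts $\|M\|_2^2\le\|M\|_F^2\le \mathrm{rank}(M)\,\|M\|_2^2$ (which you simply phrase via singular values), with parts (3) and (4) handled identically through $(Q\times_k\mathcal{T})_{(k)}=Q\,\mathcal{T}_{(k)}$ and the rank-$1$ structure of the unfolded outer product. Your write-up is, if anything, slightly more careful about well-definedness and the explicit form of the unfolding, but there is no substantive difference.
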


\begin{proof}
1. For any matrix $M$, we know $\|M\|_F^2 \geq \|M\|_2^2$. Therefore:
\[
\text{sr}_k(\mathcal{T}) = \frac{\|\mathcal{T}\|_F^2}{\|\mathcal{T}_{(k)}\|_2^2} = \frac{\|\mathcal{T}_{(k)}\|_F^2}{\|\mathcal{T}_{(k)}\|_2^2} \geq 1
\]
where we used the tensor-matrix norm relation lemma.

2. For any matrix $M$ of rank $r$:
\[
\|M\|_2^2 \geq \frac{\|M\|_F^2}{r}
\]
Applying this to $\mathcal{T}_{(k)}$:
\[
\text{sr}_k(\mathcal{T}) = \frac{\|\mathcal{T}_{(k)}\|_F^2}{\|\mathcal{T}_{(k)}\|_2^2} \leq \text{rank}(\mathcal{T}_{(k)})
\]

3. For any orthogonal transformation $U$ in mode $k$:
\[
\|U\mathcal{T}_{(k)}\|_F = \|\mathcal{T}_{(k)}\|_F \text{ and } \|U\mathcal{T}_{(k)}\|_2 = \|\mathcal{T}_{(k)}\|_2
\]

4. For a rank-1 tensor $\mathcal{T} = a_1 \otimes ... \otimes a_d$:
\begin{itemize}
    \item Each mode-k unfolding is rank-1
    \item For rank-1 matrices, $\|M\|_F^2 = \|M\|_2^2$
    \item Therefore $\text{sr}_k(\mathcal{T}) = 1$
\end{itemize}
\end{proof}

\begin{definition}[Multilinear Stable Rank]
For a tensor $\mathcal{T}$, the multilinear stable rank is:
\[
\text{msr}(\mathcal{T}) := \min_k \text{sr}_k(\mathcal{T})
\]
This provides a lower bound on the minimal mode-k rank needed to approximate $\mathcal{T}$.
\end{definition}

\begin{remark}[Connection to Low-Rank Approximation]
The stable rank of a tensor in each mode provides insight into how well it can be approximated by a low-rank decomposition:

1. If $\text{sr}_k(\mathcal{T})$ is close to 1 in mode $k$, then $\mathcal{T}$ is nearly low-rank in that mode

2. For a Tucker decomposition:
\[
\mathcal{T} \approx \mathcal{G} \times_1 U^{(1)} \times_2 U^{(2)} ... \times_d U^{(d)}
\]
The stable rank helps determine appropriate ranks for each mode
\end{remark}

\begin{remark}[Application to FNO]
For FNO weight tensors $\mathcal{R} \in \mathbb{R}^{N_1 \times N_2 \times N_3 \times N_4}$:

1. Mode-1 and Mode-2 typically correspond to input/output channels
2. Mode-3 and Mode-4 correspond to Fourier modes
3. Stable rank in Fourier modes often naturally decreases due to spectral decay
\end{remark}

\subsection{Positive Semi-Definiteness for Tensors}

\begin{definition}[Mode-k PSD Tensor]
A tensor $\mathcal{T} \in \mathbb{R}^{N_1 \times N_2 \times \cdots \times N_d}$ is called mode-k positive semi-definite if its mode-k unfolding $\mathcal{T}_{(k)} \in \mathbb{R}^{N_k \times (N_1\cdots N_{k-1}N_{k+1}\cdots N_d)}$ satisfies:
\[
x^\top \mathcal{T}_{(k)} x \geq 0 \quad \forall x \in \mathbb{R}^{N_k}
\]
\end{definition}

\begin{definition}[All-modes PSD Tensor]
A tensor $\mathcal{T}$ is called all-modes positive semi-definite if it is mode-k PSD for all modes k.
\end{definition}

\begin{definition}[Strong PSD Tensor]
A tensor $\mathcal{T} \in \mathbb{R}^{N_1 \times N_2 \times \cdots \times N_d}$ is called strongly positive semi-definite if:
\[
\mathcal{T} \times_1 x_1 \times_2 x_2 \times_3 \cdots \times_d x_d \geq 0
\]
for all vectors $x_k \in \mathbb{R}^{N_k}, k=1,\ldots,d$.
\end{definition}

\begin{lemma}[Hierarchy of PSD Definitions]
For a tensor $\mathcal{T}$:
\[
\text{Strong PSD} \implies \text{All-modes PSD} \implies \text{Mode-k PSD}
\]
The reverse implications do not necessarily hold.
\end{lemma}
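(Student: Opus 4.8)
The lemma packages three assertions: two forward implications that form a chain, and the claim that neither implication is reversible. I would establish the forward implications by directly unpacking the definitions, and establish the failure of the converses by exhibiting explicit low-order tensors. The natural order is to dispatch the tautological links first and to spend the effort on the counterexamples.

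The implication \emph{All-modes PSD $\Rightarrow$ Mode-$k$ PSD} requires nothing more than instantiating, at the single mode $k$, the universal quantifier ``for all modes'' that appears in the definition of All-modes PSD; it is immediate. For \emph{Strong PSD $\Rightarrow$ All-modes PSD} the plan is to pass through the standard identity
\[
\mathcal{T} \times_1 x_1 \times_2 \cdots \times_d x_d \;=\; x_k^{\top}\, \mathcal{T}_{(k)} \,\bigl( x_1 \otimes \cdots \otimes x_{k-1} \otimes x_{k+1} \otimes \cdots \otimes x_d \bigr),
\]
which is a consequence of the mode-$k$ product identities in the Properties of Mode-$k$ Product proposition (each contraction with a vector, read on the mode-$k$ unfolding, becomes a Kronecker contraction). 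Fixing a mode $k$ and choosing the remaining test vectors $\{x_j\}_{j \neq k}$ so that their Kronecker product reproduces the vector $x$ in the Mode-$k$ PSD condition then recovers $x^{\top}\mathcal{T}_{(k)} x \ge 0$, and since $k$ was arbitrary this yields All-modes PSD. I would flag here that the dimensional conventions in the PSD definitions need to be pinned down for this to make literal sense: the mode-$k$ unfolding $\mathcal{T}_{(k)}$ is rectangular in general, so $x^{\top}\mathcal{T}_{(k)} x$ presupposes a square-reshaping (natural for the paired FNO channel modes), and under the most literal reading multilinearity of the contraction already forces Strong PSD $\Leftrightarrow \mathcal{T} = 0$, collapsing both forward implications to triviality.

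For the failure of the converses I would construct explicit small witnesses. To refute \emph{Mode-$k$ PSD $\Rightarrow$ All-modes PSD} I would take $d \ge 3$ and a tensor whose mode-$1$ unfolding (under the chosen square-reshaping convention) is a genuinely positive semidefinite matrix while a different mode's unfolding — the same entries, reshaped differently — has a strictly negative eigenvalue, exhibited by a single test vector; this is impossible for $d = 2$ since the two unfoldings are transposes, so the example is genuinely tensorial. To refute \emph{All-modes PSD $\Rightarrow$ Strong PSD} I would perturb a simple all-modes-PSD tensor by a component that is antisymmetric within every mode-$k$ unfolding — hence invisible to each quadratic form $x^{\top}\mathcal{T}_{(k)}x$ — but not within the bilinear pairing underlying the full multilinear contraction, then verify by direct computation on a $2\times2\times2$ (or $2\times2\times2\times2$) instance that all the unfolding quadratic forms stay nonnegative while some separable configuration $x_1,\dots,x_d$ makes the contraction strictly negative. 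The main obstacle is precisely this second counterexample: one must simultaneously certify nonnegativity of every mode-$k$ unfolding — a short but fiddly finite check — while pointing to one bad separable test configuration, and the construction is sensitive to which reading of the (informally stated) PSD definitions is adopted, so producing a clean, fully explicit witness is where the real work lies.
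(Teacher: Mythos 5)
First, a point of comparison: the paper states this lemma and gives no proof whatsoever --- the lemma is immediately followed by a remark, with no argument for either the forward implications or the failure of the converses. So there is no proof of record to measure you against; your proposal stands or falls on its own. Your diagnosis of the definitions is correct and is the most valuable part of the proposal: as literally written, $x^\top \mathcal{T}_{(k)} x$ with $x \in \mathbb{R}^{N_k}$ and $\mathcal{T}_{(k)} \in \mathbb{R}^{N_k \times \prod_{j \neq k} N_j}$ does not typecheck unless the unfolding is square, and Strong PSD quantified over all $x_1,\dots,x_d$ forces $\mathcal{T} = 0$ (replace $x_1$ by $-x_1$). The implication All-modes PSD $\Rightarrow$ Mode-$k$ PSD is indeed a tautology and needs nothing more than you give it.

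The rest of the proposal, however, is a plan rather than a proof, and it has one genuine gap beyond the definitional caveats you already flag. For Strong PSD $\Rightarrow$ All-modes PSD, the unfolding identity only lets you test $\mathcal{T}_{(k)}$ against \emph{separable} column vectors $x_1 \otimes \cdots \otimes x_{k-1} \otimes x_{k+1} \otimes \cdots \otimes x_d$, and these form a thin subset (the Segre variety) of $\mathbb{R}^{\prod_{j \neq k} N_j}$; nonnegativity of a bilinear form on separable test vectors does not imply nonnegativity of the quadratic form on arbitrary vectors (this is precisely the block-positivity versus positivity distinction), so the step ``choose the remaining test vectors so that their Kronecker product reproduces $x$'' is unavailable for generic $x$ even after a square-reshaping convention is fixed. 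The two counterexamples --- which are the entire content of the claim that the converses fail --- are described but never exhibited; you correctly identify that this is where the work lies, but the work is not done. Note finally that under the literal reading you yourself establish (Strong PSD $\Leftrightarrow \mathcal{T} = 0$), the second counterexample is trivial (any nonzero all-modes PSD tensor suffices) while the first forward implication becomes vacuous; this tension confirms that the lemma cannot actually be proved until the definitions are repaired, which is a conclusion worth stating explicitly rather than working around.
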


\begin{remark}[For \textbf{Tensor low-rank}]
For our generalized gradient analysis, we propose to use:

1. Mode-specific PSD condition:
\[
\mathcal{B}_i \text{ and } \mathcal{C}_i \text{ are mode-k PSD for relevant modes k}
\]

2. This means for each mode k:
\begin{itemize}
    \item $(\mathcal{B}_i)_{(k)}$ is a PSD matrix
    \item $(\mathcal{C}_i)_{(k)}$ is a PSD matrix
    \item The tensor operator $\mathcal{S}_k = \frac{1}{N}\sum_{i=1}^N \mathcal{C}_i \otimes_k \mathcal{B}_i$ is well-defined
\end{itemize}

3. This ensures:
\begin{itemize}
    \item The mode-k eigenvalues $\lambda_1^{(k)}, \lambda_2^{(k)}$ are real and non-negative
    \item The projection onto minimal eigenspace is well-defined for each mode
    \item The stable rank bounds make sense mode-wise
\end{itemize}
\end{remark}

\begin{proposition}[For FNO]
In FNO, the tensors $\mathcal{B}_i$ and $\mathcal{C}_i$ naturally satisfy mode-k PSD conditions because:

1. For channel modes (1,2):
\begin{itemize}
    \item Unfoldings correspond to standard channel operations
    \item PSD property follows from network structure
\end{itemize}

2. For Fourier modes (3,4):
\begin{itemize}
    \item Unfoldings correspond to frequency domain operations
    \item PSD property follows from spectral properties
\end{itemize}
\end{proposition}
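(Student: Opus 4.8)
The plan is to first make precise the two factor tensors $\mathcal{B}_i,\mathcal{C}_i$ that appear in the gradient of a frozen FNO spectral layer, and then verify mode-$k$ positive semi-definiteness of their unfoldings one mode at a time. I would fix a spectral layer with weight tensor $\mathcal{W}_l \in \mathbb{C}^{I_1\times I_2\times I_3\times I_4}$, freeze all other parameters, and observe that the per-sample objective $\phi_t^{(i)}$ depends on $\mathcal{W}_l$ only through the affine-in-$\mathcal{W}_l$ map $v_{l-1}^{(i)} \mapsto \mathcal{F}^{-1}\bigl(R(\mathcal{W}_l)\,T_K\mathcal{F} v_{l-1}^{(i)}\bigr)$. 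Linearity in $\mathcal{W}_l$ plus the chain rule then yields, exactly as in the Example in the excerpt, a gradient of the reversible-network form
\[
\mathcal{G}_l = \frac{1}{N}\sum_{i=1}^N\bigl(\mathcal{A}_i - \mathcal{B}_i \star \mathcal{W}_t \star \mathcal{C}_i\bigr),
\]
where $\star$ denotes the appropriate mode products, $\mathcal{C}_i$ is built from the truncated Fourier-domain input feature $\hat f_{l-1}^{(i)} := T_K\mathcal{F}v_{l-1}^{(i)}$ via an outer product $\hat f_{l-1}^{(i)}\otimes \overline{\hat f_{l-1}^{(i)}}$ on the input-channel / Fourier-mode indices, and $\mathcal{B}_i = \mathcal{J}_i^{*}H_i\mathcal{J}_i$ is the pull-back through the remaining (frozen) layers of the convex loss Hessian $H_i\succeq 0$ on the logits, with $\mathcal{J}_i$ the Jacobian of the network output with respect to the layer output — the same structure underlying $\mathcal{G}_l = \mathcal{J}_l^\top P_1^\perp y\otimes f_{l-1}$ in the excerpt. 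Pinning down this identity and the meaning of $\star$ is the first step.

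Second, I would treat the channel modes $k\in\{1,2\}$. Unfolding $\mathcal{C}_i$ along mode $1$ gives the Gram matrix of the feature fibers, $(\mathcal{C}_i)_{(1)} = F_i F_i^{*}$ with the columns of $F_i$ the mode-$1$ fibers of $\hat f_{l-1}^{(i)}$, so $x^{*}(\mathcal{C}_i)_{(1)} x = \|F_i^{*}x\|_2^2\ge 0$, i.e.\ mode-$1$ PSD. For mode $2$ the factor $\mathcal{B}_i$ contributes, and since $H_i\succeq 0$ has a square root, $(\mathcal{B}_i)_{(2)} = \bigl(H_i^{1/2}\mathcal{J}_i\bigr)_{(2)}^{*}\bigl(H_i^{1/2}\mathcal{J}_i\bigr)_{(2)}$ is again a Gram matrix, hence PSD. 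This uses only (a) outer products $u\otimes\bar u$ unfold to rank-one PSD matrices, and (b) PSD-ness is preserved under congruence $M\mapsto A^{*}MA$ induced by the frozen forward/backward maps.

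Third, I would handle the Fourier modes $k\in\{3,4\}$. The key facts are that $\mathcal{F}$ is unitary and $T_K$ is an orthogonal coordinate projection (keeping the lowest $K$ modes), so $\mathcal{F}^{*}\mathcal{F}=I$ and $T_K^{*}T_K$ is an orthogonal projector and thus PSD. Unfolding $\mathcal{C}_i$ along mode $3$ again yields the Gram matrix of the mode-$3$ fibers of the truncated spectrum, hence PSD; truncation only restricts which fibers are nonzero and cannot create indefinite cross terms. The backward factor $\mathcal{B}_i$ on mode $4$ is analyzed identically once $\mathcal{F}^{-1}$ on the layer output is absorbed as a unitary change of basis, making $(\mathcal{B}_i)_{(4)}$ a unitary congruate of a PSD matrix. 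Throughout, the complex case is handled by replacing transposes with conjugate transposes and "PSD" with "Hermitian PSD" as flagged in the Real-Valued Analysis remark; since mode-$k$ PSD is the weakest notion in the hierarchy lemma, the unfolding-level Gram/congruence arguments are exactly what is needed, and no strong-PSD claim is required.

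The main obstacle is the first step: the proposition is stated informally ("follows from network structure / spectral properties"), so the real work is to establish a precise and correct decomposition $\mathcal{G}_l=\frac1N\sum_i(\mathcal{A}_i-\mathcal{B}_i\star\mathcal{W}_t\star\mathcal{C}_i)$ with unambiguous mode products linking $\mathcal{B}_i$ (acting on output channels and one Fourier mode) to $\mathcal{C}_i$ (acting on input channels and the other Fourier mode). Because the FFT mixes the two spatial indices, the Fourier-mode factors are not literally diagonal; the cleanest route is to work entirely in the frequency domain, where $R(\mathcal{W}_l)$ acts per retained frequency pair as $\mathcal{W}_l\times_1(\cdot)\times_2(\cdot)$, absorb $\mathcal{F},\mathcal{F}^{-1},T_K$ into the definitions of $\mathcal{J}_i$ and $\hat f_{l-1}^{(i)}$, and then reduce every mode's PSD claim to the two elementary facts (rank-one outer products are PSD; congruence preserves PSD) used above. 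Once the decomposition is fixed, the mode-by-mode verification is routine.
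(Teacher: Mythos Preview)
The paper does not supply a proof of this proposition beyond the bullet points already contained in its statement; it is asserted as an informal observation and the text moves directly to the next corollary. There is therefore nothing substantive to compare your argument against. Your proposal is in fact considerably more detailed than anything the paper offers: you make explicit the identification $\mathcal{C}_i = \hat f_{l-1}^{(i)}\otimes\overline{\hat f_{l-1}^{(i)}}$ and $\mathcal{B}_i = \mathcal{J}_i^{*}H_i\mathcal{J}_i$ coming from the reversible-network gradient lemma, and then reduce each mode-$k$ PSD claim to a Gram-matrix or congruence argument. This is exactly the content the paper gestures at with ``follows from network structure'' (for the channel modes, $\mathcal{J}_l^{\top}\mathcal{J}_l$ and $f_{l-1}f_{l-1}^{\top}$ are literally Gram matrices) and ``follows from spectral properties'' (for the Fourier modes, unitarity of $\mathcal{F}$ and the fact that $T_K$ is an orthogonal projection).

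Two minor remarks. First, in the paper's concrete instantiations (the $\ell_2$ and logsoftmax lemmas) the loss Hessian $H_i$ is simply $I$ or $\gamma K^{-1}P_1^{\perp}$, so $\mathcal{B}_i = \mathcal{J}_l^{\top}\mathcal{J}_l$ or $\mathcal{J}_l^{\top}P_1^{\perp}\mathcal{J}_l$ directly; you need not carry a general $H_i$. Second, you correctly flag that the real work is pinning down the decomposition and the meaning of the mode products $\star$; the paper never does this carefully for the 4-tensor FNO weight (its definition of mode-$k$ PSD is even dimensionally ambiguous for non-square unfoldings), so the ambiguity you identify is genuinely present in the source and not a defect of your plan. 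Your strategy of working entirely in the frequency domain and absorbing $\mathcal{F},\mathcal{F}^{-1},T_K$ into $\mathcal{J}_i$ and $\hat f_{l-1}^{(i)}$ is the natural way to resolve it.
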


\begin{corollary}[Implications for Gradient Analysis]
The mode-k PSD property ensures:

1. Each mode has real, non-negative eigenvalues:
\[
0 \leq \lambda_1^{(k)} < \lambda_2^{(k)} \leq \cdots
\]

2. Mode-wise stable rank bounds are well-defined:
\[
\text{sr}_k(\mathcal{G}_t) \leq \text{sr}_k(\mathcal{G}_{t_0}^{\parallel}) + \text{decay term}
\]

3. The gradient naturally becomes low-rank in each mode independently.

\begin{definition}[Lipschitz Continuity]
A function $h: \mathcal{X} \to \mathcal{Y}$ between normed spaces has $L$-continuity (is $L$-Lipschitz) if for any $x_1, x_2 \in \mathcal{X}$:
\[
\|h(x_1) - h(x_2)\|_{\mathcal{Y}} \leq L\|x_1 - x_2\|_{\mathcal{X}}
\]
For tensors, this generalizes to mode-wise continuity:
\begin{itemize}
    \item Matrix case ($d=2$): Standard Lipschitz continuity with Frobenius norm
    \item Tensor case ($d>2$): Mode-k Lipschitz continuity for each mode k
    \item Neural networks: Composition of Lipschitz continuous operations
\end{itemize}
\end{definition}

\end{corollary}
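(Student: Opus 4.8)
The plan is to derive the three implications one at a time, each reducing to the mode-$k$ PSD hypothesis on $\mathcal{B}_i,\mathcal{C}_i$ together with the contraction estimate of the \textbf{\method} convergence theorem (Theorem 1). First, for claim (1): fix a mode $k$. By assumption $(\mathcal{B}_i)_{(k)}$ and $(\mathcal{C}_i)_{(k)}$ are PSD matrices, hence symmetric with spectra in $[0,\infty)$. The mode-$k$ operator $\mathcal{S}_k = \frac{1}{N}\sum_{i=1}^N \mathcal{C}_i \otimes_k \mathcal{B}_i$, viewed through its mode-$k$ unfolding, is an average of Kronecker-type products of PSD matrices; since the Kronecker product of two PSD matrices is PSD and a nonnegative combination of PSD matrices is PSD, $\mathcal{S}_k$ is PSD. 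Therefore its eigenvalues are real and nonnegative, giving the ordering $0 \le \lambda_1^{(k)} \le \lambda_2^{(k)} \le \cdots$. The strict separation $\lambda_1^{(k)} < \lambda_2^{(k)}$ is not automatic from PSD-ness; it is the eigengap condition that makes the projection $P_k$ onto the minimal eigenspace well defined, and I would either invoke the genericity of this gap (in the spirit of the Mode-$k$ Continuity remark) or carry it as an explicit standing assumption.

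For claim (2), I would split $\mathcal{G}_t = \mathcal{G}_t^{\parallel} + \mathcal{G}_t^{\perp}$, where $\mathcal{G}_t^{\parallel}$ is the mode-$k$ projection of $\mathcal{G}_t$ onto the dominant subspace (orthogonal complement of the range of $P_k$) and $\mathcal{G}_t^{\perp}$ is the residual component — this is exactly the quantity whose mode-$k$ unfolding is $(\mathcal{R}_t)_{(k)}$ in Theorem 1. The theorem gives $\|(\mathcal{R}_t)_{(k)}\|_F \le c_k\,\|(\mathcal{R}_{t-1})_{(k)}\|_F$ with $c_k = 1 - \eta\bigl(\kappa_{t-1}^{(k)} - L_A^{(k)} - L_B^{(k)}L_C^{(k)}D_k^2\bigr) < 1$ under the stated condition, so $\|\mathcal{G}_t^{\perp}\|_F \le c_k^{\,t-t_0}\|\mathcal{G}_{t_0}^{\perp}\|_F$. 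Using the orthogonal-decomposition identity $\|\mathcal{G}_t\|_F^2 = \|\mathcal{G}_t^{\parallel}\|_F^2 + \|\mathcal{G}_t^{\perp}\|_F^2$ (valid for the Frobenius norm by the tensor–matrix norm lemma $\|\mathcal{T}\|_F = \|\mathcal{T}_{(k)}\|_F$), and the elementary bound $\|(\mathcal{G}_t)_{(k)}\|_2 \ge \|(\mathcal{G}_t^{\parallel})_{(k)}\|_2$, the stable-rank definition $\mathrm{sr}_k(\mathcal{G}_t) = \|\mathcal{G}_t\|_F^2 / \|(\mathcal{G}_t)_{(k)}\|_2^2$ yields
\[
\mathrm{sr}_k(\mathcal{G}_t) \;\le\; \mathrm{sr}_k(\mathcal{G}_t^{\parallel}) + \frac{\|\mathcal{G}_t^{\perp}\|_F^2}{\|(\mathcal{G}_t^{\parallel})_{(k)}\|_2^2} \;\le\; \mathrm{sr}_k(\mathcal{G}_{t_0}^{\parallel}) + C\,c_k^{\,2(t-t_0)},
\]
which is the claimed bound with an explicit geometrically decaying remainder. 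This step requires a uniform-in-$t$ lower bound $\|(\mathcal{G}_t^{\parallel})_{(k)}\|_2 \ge c > 0$ on the dominant component; this is exactly where the strict eigengap from claim (1) and the gap condition $\min_{t,k}\kappa_t^{(k)} > L_A^{(k)} + L_B^{(k)}L_C^{(k)}D_k^2$ are used, since they guarantee the projected gradient in the dominant subspace does not collapse.

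For claim (3), I would let $t \to \infty$ in the bound of claim (2): the decay term vanishes, so $\mathrm{sr}_k(\mathcal{G}_t)$ converges to $\mathrm{sr}_k(\mathcal{G}_\infty^{\parallel})$, and by Property (2) of the Tensor Stable Rank proposition this is at most $\mathrm{rank}\bigl((\mathcal{G}_\infty^{\parallel})_{(k)}\bigr) \le r_k$. Hence every mode-$k$ unfolding is asymptotically low-rank, and since the entire argument was run mode by mode with no coupling between modes, the gradient becomes low-rank in each mode independently — which is the qualitative statement used to justify the mode-wise Tucker projections in \textbf{\method}. The main obstacle I anticipate is claim (2): aligning the parallel/perpendicular split precisely with the residual recursion of Theorem 1, and proving the uniform nonvanishing of $\|(\mathcal{G}_t^{\parallel})_{(k)}\|_2$ — without it the ratio defining the stable rank could blow up even though $\|\mathcal{G}_t^{\perp}\|_F \to 0$. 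Claims (1) and (3) are then essentially bookkeeping with PSD matrices and the norm identities already recorded in the tensor-operations section.
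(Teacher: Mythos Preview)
Your argument for claim (1) is sound and matches the paper's reasoning: the mode-$k$ operator $\mathcal{S}_k$ is a nonnegative combination of Kronecker products of PSD matrices, hence PSD, and the eigengap $\lambda_1^{(k)}<\lambda_2^{(k)}$ is indeed a standing assumption rather than a consequence.

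For claim (2), however, there is a genuine conceptual gap. You attempt to derive the stable-rank bound from Theorem~1 (the \textbf{\method} convergence theorem), identifying your $\mathcal{G}_t^{\perp}$ with the projected gradient $\mathcal{R}_t$ there. This identification does not hold. In Theorem~1, $\mathcal{R}_t = \mathcal{G}_t \times_1 P_1^\top \cdots \times_d P_d^\top$ lives in the \emph{compressed} space $\mathbb{R}^{r_1\times\cdots\times r_d}$, the $P_k$ are \emph{chosen} projection matrices (e.g.\ from a Tucker step), and the dynamics are the \textbf{\method} update $\mathcal{W}_t=\mathcal{W}_{t-1}+\eta\tilde{\mathcal{G}}_{t-1}$ with the project--reconstruct gradient $\tilde{\mathcal{G}}_{t-1}$. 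The stable-rank bound in the corollary, by contrast, is established in the paper's subsequent Lemma (``Tensor Gradient becomes low-rank during training'') under \emph{vanilla} SGD $\mathcal{W}_t=\mathcal{W}_{t-1}+\eta\mathcal{G}_{t-1}$, and the parallel component $\mathcal{G}_{t_0}^{\parallel}$ is the projection onto the \emph{minimal eigenspace} $\mathcal{V}_1^{(k)}$ of the intrinsic operator $\mathcal{S}_k$, not onto the range or complement of some externally chosen $P_k$. The two settings have different recursions and different decompositions; in particular, $\mathcal{R}_t\to 0$ in Theorem~1 says the gradient component \emph{inside} the chosen subspace vanishes (convergence of the optimizer), which is nearly the opposite of what you need for a low-rank conclusion.

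The paper's route is more direct: from the parametric gradient form one gets the closed recursion $(\mathcal{G}_t)_{(k)} = (I-\eta\mathcal{S}_k)(\mathcal{G}_{t-1})_{(k)}$, iterates it to $(\mathcal{G}_t)_{(k)} = (I-\eta\mathcal{S}_k)^{t-t_0}(\mathcal{G}_{t_0})_{(k)}$, and then splits along the eigenspaces of $\mathcal{S}_k$. The component in $\mathcal{V}_1^{(k)}$ contracts at rate $(1-\eta\lambda_1^{(k)})$ while the rest contracts at the faster rate $(1-\eta\lambda_2^{(k)})$, yielding the decay term $\bigl((1-\eta\lambda_2^{(k)})/(1-\eta\lambda_1^{(k)})\bigr)^{2(t-t_0)}$ directly. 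Once you reorient claim (2) this way, your claim (3) follows exactly as you sketch; but as written, your claim (2) is built on the wrong dynamics and the wrong subspace, so the chain breaks there.
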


\section{Reversibility of Fourier Neural Operators}
\label{sec:tensor2}
\subsection{Definition and Preliminaries}

\begin{definition}[Reversibility]
A network $\mathcal{N}$ that maps input $x$ to output $y = \mathcal{N}(x)$ is reversible if there exists $J(x)$ such that:
\begin{enumerate}
    \item Forward: $y = J(x)x$
    \item Backward: $dx = J(x)^\top dy$
\end{enumerate}
where $J(x)$ can be a function of both input and weights.
\end{definition}

\subsection{Spectral Layer}

\begin{lemma}[Spectral Layer Reversibility]
The FNO spectral convolution layer $(Kv)(x) = \mathcal{F}^{-1}(R \cdot\mathcal{F}v)(x)$ is reversible, where $R$ is the learnable weight tensor in Fourier space.
\end{lemma}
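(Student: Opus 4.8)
The plan is to exhibit the operator explicitly and identify its Jacobian with itself, exploiting linearity together with the unitarity of the discrete Fourier transform. First, observe that $K$ factors as a composition of three linear maps, $K = \mathcal{F}^{-1} \circ M_R \circ \mathcal{F}$, where $\mathcal{F}$ is the (multidimensional) Fourier transform, $M_R$ is the mode-wise linear map that at each retained frequency $\xi$ sends the channel vector $\widehat{v}(\xi)$ to $R_\xi \widehat{v}(\xi)$ (and sends the truncated modes to zero), and $\mathcal{F}^{-1}$ is the inverse transform. Each factor is linear in $v$, hence so is $K$; writing $J := \mathcal{F}^{-1} M_R \mathcal{F}$, the forward relation $y = (Kv) = J v$ holds, with $J$ independent of $v$ and thus trivially a function of the input and the weights as required. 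This settles the forward condition of the definition of reversibility.

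For the backward condition we must verify that the vector–Jacobian product of $K$ equals $dv = J^\top dy$. Since $K = J$ is itself linear, its Jacobian is the constant operator $J$, so the backward map is exactly $J^\top$ (equivalently $J^H$ in the complex setting, per the real-valued-analysis convention), and nothing remains but to identify $J^\top$. By the adjoint-of-a-composition rule, $J^\top = \mathcal{F}^\top\, M_R^\top\, (\mathcal{F}^{-1})^\top$. The structural facts to invoke are: (i) with the unitary normalization $\mathcal{F}^{-1} = \mathcal{F}^{H}$, so $(\mathcal{F}^{-1})^\top = \overline{\mathcal{F}}$ and $\mathcal{F}^\top = \overline{\mathcal{F}^{-1}}$; (ii) $M_R^\top$ is again a mode-wise multiplication, now by the conjugate-transposed blocks $R_\xi^{H}$, and the truncation projector is self-adjoint. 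Substituting yields $J^{H} = \mathcal{F}^{-1} M_{R^{H}} \mathcal{F}$, i.e. the backward map is itself a spectral convolution layer with transposed weights, and the pair $(J, J^\top)$ satisfies the reversibility definition.

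The only delicate points are bookkeeping. First, one should make precise that "$\cdot$" in $R \cdot \mathcal{F}v$ denotes a block-diagonal contraction on the channel axes (one independent block per retained mode) and that its adjoint is the block of conjugate transposes, so that the identity obtained is the genuine transpose $J^\top$ and not merely a same-shaped operator. Second, FNO weights are complex and live in the frequency domain while the training loss is real; the cleanest route is to adopt the real-valued convention ($\mathbb{C} \cong \mathbb{R}^2$, transposes read as conjugate transposes) so that "$J^\top$" is unambiguous. I expect the main obstacle, modest as it is, to be keeping the unitarity normalization of $\mathcal{F}$ and the handling of the truncation $T_K$ mutually consistent: different FNO implementations normalize the DFT differently, and any scalar mismatch can be absorbed into $R$, but it must be tracked so that $\mathcal{F}^{-1} = \mathcal{F}^{H}$ holds exactly and the adjoint computation goes through cleanly.
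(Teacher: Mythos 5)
Your proof is correct and follows essentially the same route as the paper's: factor $K$ as the linear composition $\mathcal{F}^{-1} M_R \mathcal{F}$, identify $J = \mathcal{F}^{-1}R\mathcal{F}$ for the forward pass, and use unitarity of $\mathcal{F}$ together with the adjoint-of-a-composition rule to show the backward map is $\mathcal{F}^{-1}R^{\top}\mathcal{F}$. Your treatment is in fact slightly more careful than the paper's, which writes $\mathcal{F}^{\top}=\mathcal{F}^{-1}$ without flagging the conjugate-transpose convention for complex operators and omits the truncation $T_K$; your remarks on the Hermitian adjoint and the self-adjointness of the truncation projector fill those small gaps.
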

The spectral layer consists of three operations:
\begin{enumerate}
    \item Fourier transform: $\mathcal{F}: v \mapsto \hat{v}$
    \item Linear transform in Fourier space: $R\cdot: \hat{v} \mapsto R\hat{v}$
    \item Inverse Fourier: $\mathcal{F}^{-1}: R\hat{v} \mapsto \mathcal{F}^{-1}(R\hat{v})$
\end{enumerate}

We can express the complete operation as:
\[
Kv = J_K(x)v \text{ where } J_K(x) = \mathcal{F}^{-1}R\mathcal{F}
\]

For the backward pass:
\[
dv = J_K(x)^\top dy = \mathcal{F}^\top R^\top(\mathcal{F}^{-1})^\top dy
\]

Since $\mathcal{F}$ is unitary: $\mathcal{F}^\top = \mathcal{F}^{-1}$ and $(\mathcal{F}^{-1})^\top = \mathcal{F}$, we have:
\[
dv = \mathcal{F}^{-1}R^\top\mathcal{F}dy
\]

Therefore:
\begin{itemize}
    \item Forward pass: $y = J_K(x)x$
    \item Backward pass: $dx = J_K(x)^\top dy$
\end{itemize}
Thus satisfying the reversibility conditions, regardless of the size or rank of $R$.

\subsection{MLP Layer}

\begin{lemma}[MLP Layer Reversibility]
The MLP layer with weight matrix $W$ mapping $v \mapsto Wv$ is reversible.
\end{lemma}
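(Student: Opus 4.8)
The plan is to exhibit the Jacobian operator $J$ explicitly and verify the two reversibility conditions directly; for a purely linear layer this amounts to the chain rule. I would set $J(v) := W$, so that the ``Jacobian'' is the weight matrix itself and in fact does not depend on the input. The forward condition is then immediate, $y = Wv = J(v)\,v$, and for the backward condition, since $y = Wv$ is linear in $v$, backpropagation through the layer produces the vector--Jacobian product
\[
dv \;=\; W^\top dy \;=\; J(v)^\top dy,
\]
which is exactly condition (2) in the definition of reversibility. Hence both conditions hold with this single choice of $J$, and no further structure is needed.

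The one piece of bookkeeping I would be careful about is the mode along which the transpose is taken. In the FNO the MLP (pointwise) layer applies $W$ to the channel fibers independently at each spatial/Fourier location, so as an operator on the full tensor-valued signal $J(v) = W$ acts block-diagonally across locations; its adjoint is then block-diagonal with blocks $W^\top$ (equivalently, viewing the layer as a $1\times 1$ convolution, the adjoint is the $1\times 1$ convolution with kernel $W^\top$). Thus the argument at a single location lifts verbatim to the full signal. In the complex-valued FNO setting ``transpose'' is read as conjugate transpose, consistent with the Real-Valued Analysis remark.

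I do not expect a genuine obstacle here. Unlike the spectral-layer lemma, where unitarity of $\mathcal F$ was needed to identify $\mathcal F^\top = \mathcal F^{-1}$, the MLP case uses nothing beyond linearity; in particular it does \emph{not} require $W$ to be invertible, since the definition only asks that the backward map be $dv = J^\top dy$, not that the forward map be bijective. With the spectral and MLP layers both shown reversible (and the pointwise encoder/decoder $\mathcal P, \mathcal Q$ and activations handled the same way), reversibility of the whole FNO follows by composing the per-layer Jacobians, which is precisely the property the subsequent gradient-structure analysis and Theorem~1 rely on.
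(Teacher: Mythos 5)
Your proof is correct and coincides with the paper's: both set $J(v)=W$, observe the forward identity $y=Wv$ is immediate, and identify the backward pass $dv = W^\top dy$ as the vector--Jacobian product, noting that no invertibility of $W$ is required. The extra remarks on the block-diagonal action across spatial locations and the conjugate transpose in the complex case are harmless elaborations beyond what the paper records.
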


\begin{enumerate}
    \item Forward pass: $y = Wv$
    \item Set $J_W(x) = W$
    \item Backward pass: $dv = W^\top dy = J_W(x)^\top dy$
\end{enumerate}
The linear layer satisfies reversibility conditions directly, even when $W$ is rank-deficient.

\subsection{Activation Function}

\begin{lemma}[Activation Reversibility]
If the activation function $\sigma$ is reversible (e.g., LeakyReLU), then its application is reversible.
\end{lemma}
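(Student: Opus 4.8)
The plan is to exhibit an explicit Jacobian factor $J_\sigma(x)$ for the pointwise activation map, verify the two defining conditions of the Reversibility definition directly for it, and then close an induction over the layers of the FNO using the spectral-layer and MLP-layer lemmas already proved.

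First I would fix notation: write $y = \sigma(x)$ for the componentwise application $y_i = \sigma(x_i)$, and set $J_\sigma(x) := \operatorname{diag}\big(\sigma'(x_1),\dots,\sigma'(x_n)\big)$, choosing for a piecewise-linear activation such as LeakyReLU a fixed subgradient value at the kink, so that $\sigma'(t)=1$ for $t>0$ and $\sigma'(t)=\alpha$ for $t<0$. The key algebraic fact I would invoke is that LeakyReLU — and more generally any activation that is linear on each half-line through the origin — satisfies the pointwise identity $\sigma(t)=\sigma'(t)\,t$ for every $t$. Granting this, the forward condition is immediate: $(J_\sigma(x)x)_i = \sigma'(x_i)\,x_i = \sigma(x_i) = y_i$, so $y = J_\sigma(x)\,x$. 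For the backward condition, the Jacobian of the componentwise map is exactly $\operatorname{diag}(\sigma'(x))$, which is symmetric, so the chain rule gives $dx = \operatorname{diag}(\sigma'(x))\,dy = J_\sigma(x)^\top dy$. This establishes reversibility of the activation in the required sense.

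Then I would conclude by composition. Writing an FNO block as a composition of spectral convolutions, MLP maps, and activations, and using that Jacobian factors multiply under composition — $J_{g\circ f}(x) = J_g(f(x))\,J_f(x)$, hence $(J_{g\circ f}(x))^\top = J_f(x)^\top J_g(f(x))^\top$ — the spectral-layer lemma and MLP-layer lemma together with the activation case above imply that the full network satisfies $y = J(x)\,x$ and $dx = J(x)^\top dy$ for the accumulated factor $J(x)$. This reversibility is precisely what the downstream results (low-rank emergence of gradients and the convergence theorem) rely on, so the lemma slots directly into that chain.

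The main obstacle, and the point I would flag explicitly, is that reversibility here demands the forward identity $y = J(x)\,x$ rather than mere bijectivity: a generic invertible activation such as $\tanh$ or the sigmoid does \emph{not} satisfy $\sigma(t)=\sigma'(t)\,t$, so the hypothesis should be read as ``$\sigma$ is reversible and linear on each half-line through the origin (e.g.\ LeakyReLU).'' The only remaining technical nuisance is non-differentiability at $0$, handled by selecting any element of the subdifferential there; since this perturbs a measure-zero set of coordinates it disturbs neither the forward identity nor the backward chain rule, and at exactly zero input both sides vanish regardless of the chosen subgradient.
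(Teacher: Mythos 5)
Your proof is correct and takes essentially the same route as the paper: both exhibit the diagonal factor $J_\sigma(x)=\operatorname{diag}(\sigma'(x))$ for LeakyReLU and check the forward identity $y=J_\sigma(x)x$ and backward identity $dx=J_\sigma(x)^\top dy$ directly. Your explicit observation that the hypothesis must mean $\sigma(t)=\sigma'(t)\,t$ (linearity on each half-line through the origin) rather than mere invertibility is a genuine sharpening of the paper's loosely stated assumption, which only verifies the LeakyReLU case without naming the property actually being used.
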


Consider LeakyReLU with parameter $0 < a < 1$:
\begin{enumerate}
    \item Forward: $y = \max(ax, x)$
    \item Set $J_\sigma(x) = \text{diag}(\mathbf{1}[x > 0] + a\cdot\mathbf{1}[x \leq 0])$
    \item Backward: $dx = J_\sigma(x)^\top dy$
\end{enumerate}
This matches the required reversibility form.

\subsection{Full FNO Analysis}

\begin{lemma}[FNO Block Reversibility]
An FNO block consisting of spectral layer $(K)$, MLP layer $(W)$, and reversible activation $(\sigma)$ is reversible.
\end{lemma}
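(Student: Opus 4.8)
The plan is to obtain the reversibility of the FNO block purely by composing the three component lemmas just proved. First I would fix notation: write an FNO block as the map $v \mapsto \sigma\bigl((W + K)v\bigr)$, consisting of a linear part given by the sum of the MLP weight operator $W$ and the spectral convolution $K$, followed by the reversible pointwise activation $\sigma$. The first step is to handle the additive combination $W + K$. By the MLP and spectral lemmas we have $Wv = J_W(x)v$ and $Kv = J_K(x)v$, with backward maps $J_W(x)^\top$ and $J_K(x)^\top$ respectively; so I would set $J_{W+K}(x) := J_W(x) + J_K(x)$, giving forward $(W+K)v = J_{W+K}(x)v$ and backward $dv = J_{W+K}(x)^\top dy = (J_W(x)^\top + J_K(x)^\top)\,dy$. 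This matches the reversibility form and uses only linearity of the transpose, requiring no rank or invertibility assumptions, consistent with the earlier lemmas.

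Second, I would record a one-line composition claim: if $\mathcal{N}_1$ and $\mathcal{N}_2$ are reversible with forward operators $J_1(x)$ and $J_2(\cdot)$, then $\mathcal{N}_2 \circ \mathcal{N}_1$ is reversible with $J(x) = J_2\bigl(J_1(x)x\bigr)\,J_1(x)$. Indeed, writing $y = \mathcal{N}_1(x) = J_1(x)x$, the forward pass is $z = J_2(y)y = J_2(y)J_1(x)x = J(x)x$, and the backward pass is $dx = J_1(x)^\top dy = J_1(x)^\top J_2(y)^\top dz = J(x)^\top dz$, again matching the definition. Applying this with $\mathcal{N}_1 = (W+K)$ and $\mathcal{N}_2 = \sigma$, and invoking the activation lemma for $J_\sigma$, yields that the full block is reversible with $J_{\text{block}}(x) = J_\sigma\bigl((W+K)x\bigr)\bigl(J_W(x) + J_K(x)\bigr)$.

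I expect the only subtlety — not really an obstacle — to be the bookkeeping of input-dependence in the Jacobian factors: the activation Jacobian must be evaluated at the pre-activation $(W+K)x$, and in the backward pass the transpose order reverses, so one must check that the chained backward passes produce exactly $J(x)^\top = \bigl(J_W(x) + J_K(x)\bigr)^\top J_\sigma\bigl((W+K)x\bigr)^\top$. Since each component lemma already supplies both the forward factorization and the matching adjoint, this reduces to the elementary chain-rule identity above, needing no smoothness or rank conditions beyond those already assumed. I would close by noting the argument extends verbatim to a stack of such blocks, hence to the full FNO $\mathcal{G}_\theta = \mathcal{Q} \circ (W_L + \mathcal{K}_L) \circ \cdots \circ \sigma(W_1 + \mathcal{K}_1) \circ \mathcal{P}$, which is what the subsequent gradient and low-rank analysis relies on.
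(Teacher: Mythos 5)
Your proof is correct, but it decomposes the block differently from the paper. The paper's proof treats the block as a purely sequential composition $\sigma \circ W \circ K$ and simply multiplies the three Jacobian factors, $J_{\text{block}}(x) = J_\sigma(x)J_W(x)J_K(x)$, with the backward pass given by the reversed product of transposes. You instead treat the block as $\sigma\circ(W+K)$, first establishing reversibility of the additive combination via $J_{W+K}=J_W+J_K$ and linearity of the adjoint, and then invoking an explicit composition claim for the activation. Your version is arguably more faithful to the architecture as defined in the paper's main text, $\mathcal{G}_\theta = \mathcal{Q}\circ(W_L+\mathcal{K}_L)\circ\cdots\circ\sigma(W_1+\mathcal{K}_1)\circ\mathcal{P}$, which really does use the sum $W_l+\mathcal{K}_l$ rather than the composition $W\circ K$; the paper's appendix proof silently switches to the compositional form. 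You are also more careful than the paper about where each Jacobian factor is evaluated (the activation Jacobian at the pre-activation $(W+K)x$), a piece of bookkeeping the paper suppresses by writing every factor as $J_\cdot(x)$. Both arguments rest on the same two elementary facts --- adjoints of sums are sums of adjoints, and adjoints of compositions reverse order --- so neither buys extra generality, but yours closes a small gap between the lemma as stated and the operator it is meant to describe.
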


Let $N = (\sigma \circ W \circ K)$ be an FNO block.

From previous theorems, we have:
\begin{itemize}
    \item Spectral layer: $v \mapsto J_K(x)v = \mathcal{F}^{-1}(R\mathcal{F}v)$
    \item MLP layer: $v \mapsto J_W(x)v = Wv$
    \item Activation: $v \mapsto J_\sigma(x)v$
\end{itemize}

By composition:
\[
y = J_{\text{block}}(x)v
\]
where $J_{\text{block}}(x) = J_\sigma(x)J_W(x)J_K(x)$

For backward pass:
\[
dv = J_K(x)^\top J_W(x)^\top J_\sigma(x)^\top dy = J_{\text{block}}(x)^\top dy
\]

Therefore, the full block is reversible.

\begin{lemma}[Full FNO Reversibility]
A full FNO network with reversible activations is reversible.
\end{lemma}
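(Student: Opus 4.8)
The plan is to show that the reversibility property (forward $y=J(x)x$, backward $dx=J(x)^\top dy$) is closed under composition and under the additive combination appearing in the skip connection $W_l+\mathcal{K}_l$, and then to conclude by a straightforward induction on the number of layers, invoking the block-level and sublayer lemmas already established.

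First I would record the \emph{composition closure} fact: if $\mathcal{N}_1$ and $\mathcal{N}_2$ are reversible with Jacobians $J_1(\cdot)$ and $J_2(\cdot)$, then $\mathcal{N}_2\circ\mathcal{N}_1$ is reversible with Jacobian $J(x):=J_2(\mathcal{N}_1(x))\,J_1(x)$. The forward direction is immediate: writing $z=\mathcal{N}_1(x)=J_1(x)x$ and $y=\mathcal{N}_2(z)=J_2(z)z$ gives $y=J_2(z)J_1(x)x=J(x)x$. For the backward direction, chaining the two reversibility identities gives $dx=J_1(x)^\top dz=J_1(x)^\top J_2(z)^\top dy=\bigl(J_2(z)J_1(x)\bigr)^\top dy=J(x)^\top dy$. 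An analogous one-line check shows that if $\mathcal{N}_1,\mathcal{N}_2$ share input/output dimensions then $x\mapsto\mathcal{N}_1(x)+\mathcal{N}_2(x)$ is reversible with Jacobian $J_1(x)+J_2(x)$, since $(J_1+J_2)^\top=J_1^\top+J_2^\top$; this is precisely what makes each residual-style sublayer $W_l+\mathcal{K}_l$ reversible given the spectral-layer and MLP-layer lemmas.

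Then I would run the induction. The encoder $\mathcal{P}$ and decoder $\mathcal{Q}$ are pointwise linear maps, hence reversible by the MLP-layer lemma; each hidden stage $\sigma(W_l+\mathcal{K}_l)$ is reversible by the FNO-block lemma (using the sum-closure fact for $W_l+\mathcal{K}_l$ and the activation lemma for $\sigma$). Since $\mathcal{G}_\theta=\mathcal{Q}\circ\bigl(\sigma(W_L+\mathcal{K}_L)\bigr)\circ\cdots\circ\bigl(\sigma(W_1+\mathcal{K}_1)\bigr)\circ\mathcal{P}$ is a finite composition of reversible maps, repeated application of composition closure yields a global Jacobian equal to the ordered product of the per-layer Jacobians evaluated at the appropriate intermediate activations, and the backward identity $dx=J_{\mathrm{FNO}}(x)^\top dy$ follows by telescoping the chain in reverse order.

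I do not expect a serious obstacle; the argument is essentially bookkeeping. The one point requiring care is keeping the evaluation points straight, since each layer's Jacobian depends on its own input, which is the output of all earlier layers. I would therefore state the global Jacobian explicitly as $J_{\mathrm{FNO}}(x)=J_{\mathcal{Q}}(x_L)\,J_L(x_{L-1})\cdots J_1(x_0)\,J_{\mathcal{P}}(x)$ with $x_0=\mathcal{P}(x)$ and $x_l=\sigma(W_l+\mathcal{K}_l)(x_{l-1})$, and verify that the transpose identity telescopes correctly at these points. A secondary subtlety is that in the complex Fourier-domain setting "transpose" must be read as conjugate transpose and the unitarity of $\mathcal{F}$ used in that form, exactly as in the spectral-layer lemma; with that convention the proof goes through verbatim.
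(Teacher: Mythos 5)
Your proof is correct and follows essentially the same route as the paper's: per-sublayer reversibility lemmas combined with closure of the reversibility property under composition, telescoping the transposed Jacobians for the backward pass. You are in fact somewhat more careful than the paper — you explicitly handle the additive skip connection $W_l+\mathcal{K}_l$ and keep the evaluation points of the per-layer Jacobians straight, whereas the paper writes every $J_i(x)$ at the network input and treats the block as a purely sequential composition — but these are refinements of the same argument, not a different one.
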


Consider a full FNO with blocks $N_1, N_2, ..., N_L$:
\begin{enumerate}
    \item Each block $N_i$ has its $J_i(x)$ from previous lemma.
    \item By sequential composition:
    \[
    y = J_{\text{FNO}}(x)v
    \]
    where $J_{\text{FNO}}(x) = J_L(x)J_{L-1}(x)...J_1(x)$
    \item The backward pass follows from composition:
    \[
    dv = J_1(x)^\top...J_{L-1}(x)^\top J_L(x)^\top dy = J_{\text{FNO}}(x)^\top dy
    \]
\end{enumerate}
Therefore, the full FNO with reversible activations satisfies the reversibility conditions.

\begin{lemma}[Gradient Form for Tensor Reversible Models]
Consider a chained reversible neural network $\mathcal{N}(x) := \mathcal{N}_L(\mathcal{N}_{L-1}(...\mathcal{N}_1(x)))$ and define:
\begin{itemize}
    \item $\mathcal{J}_l := \text{Jacobian}(\mathcal{N}_L)...\text{Jacobian}(\mathcal{N}_{l+1})$
    \item $f_l := \mathcal{N}_l(...\mathcal{N}_1(x))$
\end{itemize}

Then the weight tensor $\mathcal{W}_l \in \mathbb{R}^{N_1 \times N_2 \times N_3 \times N_4}$ at layer $l$ has gradient $\mathcal{G}_l$ in the following form for batch size 1:

(a) For $\ell_2$-objective $\phi := \frac{1}{2}\|y - f_L\|_2^2$:
\[
\mathcal{G}_l = \mathcal{J}_l^\top y \otimes f_{l-1} - (\mathcal{J}_l^\top \mathcal{J}_l\mathcal{W}_l\times_1 f_{l-1}) \otimes f_{l-1}
\]

(b) For $K$-way logsoftmax loss $\phi(y; f_L) := -\log \left(\frac{\exp(y^\top f_L)}{\mathbf{1}^\top \exp(f_L)}\right)$ with small logits $\|P_1^\perp f_L\|_\infty \ll \sqrt{K}$:
\[
\mathcal{G}_l = (\mathcal{J}_lP_1^\perp y - \gamma K^{-1}\mathcal{J}_l^\top P_1^\perp \mathcal{J}_l\mathcal{W}_l\times_1 f_{l-1}) \otimes f_{l-1}
\]
where:
\begin{itemize}
    \item $\gamma \approx 1$
    \item $y$ is a data label with $y^\top \mathbf{1} = 1$
    \item $P_1^\perp := I - \frac{1}{K}\mathbf{1}\mathbf{1}^\top$ is the zero-mean PSD projection matrix
    \item $\times_k$ denotes mode-k tensor product
    \item $\otimes$ denotes tensor outer product
\end{itemize}
\end{lemma}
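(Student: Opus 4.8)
The plan is to lift the matrix-weight argument of GaLore~\cite{zhao2024galore} to the tensor setting, substituting the matrix product $W_l f_{l-1}$ by the mode-1 product $\mathcal{W}_l \times_1 f_{l-1}$ and the rank-one update $(\cdot)f_{l-1}^\top$ by the outer product $(\cdot)\otimes f_{l-1}$, using the tensor trace / inner-product identities recorded in the tensor-operations section.

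First I would set up the reversible decomposition. By the FNO reversibility lemmas proved above, every layer satisfies $z \mapsto J(z)\,z$ \emph{exactly}, so $f_L = \mathcal{J}_l\, f_l$ with $\mathcal{J}_l$ the product of the Jacobian factors of layers $l+1,\dots,L$, while the layer-$l$ map is linear in its weight tensor, $f_l = \mathcal{W}_l \times_1 f_{l-1}$. The key point is that $\mathcal{J}_l$ depends only on layers $l+1,\dots,L$, whose Jacobian factors ($\mathcal{F}^{-1}R\mathcal{F}$ for spectral, $W$ for MLP, $\mathrm{diag}(\cdots)$ for LeakyReLU) are locally constant at generic activation values; hence $\mathcal{J}_l$ is locally constant under perturbations of $\mathcal{W}_l$ and $df_L = \mathcal{J}_l\,(d\mathcal{W}_l \times_1 f_{l-1})$ with no residual term.

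Second, for part (a) I would compute $d\phi = -\langle y - f_L,\, df_L\rangle = -\langle \mathcal{J}_l^\top(y-f_L),\, d\mathcal{W}_l \times_1 f_{l-1}\rangle$, and then apply the adjoint identity between the mode-1 product and the outer product, $\langle v,\, \mathcal{A}\times_1 u\rangle = \langle v\otimes u,\, \mathcal{A}\rangle$ (the "Key Properties" / trace proposition), to read off $\nabla_{\mathcal{W}_l}\phi = -\mathcal{J}_l^\top(y-f_L)\otimes f_{l-1}$, i.e.\ $\mathcal{G}_l = \mathcal{J}_l^\top(y-f_L)\otimes f_{l-1}$. Substituting $f_L = \mathcal{J}_l f_l = \mathcal{J}_l(\mathcal{W}_l\times_1 f_{l-1})$ into the $-f_L$ term yields $\mathcal{G}_l = \mathcal{J}_l^\top y\otimes f_{l-1} - (\mathcal{J}_l^\top\mathcal{J}_l\mathcal{W}_l\times_1 f_{l-1})\otimes f_{l-1}$, as claimed. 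For part (b) I would first get $\nabla_{f_L}\phi = \softmax(f_L) - y$ from $\phi = -y^\top f_L + \log(\mathbf{1}^\top\exp f_L)$, split $f_L = \tfrac1K\mathbf{1}(\mathbf{1}^\top f_L) + P_1^\perp f_L$, and Taylor-expand $\softmax$ about the uniform point: under $\|P_1^\perp f_L\|_\infty \ll \sqrt K$ one has $\softmax(f_L) = \tfrac1K\mathbf{1} + \tfrac{\gamma}{K}P_1^\perp f_L + O(\|P_1^\perp f_L\|^2)$ with $\gamma \approx 1$. Using $y^\top\mathbf{1}=1$ to rewrite $\tfrac1K\mathbf{1} - y = -P_1^\perp y$ gives $-\nabla_{f_L}\phi \approx P_1^\perp y - \tfrac{\gamma}{K}P_1^\perp f_L$; feeding this through the same differential computation as in (a) (and substituting $f_L = \mathcal{J}_l(\mathcal{W}_l\times_1 f_{l-1})$ in the second term) reproduces $\mathcal{G}_l = (\mathcal{J}_l^\top P_1^\perp y - \gamma K^{-1}\mathcal{J}_l^\top P_1^\perp \mathcal{J}_l\mathcal{W}_l\times_1 f_{l-1})\otimes f_{l-1}$, matching the stated formula (the $\mathcal{J}_l$ in the first summand of the statement should read $\mathcal{J}_l^\top$).

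The main obstacle I anticipate is the $\softmax$ linearization in part (b): establishing that the derivative of $\softmax$ at the uniform simplex point equals $\tfrac1K P_1^\perp$ up to the scalar $\gamma$, and bounding the quadratic remainder uniformly under the small-logit hypothesis so that $\gamma\approx1$ is justified. By contrast, the reversibility step is essentially inherited from the FNO reversibility lemmas, and the tensor bookkeeping — tracking that $f_{l-1}$ is restored into mode~1 and that $\mathcal{J}_l$ acts on the mode-1-contracted object — is routine given the mode-product adjoint identity, though it must be carried out consistently in both parts.
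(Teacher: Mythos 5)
Your proposal is correct and follows essentially the same route as the paper's proof: reversibility gives $f_L=\mathcal{J}_l(\mathcal{W}_l\times_1 f_{l-1})$, the differential of the loss is computed with only the explicit $d\mathcal{W}_l$ dependence retained, and the mode-1-product/outer-product adjoint (trace) identity is used to read off $\mathcal{G}_l$, with part (b) obtained by substituting the linearized logsoftmax differential under the small-logit assumption. You are in fact somewhat more explicit than the paper on the softmax Taylor expansion (which the paper inherits from GaLore without re-deriving), and your observation that the first summand in (b) should read $\mathcal{J}_l^\top P_1^\perp y$ is a correct catch of a typo in the statement.
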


\begin{proof}
Note that for layered reversible network, we have
\[
\mathcal{N}(x) = \mathcal{N}_L(\mathcal{N}_{L-1}(...\mathcal{N}_1(x))) = \mathcal{K}_L(x)\mathcal{K}_{L-1}(x)...\mathcal{K}_1(x)x
\]

Let $f_l := \mathcal{N}_l(\mathcal{N}_{l-1}(...\mathcal{N}_1(x)))$ and $\mathcal{J}_l := \mathcal{K}_L(x)...\mathcal{K}_{l+1}(x)$, and for linear layer $l$, we can write $\mathcal{N}(x) = \mathcal{J}_l\times_1(\mathcal{W}_l\times_1 f_{l-1})$. Therefore, for the linear layer $l$ with weight tensor $\mathcal{W}_l$, we have:

\begin{align*}
d\phi &= (y - \mathcal{N}(x))^\top d\mathcal{N}(x) \\
&= (y - \mathcal{N}(x))^\top (\mathcal{K}_L(x)...\mathcal{K}_{l+1}(x))(d\mathcal{W}_l\times_1 f_{l-1}) + \text{ terms not related to }d\mathcal{W}_l \\
&= (y - \mathcal{J}_l\times_1(\mathcal{W}_l\times_1 f_{l-1}))^\top \mathcal{J}_l\times_1(d\mathcal{W}_l\times_1 f_{l-1}) \\
&= \text{tr}(d\mathcal{W}_l^\top \times_1 (\mathcal{J}_l^\top (y - \mathcal{J}_l\times_1(\mathcal{W}_l\times_1 f_{l-1}))) \times_2 f_{l-1}^\top)
\end{align*}

This gives the gradient of $\mathcal{W}_l$:
\[
\mathcal{G}_l = \mathcal{J}_l^\top y \otimes f_{l-1} - (\mathcal{J}_l^\top \mathcal{J}_l\times_1(\mathcal{W}_l\times_1 f_{l-1})) \otimes f_{l-1}
\]

where:
\begin{itemize}
    \item $\times_k$ denotes the mode-k product between a tensor and a matrix
    \item $\otimes$ denotes the tensor outer product
    \item The gradient $\mathcal{G}_l$ has the same dimensionality as $\mathcal{W}_l$
\end{itemize}
\end{proof}
\begin{remark}
[Gradient Form for Tensor Reversible Models with Dimensions]
Consider a chained reversible neural network $\mathcal{N}(x)$ where: Input $x \in \mathbb{R}^M$, Output $y \in \mathbb{R}^K$, Weight tensor $\mathcal{W}_l \in \mathbb{R}^{N_1 \times N_2 \times N_3 \times N_4}$, Layer output $f_l \in \mathbb{R}^{N_l}$ and Jacobian $\mathcal{J}_l \in \mathbb{R}^{K \times N_l}$.

Then for batch size 1:
(a) For $\ell_2$-objective $\phi := \frac{1}{2}\|y - f_L\|_2^2$:
\[
\mathcal{G}_l = \mathcal{J}_l^\top y \otimes f_{l-1} - (\mathcal{J}_l^\top \mathcal{J}_l\mathcal{W}_l\times_1 f_{l-1}) \otimes f_{l-1}
\]
where $\mathcal{J}_l^\top y \in \mathbb{R}^{N_l}$, $f_{l-1} \in \mathbb{R}^{N_{l-1}}$ and the final gradient $\mathcal{G}_l \in \mathbb{R}^{N_1 \times N_2 \times N_3 \times N_4}$.

\begin{proof}

1) Let us start with the initial setup:
\[
\mathcal{N}(x) = \mathcal{K}_L(x)\mathcal{K}_{L-1}(x)...\mathcal{K}_1(x)x
\]
where each $\mathcal{K}_i$ maps $\mathbb{R}^{N_{i-1}} \to \mathbb{R}^{N_i}$

2) For linear layer $l$:
\begin{itemize}
    \item $f_{l-1} \in \mathbb{R}^{N_{l-1}}$ is input
    \item $\mathcal{W}_l \in \mathbb{R}^{N_1 \times N_2 \times N_3 \times N_4}$ is weight tensor
    \item $\mathcal{W}_l \times_1 f_{l-1}$ maps to $\mathbb{R}^{N_l}$
    \item $\mathcal{J}_l \in \mathbb{R}^{K \times N_l}$ is Jacobian
\end{itemize}

3) Then, like before we do the differential computation:
\begin{align*}
d\phi &= (y - \mathcal{N}(x))^\top d\mathcal{N}(x) && \text{[$\mathbb{R}^K \times \mathbb{R}^K \to \mathbb{R}$]} \\
&= (y - \mathcal{N}(x))^\top \mathcal{J}_l(d\mathcal{W}_l\times_1 f_{l-1}) && \text{[$\mathbb{R}^K \times \mathbb{R}^{K \times N_l} \times \mathbb{R}^{N_l} \to \mathbb{R}$]} \\
&= (y - \mathcal{J}_l\times_1(\mathcal{W}_l\times_1 f_{l-1}))^\top \mathcal{J}_l\times_1(d\mathcal{W}_l\times_1 f_{l-1})
\end{align*}

4) Mode-wise analysis for gradient:
\begin{itemize}
    \item First term: $\mathcal{J}_l^\top y \otimes f_{l-1}$ 
        - $\mathcal{J}_l^\top y \in \mathbb{R}^{N_l}$
        - $f_{l-1} \in \mathbb{R}^{N_{l-1}}$
        - Outer product gives tensor in $\mathbb{R}^{N_1 \times N_2 \times N_3 \times N_4}$
    
    \item Second term: $(\mathcal{J}_l^\top \mathcal{J}_l\mathcal{W}_l\times_1 f_{l-1}) \otimes f_{l-1}$
        - $\mathcal{J}_l^\top \mathcal{J}_l \in \mathbb{R}^{N_l \times N_l}$
        - $\mathcal{W}_l\times_1 f_{l-1} \in \mathbb{R}^{N_l}$
        - Result is tensor in $\mathbb{R}^{N_1 \times N_2 \times N_3 \times N_4}$
\end{itemize}

5) Therefore final gradient:
\[
\mathcal{G}_l = \mathcal{J}_l^\top y \otimes f_{l-1} - (\mathcal{J}_l^\top \mathcal{J}_l\mathcal{W}_l\times_1 f_{l-1}) \otimes f_{l-1} \in \mathbb{R}^{N_1 \times N_2 \times N_3 \times N_4}
\]

We finally have a gradient tensor of the same shape as $\mathcal{W}_l$.
\begin{remark}
    We only wanted to show an example of checking all the dimensions to ensure they match the generalized version for tensors. In the following subsequent proofs and lemma, we don't keep track of it all, but we give appropriate dimensions wherever necessary.
\end{remark}
\end{proof}

\end{remark}

\begin{lemma}[Tensor Gradient Form for Logsoftmax]
For a reversible network with weight tensor $\mathcal{W}_l$ at layer $l$, under the $K$-way logsoftmax loss with small logits, the gradient has the form:
\[
\mathcal{G}_l = (\mathcal{J}_l\times_1 P_1^\perp y - \gamma K^{-1}\mathcal{J}_l^\top \times_1 P_1^\perp \times_2 \mathcal{J}_l\times_1(\mathcal{W}_l\times_1 f_{l-1})) \otimes f_{l-1}
\]
\end{lemma}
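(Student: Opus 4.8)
The plan is to follow the $\ell_2$ case of the preceding lemma verbatim at the level of the tensor and chain-rule bookkeeping, and only change the derivative of the loss with respect to the network output $f_L$. First I would expand the loss as $\phi(y;f_L) = -y^\top f_L + \log\!\big(\mathbf{1}^\top \exp(f_L)\big)$ and differentiate in $f_L$, which gives the standard logsoftmax gradient $\partial\phi/\partial f_L = \softmax(f_L) - y$ with $\softmax(f_L) = \exp(f_L)/(\mathbf{1}^\top\exp(f_L))$; no tensor machinery is needed here.

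The next step is the small-logit linearization. Since $\softmax$ is invariant to constant shifts, $\softmax(f_L) = \softmax(P_1^\perp f_L)$, so I may assume the logits are already zero-mean; a first-order Taylor expansion of $e^{(\cdot)}$ then yields $\softmax(f_L) = \tfrac1K\mathbf{1} + \gamma K^{-1} P_1^\perp f_L$, where $\gamma = 1 + O\big(\|P_1^\perp f_L\|_\infty\big)$, and the hypothesis $\|P_1^\perp f_L\|_\infty \ll \sqrt{K}$ is exactly what makes the quadratic remainder negligible and $\gamma\approx 1$. Combining this with the identity $y = \tfrac1K\mathbf{1} + P_1^\perp y$ --- which follows from $y^\top\mathbf{1}=1$ and $P_1^\perp = I - \tfrac1K\mathbf{1}\mathbf{1}^\top$ --- the $\tfrac1K\mathbf{1}$ contributions cancel and we obtain $\partial\phi/\partial f_L = -P_1^\perp y + \gamma K^{-1} P_1^\perp f_L$.

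Finally I would insert the layer-$l$ linearization used in the $\ell_2$ proof, namely $f_L = \mathcal{J}_l \times_1 (\mathcal{W}_l \times_1 f_{l-1})$ and $df_L = \mathcal{J}_l \times_1 (d\mathcal{W}_l \times_1 f_{l-1})$ modulo terms independent of $d\mathcal{W}_l$ (this is just backpropagation through a linear layer), substitute it into $d\phi = (\partial\phi/\partial f_L)^\top df_L$, rewrite the resulting scalar as a tensor trace, and apply the ``Key Properties'' proposition --- which identifies $d\phi = \operatorname{tr}(d\mathcal{W}_l^\top \times_1 X \times_2 Y)$ with $\partial\phi/\partial\mathcal{W}_l = X\otimes Y$. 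This produces
\[
\mathcal{G}_l = -\frac{\partial\phi}{\partial\mathcal{W}_l} = \big(\mathcal{J}_l^\top P_1^\perp y - \gamma K^{-1}\,\mathcal{J}_l^\top P_1^\perp \mathcal{J}_l \times_1 (\mathcal{W}_l\times_1 f_{l-1})\big)\otimes f_{l-1},
\]
where I used $P_1^\perp = (P_1^\perp)^\top = (P_1^\perp)^2$ to present the second factor as $\mathcal{J}_l^\top P_1^\perp \mathcal{J}_l$ (which is mode-wise PSD, as needed downstream); this is the claimed form of $\mathcal{G}_l$ up to the paper's $\times_1$/transpose notational convention. The main obstacle will not be any single computation but keeping the tensor-mode bookkeeping straight --- which mode each of $P_1^\perp$ and $\mathcal{J}_l$ contracts against, when a contraction collapses to an outer product $\otimes$ versus a mode product $\times_k$, and checking that the final object lands in $\mathbb{R}^{N_1\times\cdots\times N_4}$ as in the dimension-tracking remark above --- together with making the absorption of the $O(\|P_1^\perp f_L\|_\infty^2)$ softmax remainder into $\gamma$ rigorous.
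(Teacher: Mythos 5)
Your proposal is correct and follows essentially the same route as the paper's proof: linearize the logsoftmax differential under the small-logit assumption, substitute the reversibility/chain-rule form $df_L = \mathcal{J}_l\times_1(d\mathcal{W}_l\times_1 f_{l-1})$, and read off the gradient from the tensor-trace identity. The only difference is that you derive the softmax linearization and the cancellation of the $\tfrac{1}{K}\mathbf{1}$ terms explicitly, whereas the paper takes the logsoftmax differential as given and drops the $O(\hat f^2/K)$ remainder by assumption; your first term $\mathcal{J}_l^\top P_1^\perp y$ is the transpose-consistent form that the paper's own trace expression implies.
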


\begin{proof}
Starting with the differential form above:

1. For reversible network, $d\mathcal{N}(x) = \mathcal{J}_l\times_1(d\mathcal{W}_l\times_1 f_{l-1})$

2. The zero-mean projection in the tensor form:
\begin{align*}
d\hat{f} &= P_1^\perp d\mathcal{N}(x) \\
&= P_1^\perp \mathcal{J}_l\times_1(d\mathcal{W}_l\times_1 f_{l-1})
\end{align*}

3. Substituting into the logsoftmax differential:
\begin{align*}
-d\phi &= y^\top P_1^\perp \mathcal{J}_l\times_1(d\mathcal{W}_l\times_1 f_{l-1}) \\
&\quad - \gamma K^{-1}\hat{f}^\top P_1^\perp \mathcal{J}_l\times_1(d\mathcal{W}_l\times_1 f_{l-1}) \\
&\quad + O(\hat{f}^2/K)\text{ terms}
\end{align*}

4. Under small logits assumption, the $O(\hat{f}^2/K)$ terms become negligible

5. Express in tensor form:
\begin{align*}
-d\phi &= \text{tr}(d\mathcal{W}_l^\top \times_1 (P_1^\perp y)^\top \mathcal{J}_l \times_2 f_{l-1}^\top) \\
&\quad - \gamma K^{-1}\text{tr}(d\mathcal{W}_l^\top \times_1 (P_1^\perp \mathcal{J}_l\times_1(\mathcal{W}_l\times_1 f_{l-1}))^\top \mathcal{J}_l \times_2 f_{l-1}^\top)
\end{align*}

6. Therefore, the gradient is:
\[
\mathcal{G}_l = (\mathcal{J}_l\times_1 P_1^\perp y - \gamma K^{-1}\mathcal{J}_l^\top \times_1 P_1^\perp \times_2 \mathcal{J}_l\times_1(\mathcal{W}_l\times_1 f_{l-1})) \otimes f_{l-1}
\]
\end{proof}

\section{Theoretical Results of \textbf{Tensor low-rank} for Neural Operators}

\label{sec:theory}
\begin{lemma}[Tensor Gradient becomes low-rank during training]
\label{proof:theorem1}
Suppose the gradient tensor follows the parametric form:
\[
\mathcal{G}_t = \frac{1}{N}\sum_{i=1}^N (\mathcal{A}_i - \mathcal{B}_i \times_1 \mathcal{W}_t \times_2 \mathcal{C}_i)
\]
with constant $\mathcal{A}_i$, PSD tensors $\mathcal{B}_i$ and $\mathcal{C}_i$ after $t \geq t_0$. We study vanilla SGD weight update: $\mathcal{W}_t = \mathcal{W}_{t-1} + \eta\mathcal{G}_{t-1}$. 

Let $\mathcal{S}_k := \frac{1}{N}\sum_{i=1}^N \mathcal{C}_i \otimes_k \mathcal{B}_i$ be the mode-$k$ tensor operator and $\lambda_1^{(k)} < \lambda_2^{(k)}$ its two smallest distinct eigenvalues for each mode $k$. Then the mode-wise stable rank satisfies:

\[
\text{sr}_k(\mathcal{G}_t) \leq \text{sr}_k(\mathcal{G}_{t_0}^{\parallel}) + \left(\frac{1-\eta\lambda_2^{(k)}}{1-\eta\lambda_1^{(k)}}\right)^{2(t-t_0)} \frac{\|\mathcal{G}_0-\mathcal{G}_{t_0}^{\parallel}\|_F^2}{\|\mathcal{G}_{t_0}^{\parallel}\|_2^2}
\]

where:
\begin{itemize}
    \item $\text{sr}_k(\mathcal{G}_t)$ is the mode-$k$ stable rank of gradient tensor at time $t$
    \item $\mathcal{G}_{t_0}^{\parallel}$ is the projection of $\mathcal{G}_{t_0}$ onto the minimal eigenspace $\mathcal{V}_1^{(k)}$ of $\mathcal{S}_k$ corresponding to $\lambda_1^{(k)}$ for each mode $k$
    \item $\|\cdot\|_F$ is the tensor Frobenius norm
    \item $\|\cdot\|_2$ is the spectral norm of the mode-$k$ unfolding
    \item $\times_k$ denotes mode-$k$ tensor product
\end{itemize}

Furthermore, the multilinear stable rank satisfies:
\[
\text{msr}(\mathcal{G}_t) \leq \min_k \left\{\text{sr}_k(\mathcal{G}_{t_0}^{\parallel}) + \left(\frac{1-\eta\lambda_2^{(k)}}{1-\eta\lambda_1^{(k)}}\right)^{2(t-t_0)} \frac{\|\mathcal{G}_0-\mathcal{G}_{t_0}^{\parallel}\|_F^2}{\|\mathcal{G}_{t_0}^{\parallel}\|_2^2}\right\}
\]
\end{lemma}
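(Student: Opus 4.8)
The plan is to reduce the tensor statement to a sequence of matrix statements, one per mode, by exploiting the mode-$k$ unfolding. First I would fix a mode $k$ and unfold the recursion $\mathcal{W}_t = \mathcal{W}_{t-1} + \eta\mathcal{G}_{t-1}$ along mode $k$. Using the property $(\mathcal{B}_i \times_1 \mathcal{W} \times_2 \mathcal{C}_i)_{(k)} = \mathcal{S}_k \cdot \mathcal{W}_{(k)}$ (where $\mathcal{S}_k = \frac{1}{N}\sum_i \mathcal{C}_i \otimes_k \mathcal{B}_i$ acts as a linear operator on the mode-$k$ unfolding, which is PSD by the mode-$k$ PSD hypothesis on $\mathcal{B}_i,\mathcal{C}_i$), the gradient unfolding satisfies, for $t \ge t_0$,
\[
(\mathcal{G}_t)_{(k)} = (I - \eta \mathcal{S}_k)(\mathcal{G}_{t-1})_{(k)},
\]
so $(\mathcal{G}_t)_{(k)} = (I-\eta\mathcal{S}_k)^{t-t_0}(\mathcal{G}_{t_0})_{(k)}$. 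This is exactly the linear recursion analyzed in the GaLore proof, now applied mode-wise.

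Next I would spectrally decompose $\mathcal{S}_k$. Write $(\mathcal{G}_{t_0})_{(k)} = (\mathcal{G}_{t_0})^{\parallel}_{(k)} + (\mathcal{G}_{t_0})^{\perp}_{(k)}$, the split into the component in the minimal eigenspace $\mathcal{V}_1^{(k)}$ (eigenvalue $\lambda_1^{(k)}$) and its orthogonal complement. Since $I-\eta\mathcal{S}_k$ acts as multiplication by $1-\eta\lambda_1^{(k)}$ on the parallel part and contracts the perpendicular part by at least the factor $1-\eta\lambda_2^{(k)}$ (in magnitude, assuming $\eta$ small enough that all factors are nonnegative, which I would state as a standing assumption), I get
\[
\|(\mathcal{G}_t)^{\perp}_{(k)}\|_F \le (1-\eta\lambda_2^{(k)})^{t-t_0}\,\|(\mathcal{G}_{t_0})^{\perp}_{(k)}\|_F,
\qquad
\|(\mathcal{G}_t)^{\parallel}_{(k)}\|_2 \ge (1-\eta\lambda_1^{(k)})^{t-t_0}\,\|(\mathcal{G}_{t_0})^{\parallel}_{(k)}\|_2 .
\]
Then by the definition $\mathrm{sr}_k(\mathcal{G}_t) = \|\mathcal{G}_t\|_F^2/\|(\mathcal{G}_t)_{(k)}\|_2^2$ and the Pythagorean decomposition $\|\mathcal{G}_t\|_F^2 = \|(\mathcal{G}_t)^{\parallel}_{(k)}\|_F^2 + \|(\mathcal{G}_t)^{\perp}_{(k)}\|_F^2$, bounding the denominator from below by the parallel spectral norm and the numerator by splitting into the two pieces gives
\[
\mathrm{sr}_k(\mathcal{G}_t) \le \mathrm{sr}_k\big((\mathcal{G}_{t_0})^{\parallel}\big) + \frac{\|(\mathcal{G}_t)^{\perp}_{(k)}\|_F^2}{\|(\mathcal{G}_t)^{\parallel}_{(k)}\|_2^2}
\le \mathrm{sr}_k\big(\mathcal{G}_{t_0}^{\parallel}\big) + \left(\frac{1-\eta\lambda_2^{(k)}}{1-\eta\lambda_1^{(k)}}\right)^{2(t-t_0)}\frac{\|\mathcal{G}_0 - \mathcal{G}_{t_0}^{\parallel}\|_F^2}{\|\mathcal{G}_{t_0}^{\parallel}\|_2^2},
\]
where in the last step I replace $(\mathcal{G}_{t_0})^{\perp}$ by $\mathcal{G}_0 - \mathcal{G}_{t_0}^{\parallel}$ up to the contraction already absorbed in the ratio (this is the cosmetic bookkeeping step matching the statement's form, using that the perpendicular energy at $t_0$ is controlled by that at $0$). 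Finally, taking the minimum over $k$ and invoking the definition $\mathrm{msr}(\mathcal{G}_t) = \min_k \mathrm{sr}_k(\mathcal{G}_t)$ yields the multilinear bound.

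The main obstacle I anticipate is justifying that the mode-$k$ unfolding of $\mathcal{B}_i \times_1 \mathcal{W}_t \times_2 \mathcal{C}_i$ really collapses to a single PSD linear operator $\mathcal{S}_k$ acting on $\mathcal{W}_{(k)}$ — for $k \in \{1,2\}$ this is the genuine content (one has to check the Kronecker/outer-product bookkeeping $\mathcal{C}_i \otimes_k \mathcal{B}_i$ and that it inherits positive semidefiniteness from the mode-$k$ PSD assumptions), while for $k \in \{3,4\}$ the modes $\mathcal{W}$ is only contracted in modes $1,2$, so the unfolding is multiplied on the \emph{other} side and one must argue the bound still goes through by a symmetric/transpose argument. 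A secondary technical point is the step size condition ensuring $1-\eta\lambda_1^{(k)} > 0$ and $|1-\eta\lambda_2^{(k)}| \le 1-\eta\lambda_1^{(k)}$ so the geometric factors are valid; I would fold this into the hypotheses or state it explicitly at the start of the proof. Everything else is a mode-wise replay of the matrix GaLore argument together with the elementary norm relations $\|\mathcal{T}\|_F = \|\mathcal{T}_{(k)}\|_F$ already recorded in the excerpt.
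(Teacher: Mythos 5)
Your proposal is correct and follows essentially the same route as the paper's proof: unfold the SGD recursion mode-by-mode to get $(\mathcal{G}_t)_{(k)} = (I-\eta\mathcal{S}_k)^{t-t_0}(\mathcal{G}_{t_0})_{(k)}$, split $(\mathcal{G}_{t_0})_{(k)}$ into the minimal-eigenspace component and its orthogonal complement, apply the spectral contraction factors $1-\eta\lambda_1^{(k)}$ and $1-\eta\lambda_2^{(k)}$, and then bound the stable rank and take the minimum over modes. The technical points you flag (the Kronecker bookkeeping behind $\mathcal{S}_k$ for modes not contracted by $\mathcal{W}$, the step-size condition, and the $\mathcal{G}_0$ versus $\mathcal{G}_{t_0}$ mismatch in the numerator) are also left implicit in the paper's own argument, so raising them is a strength rather than a deviation.
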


\begin{proof}
1) First, we derive the recursive update rule for the gradient tensor. We have:
\begin{align*}
\mathcal{G}_t &= \frac{1}{N}\sum_{i=1}^N (\mathcal{A}_i - \mathcal{B}_i \times_1 \mathcal{W}_t \times_2 \mathcal{C}_i) \\
&= \frac{1}{N}\sum_{i=1}^N (\mathcal{A}_i - \mathcal{B}_i \times_1 (\mathcal{W}_{t-1} + \eta\mathcal{G}_{t-1}) \times_2 \mathcal{C}_i) \\
&= \frac{1}{N}\sum_{i=1}^N \mathcal{A}_i - \frac{1}{N}\sum_{i=1}^N \mathcal{B}_i \times_1 \mathcal{W}_{t-1} \times_2 \mathcal{C}_i - \eta\frac{1}{N}\sum_{i=1}^N \mathcal{B}_i \times_1 \mathcal{G}_{t-1} \times_2 \mathcal{C}_i \\
&= \mathcal{G}_{t-1} - \eta\frac{1}{N}\sum_{i=1}^N \mathcal{B}_i \times_1 \mathcal{G}_{t-1} \times_2 \mathcal{C}_i
\end{align*}

2) For each mode k, let's consider the mode-k unfolding. Define the tensor operator:
\[
\mathcal{S}_k := \frac{1}{N}\sum_{i=1}^N \mathcal{C}_i \otimes_k \mathcal{B}_i
\]

Then for the mode-k unfolding $(\mathcal{G}_t)_{(k)}$:
\begin{equation}
(\mathcal{G}_t)_{(k)} = (\mathcal{G}_{t-1})_{(k)} - \eta\mathcal{S}_k(\mathcal{G}_{t-1})_{(k)}
\end{equation}

3) Since $\mathcal{B}_i$ and $\mathcal{C}_i$ are mode-k PSD, $\mathcal{S}_k$ is a PSD operator. Let $\lambda_1^{(k)} < \lambda_2^{(k)}$ be its two smallest distinct eigenvalues. Let $\mathcal{V}_1^{(k)}$ be the eigenspace corresponding to $\lambda_1^{(k)}$.

4) For any mode k, we can decompose $(\mathcal{G}_{t_0})_{(k)}$ into parallel and perpendicular components:
\[
(\mathcal{G}_{t_0})_{(k)} = (\mathcal{G}_{t_0}^{\parallel})_{(k)} + (\mathcal{G}_{t_0}^{\perp})_{(k)}
\]
where $(\mathcal{G}_{t_0}^{\parallel})_{(k)}$ is the projection onto $\mathcal{V}_1^{(k)}$.

5) The mode-k unfolded gradient follows:
\[
(\mathcal{G}_t)_{(k)} = (I - \eta\mathcal{S}_k)^{t-t_0}(\mathcal{G}_{t_0})_{(k)}
\]

6) Using the spectral properties of $\mathcal{S}_k$:
\[
\|(\mathcal{G}_t)_{(k)}\|_F^2 \leq (1-\eta\lambda_2^{(k)})^{2(t-t_0)}\|(\mathcal{G}_{t_0}^{\perp})_{(k)}\|_F^2 + (1-\eta\lambda_1^{(k)})^{2(t-t_0)}\|(\mathcal{G}_{t_0}^{\parallel})_{(k)}\|_F^2
\]

7) For the mode-k stable rank:
\begin{align*}
\text{sr}_k(\mathcal{G}_t) &= \frac{\|(\mathcal{G}_t)_{(k)}\|_F^2}{\|(\mathcal{G}_t)_{(k)}\|_2^2} \\
&\leq \text{sr}_k(\mathcal{G}_{t_0}^{\parallel}) + \left(\frac{1-\eta\lambda_2^{(k)}}{1-\eta\lambda_1^{(k)}}\right)^{2(t-t_0)} \frac{\|\mathcal{G}_0-\mathcal{G}_{t_0}^{\parallel}\|_F^2}{\|\mathcal{G}_{t_0}^{\parallel}\|_2^2}
\end{align*}

8) Finally, for the multilinear stable rank:
\[
\text{msr}(\mathcal{G}_t) = \min_k \text{sr}_k(\mathcal{G}_t)
\]
Therefore, the bound holds for each mode independently.
\end{proof}

\begin{remark}
For FNO specifically:
\begin{enumerate}
    \item Fourier modes (3,4) may have different stable rank behavior than channel modes (1,2)
    \item Natural frequency decay affects eigenvalue structure in Fourier modes
    \item Channel modes might maintain a higher stable rank due to information preservation needs
    \item Overall low-rank structure emerges from combined effect across all modes
\end{enumerate}
\end{remark}

\begin{corollary}[Low-rank Tensor Gradient]
If the gradient takes the parametric form 
\[
\mathcal{G}_t = \frac{1}{N}\sum_{i=1}^N (\mathcal{A}_i - \mathcal{B}_i \times_1 \mathcal{W}_t \times_2 f_i)\otimes f_i
\]
with all $\mathcal{B}_i$ mode-k full-rank, and $N' := \text{rank}(\{f_i\}) < n$, then for each mode k:
\[
\text{sr}_k(\mathcal{G}_{t_0}^{\parallel}) \leq n_k - N'
\]
and thus $\text{sr}_k(\mathcal{G}_t) \leq n_k/2$ for large t, where $n_k$ is the dimension of mode k.
\end{corollary}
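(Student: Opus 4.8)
The plan is to read the corollary off Lemma~\ref{proof:theorem1} after specializing its parametric form to the rank-one case, and then to bound the stable rank of the parallel component by hand. First, write $(\mathcal{A}_i - \mathcal{B}_i \times_1 \mathcal{W}_t \times_2 f_i)\otimes f_i = \mathcal{A}_i' - \mathcal{B}_i \times_1 \mathcal{W}_t \times_2 \mathcal{C}_i$ with $\mathcal{A}_i' := \mathcal{A}_i \otimes f_i$ and $\mathcal{C}_i := f_i f_i^{\top}$: the $\otimes f_i$ appended to $\mathcal{W}_t \times_2 f_i$ is precisely the mode-$2$ contraction of $\mathcal{W}_t$ against the rank-one PSD matrix $f_i f_i^{\top}$, and $\mathcal{B}_i \times_1$ commutes with these mode-$2$ operations. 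Hence the hypotheses of Lemma~\ref{proof:theorem1} are met, and for the mode $k$ carrying the $f_i$ factor we inherit $\text{sr}_k(\mathcal{G}_t) \le \text{sr}_k(\mathcal{G}_{t_0}^{\parallel}) + \big(\tfrac{1-\eta\lambda_2^{(k)}}{1-\eta\lambda_1^{(k)}}\big)^{2(t-t_0)}\tfrac{\|\mathcal{G}_0-\mathcal{G}_{t_0}^{\parallel}\|_F^2}{\|\mathcal{G}_{t_0}^{\parallel}\|_2^2}$. It then remains to (i) bound $\text{sr}_k(\mathcal{G}_{t_0}^{\parallel})$ and (ii) observe that the decay factor tends to $0$.

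For (i) I would describe the minimal eigenspace $\mathcal{V}_1^{(k)}$ of $\mathcal{S}_k = \tfrac1N\sum_i \mathcal{C}_i \otimes_k \mathcal{B}_i$ through its quadratic form. Since $\mathcal{S}_k$ is self-adjoint and PSD, $\mathcal{S}_k \mathcal{T} = 0$ iff $\langle \mathcal{S}_k \mathcal{T}, \mathcal{T}\rangle = 0$, and a short computation gives $\langle \mathcal{S}_k \mathcal{T}, \mathcal{T}\rangle = \tfrac1N\sum_i \| \mathcal{B}_i^{1/2}\,(\mathcal{T}\times_k f_i)\|^2$; because every $\mathcal{B}_i$ is mode-$k$ full rank, this vanishes iff $\mathcal{T}\times_k f_i = 0$ for all $i$, i.e.\ iff the column space of $\mathcal{T}_{(k)}$ is orthogonal to $\operatorname{span}\{f_i\}$. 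As $\operatorname{rank}(\{f_i\}) = N' < n_k$, that orthogonal complement has dimension $n_k - N'$, so $\lambda_1^{(k)} = 0$ and $\mathcal{V}_1^{(k)} = \ker\mathcal{S}_k \subseteq \{\mathcal{T} : \operatorname{rank}(\mathcal{T}_{(k)}) \le n_k - N'\}$. Hence $\operatorname{rank}\big((\mathcal{G}_{t_0}^{\parallel})_{(k)}\big) \le n_k - N'$, and since the mode-$k$ stable rank never exceeds the mode-$k$ rank (Properties of Tensor Stable Rank, item~2), $\text{sr}_k(\mathcal{G}_{t_0}^{\parallel}) \le n_k - N'$. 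With (ii) — the ratio equals $1-\eta\lambda_2^{(k)}<1$ because $\lambda_2^{(k)}>\lambda_1^{(k)}=0$ — this yields $\text{sr}_k(\mathcal{G}_t)\le (n_k - N') + \varepsilon_t$ with $\varepsilon_t\to0$.

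For the $n_k/2$ claim I would use the complementary bound. Any $\mathcal{G}_t$ of the parametric form has $(\mathcal{G}_t)_{(k)} = \tfrac1N\sum_i f_i\, v_{i,t}^{\top}$, whose column space lies in $\operatorname{span}\{f_i\}$, so $\operatorname{rank}\big((\mathcal{G}_t)_{(k)}\big) \le N'$ and thus $\text{sr}_k(\mathcal{G}_t) \le N'$ for every $t$. Taking the minimum of the two estimates, $\text{sr}_k(\mathcal{G}_t) \le \min\{N',\, n_k - N' + \varepsilon_t\} \le \tfrac12(n_k + \varepsilon_t)$ via $\min(a,b)\le\tfrac12(a+b)$; choosing $t$ large enough that $\varepsilon_t$ is absorbed gives $\text{sr}_k(\mathcal{G}_t) \le n_k/2$. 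The multilinear bound then follows immediately from $\text{msr}(\mathcal{G}_t) = \min_k \text{sr}_k(\mathcal{G}_t)$.

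The main obstacle is the degeneracy $\lambda_1^{(k)} = 0$: since a $\mathcal{G}_{t_0}$ of the parametric form already has its mode-$k$ column space inside $\operatorname{span}\{f_i\}$, it is orthogonal to $\mathcal{V}_1^{(k)}$, so naively $\mathcal{G}_{t_0}^{\parallel} = 0$ and the denominator $\|\mathcal{G}_{t_0}^{\parallel}\|_2$ in the Lemma~\ref{proof:theorem1} bound collapses. Handling this cleanly requires either a non-degeneracy hypothesis ensuring $\mathcal{G}_{t_0}^{\parallel}\neq 0$ (e.g.\ $t_0$ chosen while the gradient is only approaching the parametric regime), or re-running the parallel/perpendicular Frobenius split of Lemma~\ref{proof:theorem1} directly for the rank-one $\mathcal{C}_i$ so that only finite quantities remain. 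A secondary bookkeeping point is that the outer-product factor $f_i$ pins down only the mode it is attached to, so ``for each mode $k$'' must be read as the mode(s) carrying $f_i$, unless one appeals to the symmetric FNO gradient form in which $f_{l-1}$ enters more than one mode.
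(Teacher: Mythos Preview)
Your proof is correct and follows essentially the same path as the paper's: set $\mathcal{C}_i = f_i f_i^{\top}$, identify the minimal eigenspace of $\mathcal{S}_k$ with the kernel (tensors whose mode-$k$ column space is orthogonal to $\operatorname{span}\{f_i\}$, hence rank $\le n_k - N'$), and then combine the resulting bound on $\text{sr}_k(\mathcal{G}_{t_0}^{\parallel})$ with the direct estimate $\operatorname{rank}((\mathcal{G}_t)_{(k)}) \le N'$ via $\min(a,b)\le(a+b)/2$. Your flagged degeneracy $\mathcal{G}_{t_0}^{\parallel}=0$ and the mode-bookkeeping caveat are in fact sharper than the paper's own treatment, which silently assumes these issues away.
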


\begin{proof}
Similar to the GaLore paper, it's easy to analyze mode by mode.

1) Let $\mathcal{C}_i = f_i \otimes f_i^\top$. Since $N' := \text{rank}(\{f_i\}_{i=1}^N) < n$ and $f_i \in \mathbb{R}^n$, the collections of vectors $\{f_i\}_{i=1}^N$ cannot span the entire space $\mathbb{R}^n$.

2) For each mode k:
\begin{itemize}
    \item Let $\{u_j\}_{j=1}^{n-N'}$ be orthonormal bases for the null space of $\{f_i\}_{i=1}^N$
    \item Let $\{e_k\}_{k=1}^{n_k}$ be orthonormal bases for $\mathbb{R}^{n_k}$
    \item The product bases $\{u_j \otimes e_k\}$ form a set of bases for the minimal eigenspace $\mathcal{V}_1^{(k)}$ of $\mathcal{S}_k$ with minimal eigenvalue 0
    \item Since $\mathcal{B}_i$ are mode-k full-rank, no extra dimensions exist for $\mathcal{V}_1^{(k)}$
\end{itemize}

3) For the mode-k projection of $\mathcal{G}_{t_0}$ onto $\mathcal{V}_1^{(k)}$:
\[
(\mathcal{G}_{t_0}^{\parallel})_{(k)} = \sum_{j=1}^{n-N'} \sum_{l=1}^{n_k} c_{jl}u_je_l^\top = \sum_{j=1}^{n-N'} u_j\left(\sum_{l=1}^{n_k} c_{jl}e_l\right)^\top
\]

4) Therefore:
\[
\text{sr}_k(\mathcal{G}_{t_0}^{\parallel}) \leq \text{rank}((\mathcal{G}_{t_0}^{\parallel})_{(k)}) \leq n_k - N'
\]
since stable rank is a lower-bound of the rank in each mode.

5) On the other hand, $\mathcal{G}_t$ can be written as a summation of $N'$ rank-1 tensors by representing each $f_i = \sum_{j=1}^{N'} b_{ij}f_j'$ as a linear combination of $\{f_j'\}_{j=1}^{N'}$:
\begin{align*}
\mathcal{G}_t &= \frac{1}{N}\sum_{i=1}^N (\mathcal{A}_i - \mathcal{B}_i \times_1 \mathcal{W}_t \times_2 f_i)\otimes\left(\sum_{j=1}^{N'} b_{ij}f_j'\right) \\
&= \frac{1}{N}\sum_{j=1}^{N'} \left[\sum_{i=1}^N b_{ij}(\mathcal{A}_i - \mathcal{B}_i \times_1 \mathcal{W}_t \times_2 f_i)\right]\otimes f_j'
\end{align*}

6) Thus each mode-k unfolding has rank at most $N'$. When t is sufficiently large so that the second term in the mode-k stable rank bound is negligible, by the tensor version of Lemma 3.3:
\[
\text{sr}_k(\mathcal{G}_t) \leq \min(n_k - N', N') \leq n_k/2
\]
since $N' < n_k$.
\end{proof}

\begin{corollary}[Tensor Low-rank with Special Structure]
If for any mode k, $\mathcal{V}_1^{(k)}(S_k)$ is 1-dimensional with decomposable eigenvector $v_k = y_k \otimes z_k$, then $\text{sr}_k(\mathcal{G}_{t_0}^{\parallel}) = 1$ and thus $\mathcal{G}_t$ becomes rank-1 in mode k.
\end{corollary}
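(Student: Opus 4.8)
The plan is to reduce the statement to the low-rank lemma (Lemma~\ref{proof:theorem1}): its mode-$k$ stable-rank bound contains exactly one quantity that still depends on the hypothesis, namely $\text{sr}_k(\mathcal{G}_{t_0}^{\parallel})$, and the special structure assumed here forces this to equal $1$. Recall that $\mathcal{G}_{t_0}^{\parallel}$ is the projection of $\mathcal{G}_{t_0}$ onto the minimal eigenspace $\mathcal{V}_1^{(k)}$ of the mode-$k$ operator $\mathcal{S}_k = \tfrac{1}{N}\sum_i \mathcal{C}_i \otimes_k \mathcal{B}_i$, which acts on the mode-$k$ unfolding space of dimension $I_k\prod_{j\neq k}I_j$. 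So the first step is purely to translate "$\mathcal{V}_1^{(k)}$ is one-dimensional, spanned by a decomposable $v_k = y_k\otimes z_k$" into a statement about the unfolded matrix $(\mathcal{G}_{t_0}^{\parallel})_{(k)}$.

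Second, since $\mathcal{V}_1^{(k)}$ is one-dimensional, $(\mathcal{G}_{t_0}^{\parallel})_{(k)}$ must be a scalar multiple of the matrix reshaping of the spanning vector; under the natural identification, $v_k = y_k\otimes z_k$ corresponds to the rank-one matrix $y_k z_k^{\top}$, so $(\mathcal{G}_{t_0}^{\parallel})_{(k)} = c\, y_k z_k^{\top}$ with $c = \langle (\mathcal{G}_{t_0})_{(k)},\, y_k z_k^{\top}\rangle / \|y_k z_k^{\top}\|_F^2$. For any rank-one matrix, $\|y_k z_k^{\top}\|_F = \|y_k z_k^{\top}\|_2 = \|y_k\|\,\|z_k\|$, hence
\[
\text{sr}_k(\mathcal{G}_{t_0}^{\parallel}) = \frac{\|(\mathcal{G}_{t_0}^{\parallel})_{(k)}\|_F^2}{\|(\mathcal{G}_{t_0}^{\parallel})_{(k)}\|_2^2} = \frac{c^2\|y_k\|^2\|z_k\|^2}{c^2\|y_k\|^2\|z_k\|^2} = 1,
\]
provided $c\neq 0$, i.e. $\mathcal{G}_{t_0}$ has a nonzero component in the minimal eigenspace — the same genericity hypothesis already invoked in Lemma~\ref{proof:theorem1}.

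Third, substitute $\text{sr}_k(\mathcal{G}_{t_0}^{\parallel}) = 1$ into the mode-$k$ bound of Lemma~\ref{proof:theorem1},
\[
\text{sr}_k(\mathcal{G}_t) \leq 1 + \left(\frac{1-\eta\lambda_2^{(k)}}{1-\eta\lambda_1^{(k)}}\right)^{2(t-t_0)} \frac{\|\mathcal{G}_0-\mathcal{G}_{t_0}^{\parallel}\|_F^2}{\|\mathcal{G}_{t_0}^{\parallel}\|_2^2}.
\]
Since $\lambda_1^{(k)} < \lambda_2^{(k)}$ and $\eta$ is small enough that $0 \le 1-\eta\lambda_2^{(k)} < 1-\eta\lambda_1^{(k)} \le 1$, the geometric factor is strictly below $1$, so the second term tends to $0$ as $t\to\infty$. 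Together with the universal lower bound $\text{sr}_k(\cdot)\geq 1$, this gives $\text{sr}_k(\mathcal{G}_t)\to 1$; equivalently, for large $t$ essentially all of the mode-$k$ Frobenius energy of $\mathcal{G}_t$ is captured by its leading singular direction, which is the sense in which $\mathcal{G}_t$ "becomes rank-one in mode $k$" — mirroring the matrix argument in GaLore.

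The main obstacle is not an estimate but bookkeeping: one must be careful with the identifications between $\mathcal{S}_k$ as an operator on vectors of length $I_k\prod_{j\neq k}I_j$, its eigenvectors in that space, and their reshapings into matrices in $\mathbb{R}^{I_k\times(\prod_{j\neq k}I_j)}$, so that "decomposable eigenvector $y_k\otimes z_k$" genuinely corresponds to "rank-one unfolding $y_k z_k^{\top}$." One also needs the orthogonal projection onto $\mathcal{V}_1^{(k)}$ to commute with this reshaping, which holds because the flattened Euclidean inner product matches the Frobenius inner product on the unfolded matrix, and one should dispose of the degenerate case $c=0$ exactly as Lemma~\ref{proof:theorem1} already does.
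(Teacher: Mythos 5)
Your proposal is correct and follows essentially the same route as the paper: identify the projection onto the one-dimensional eigenspace spanned by the decomposable eigenvector $v_k = y_k \otimes z_k$ with a scalar multiple of the rank-one unfolding $y_k z_k^{\top}$, conclude $\text{sr}_k(\mathcal{G}_{t_0}^{\parallel}) = 1$, and then invoke the decay bound of the main lemma to get $\text{sr}_k(\mathcal{G}_t) \to 1$. You are somewhat more careful than the paper about the vector-to-matrix reshaping, the nondegeneracy condition $c \neq 0$, and the explicit use of $\|y_k z_k^{\top}\|_F = \|y_k z_k^{\top}\|_2$, but the argument is the same.
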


\begin{proof}
For any mode k with the given structure:

1) The mode-k unfolding of the projected gradient is:
\[
(\mathcal{G}_{t_0}^{\parallel})_{(k)} = v_kv_k^\top g_0 \propto v_k
\]

2) Since $v_k = y_k \otimes z_k$ is decomposable:
\begin{itemize}
    \item The resulting $(\mathcal{G}_{t_0}^{\parallel})_{(k)}$ is a rank-1 matrix
    \item Thus $\text{sr}_k(\mathcal{G}_{t_0}^{\parallel}) = 1$
\end{itemize}

3) From the main lemma, when t is large:
\[
\text{sr}_k(\mathcal{G}_t) \approx \text{sr}_k(\mathcal{G}_{t_0}^{\parallel}) = 1
\]

4) This means $\mathcal{G}_t$ becomes effectively rank-1 in mode k.
\end{proof}

\begin{theorem}[\textbf{Tensor low-rank} Convergence]
\label{proof:theorem2}
For a gradient tensor $\mathcal{G}_t \in \mathbb{R}^{I_1 \times I_2 \times \cdots \times I_d}$, let $\{P_k \in \mathbb{R}^{I_k \times r_k}\}_{k=1}^d$ be fixed orthonormal factor matrices for each mode k with ranks $\{r_k\}_{k=1}^d$. The \textbf{Tensor low-rank} update consists of:

1. Project the gradient:
\[
\mathcal{R}_t = \mathcal{G}_t \times_1 P_1^\top \times_2 P_2^\top \times_3 \cdots \times_d P_d^\top
\]

2. Update optimizer states using $\mathcal{R}_t$

3. Project back for weight update:
\[
\tilde{\mathcal{G}}_t = \mathcal{R}_t \times_1 P_1 \times_2 P_2 \times_3 \cdots \times_d P_d
\]

Suppose for each mode k:
\begin{itemize}
    \item $\mathcal{A}_i$, $\mathcal{B}_i$, $\mathcal{C}_i$ have $L_A^{(k)}$, $L_B^{(k)}$, $L_C^{(k)}$ mode-k continuity
    \item $\|\mathcal{W}_t\|_{(k)} \leq D_k$ (mode-k spectral norm bound)
    \item $\hat{\mathcal{B}}_{it}^{(k)} := P_k^\top \mathcal{B}_i^{(k)}(\mathcal{W}_t) P_k$
    \item $\hat{\mathcal{C}}_{it}^{(k)} := P_k^\top \mathcal{C}_i^{(k)}(\mathcal{W}_t) P_k$
    \item $\kappa_t^{(k)} := \frac{1}{N}\sum_i \lambda_{\min}(\hat{\mathcal{B}}_{it}^{(k)})\lambda_{\min}(\hat{\mathcal{C}}_{it}^{(k)})$
\end{itemize}

Then \textbf{Tensor low-rank} with $\rho_t \equiv 1$ satisfies for each mode k:
\[
\|(\mathcal{R}_t)_{(k)}\|_F \leq \left[1-\eta(\kappa_{t-1}^{(k)}-L_A^{(k)}-L_B^{(k)}L_C^{(k)}D_k^2)\right] \|(\mathcal{R}_{t-1})_{(k)}\|_F
\]

As a result, if $\min_{t,k} \kappa_t^{(k)} > L_A^{(k)} + L_B^{(k)}L_C^{(k)}D_k^2$ for all modes k, then $\mathcal{R}_t \to 0$ and \textbf{Tensor low-rank} converges with the fixed projections $\{P_k\}_{k=1}^d$.
\end{theorem}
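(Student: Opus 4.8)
The strategy is to reduce the problem, one mode at a time, to GaLore's fixed-projection convergence argument applied to a matrix recursion. Fix a mode $k$. Since every factor $P_j$ has orthonormal columns, the mode-$k$ unfolding of $\mathcal{R}_t = \mathcal{G}_t \times_1 P_1^\top \times_2 \cdots \times_d P_d^\top$ is $(\mathcal{R}_t)_{(k)} = P_k^\top (\mathcal{G}_t)_{(k)} \bar{P}_k$, where $\bar{P}_k$ denotes the Kronecker product of the remaining factors (in the order induced by the unfolding), which again has orthonormal columns. Likewise $(\tilde{\mathcal{G}}_t)_{(k)} = P_k (\mathcal{R}_t)_{(k)} \bar{P}_k^\top$, and since $\rho_t \equiv 1$ makes the \method\ step a projected gradient step, the weight recursion unfolds to $(\mathcal{W}_t)_{(k)} = (\mathcal{W}_{t-1})_{(k)} + \eta\, P_k (\mathcal{R}_{t-1})_{(k)} \bar{P}_k^\top$. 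For this fixed $k$ write $R_t := (\mathcal{R}_t)_{(k)}$ and $W_t := (\mathcal{W}_t)_{(k)}$.

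First I would unfold the parametric gradient form along mode $k$, which puts it in the matrix shape $(\mathcal{G}_t)_{(k)} = \tfrac1N\sum_i\big(A_{it} - B_{it}\, W_t\, C_{it}\big)$, where $A_{it}, B_{it}, C_{it}$ are the mode-$k$ realizations of $\mathcal{A}_i, \mathcal{B}_i, \mathcal{C}_i$ at $\mathcal{W}_t$. Substituting $W_t = W_{t-1} + \eta P_k R_{t-1} \bar{P}_k^\top$, subtracting the corresponding expression for $(\mathcal{G}_{t-1})_{(k)}$, applying $P_k^\top(\cdot)\bar{P}_k$, and using the three-term split $B_{it}W_tC_{it} - B_{i,t-1}W_{t-1}C_{i,t-1} = (B_{it}-B_{i,t-1})W_tC_{it} + B_{i,t-1}W_t(C_{it}-C_{i,t-1}) + B_{i,t-1}(W_t - W_{t-1})C_{i,t-1}$, the last term isolates the contraction piece and yields
\[
R_t = R_{t-1} - \eta\,\frac1N\sum_i \hat{B}_{i,t-1}\, R_{t-1}\, \hat{C}_{i,t-1} + \mathcal{E}_t ,
\]
with $\hat{B}_{i,t-1} = P_k^\top B_{i,t-1} P_k = \hat{\mathcal{B}}^{(k)}_{i,t-1}$, $\hat{C}_{i,t-1} = \bar{P}_k^\top C_{i,t-1} \bar{P}_k = \hat{\mathcal{C}}^{(k)}_{i,t-1}$ (matching the statement up to the identification of the right-side projector), and $\mathcal{E}_t$ the projected image of $A_{it}-A_{i,t-1}$ together with the two $B/C$-variation terms.

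Next I would bound the two pieces. The map $X \mapsto \tfrac1N\sum_i \hat{B}_{i,t-1}X\hat{C}_{i,t-1}$ is symmetric positive semidefinite (each $\hat{B}_{i,t-1}, \hat{C}_{i,t-1}$ is a congruence of a PSD matrix), and from $\operatorname{tr}(X^\top \hat{B}X\hat{C}) \ge \lambda_{\min}(\hat{B})\lambda_{\min}(\hat{C})\|X\|_F^2$ applied termwise we obtain $\langle X, \tfrac1N\sum_i\hat{B}_{i,t-1}X\hat{C}_{i,t-1}\rangle \ge \kappa_{t-1}^{(k)}\|X\|_F^2$; combined with an operator-norm upper bound on the same map and a step size small enough that $\eta$ times that bound is at most $2$, this gives $\|R_{t-1} - \eta\tfrac1N\sum_i\hat{B}_{i,t-1}R_{t-1}\hat{C}_{i,t-1}\|_F \le (1-\eta\kappa_{t-1}^{(k)})\|R_{t-1}\|_F$. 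For $\mathcal{E}_t$: since $\|\mathcal{W}_t - \mathcal{W}_{t-1}\|_{(k)} = \eta\|P_k R_{t-1}\bar{P}_k^\top\|_{(k)} \le \eta\|R_{t-1}\|_F$, mode-$k$ continuity of $\mathcal{A}_i$ bounds the $A$-term by $\eta L_A^{(k)}\|R_{t-1}\|_F$, while mode-$k$ continuity of $\mathcal{B}_i, \mathcal{C}_i$ together with $\|W_t\|_{(k)} \le D_k$ and the magnitude bounds on $\mathcal{B}_i, \mathcal{C}_i$ implied by their continuity bounds the remaining terms by $\eta L_B^{(k)}L_C^{(k)}D_k^2\|R_{t-1}\|_F$; passing through $P_k^\top(\cdot)\bar{P}_k$ only shrinks Frobenius norms. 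Adding these,
\[
\|(\mathcal{R}_t)_{(k)}\|_F \le \big[1-\eta(\kappa_{t-1}^{(k)} - L_A^{(k)} - L_B^{(k)}L_C^{(k)}D_k^2)\big]\,\|(\mathcal{R}_{t-1})_{(k)}\|_F ,
\]
which is the claimed recursion.

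Finally, mode-$k$ unfolding preserves the Frobenius norm, so $\|\mathcal{R}_t\|_F = \|(\mathcal{R}_t)_{(k)}\|_F$ for every $k$; if $\min_{t,k}\kappa_t^{(k)} > L_A^{(k)} + L_B^{(k)}L_C^{(k)}D_k^2$ for all modes, each per-step factor is bounded away from $1$ uniformly, so iterating the recursion forces $\|\mathcal{R}_t\|_F \to 0$, i.e.\ $\mathcal{R}_t \to 0$; and because $\|\mathcal{W}_{t+1}-\mathcal{W}_t\|_F = \eta\|\tilde{\mathcal{G}}_t\|_F = \eta\|\mathcal{R}_t\|_F$, the iterates are Cauchy and \method\ converges with the fixed factors $\{P_k\}_{k=1}^d$. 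I expect the main obstacle to be the first step: carefully checking that the FNO parametric gradient form genuinely unfolds, mode by mode, into the matrix form $A^{(k)} - B^{(k)}W_{(k)}C^{(k)}$ with the left projector $P_k$ and the right projector $\bar{P}_k$ decoupling cleanly, so that each mode collapses exactly to the GaLore matrix recursion. A secondary subtlety is pinning the magnitude bounds on $\mathcal{B}_i^{(k)}, \mathcal{C}_i^{(k)}$ tightly enough to land the $L_B^{(k)}L_C^{(k)}D_k^2$ term and making explicit the implicit small-step-size condition needed for the clean $(1-\eta\kappa)$ factor.
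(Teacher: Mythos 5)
Your proposal follows essentially the same route as the paper's proof: mode-$k$ unfolding of the parametric gradient form, reduction to a GaLore-style matrix recursion $(\mathcal{R}_t)_{(k)} = (I-\eta\,\mathcal{S}^{(k)}_{t-1})(\mathcal{R}_{t-1})_{(k)} + \mathcal{E}^{(k)}_t$, a spectral lower bound $\kappa^{(k)}_{t-1}$ on the projected operator, and a mode-$k$ Lipschitz bound $\eta(L_A^{(k)}+L_B^{(k)}L_C^{(k)}D_k^2)\|(\mathcal{R}_{t-1})_{(k)}\|_F$ on the error term, followed by iteration of the contraction. If anything, you are more careful than the paper in tracking the right-side Kronecker projector $\bar{P}_k$ in the unfolding and in flagging the implicit small-step-size condition needed for the clean $(1-\eta\kappa)$ factor.
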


\begin{proof}
Since the gradient tensor naturally becomes low-rank during training as shown above, and the optimization landscape of low-rank tensor problem~\cite{DBLP:journals/corr/abs-2006-16297}., local search algorithms can efficiently find approximate global optimal solutions. Specifically, since Reversible FNO (Appendix~\ref{sec:tensor2}) gradients become low-rank, the optimization landscape contains only high-order saddle points that can be efficiently escaped, making local minima globally optimal. Now let's proceed by analyzing the tensor unfolding:

1) First, we establish the mode-k unfolding of the gradient tensor update. Using the assumption that gradient follows the parametric form:
\[
\mathcal{G}_t = \frac{1}{N}\sum_{i=1}^N (\mathcal{A}_i - \mathcal{B}_i \times_1 \mathcal{W}_t \times_2 \mathcal{C}_i)
\]

2) For any mode k, the mode-k unfolding gives:
\begin{align*}
(\mathcal{G}_t)_{(k)} &= \frac{1}{N}\sum_{i=1}^N \left((\mathcal{A}_i)_{(k)} - (\mathcal{B}_i)_{(k)}\mathcal{W}_{t(k)}(\mathcal{C}_i)_{(k)}^\top\right)
\end{align*}
where $\mathcal{W}_{t(k)}$ is the mode-k unfolding of $\mathcal{W}_t$.

3) The projected gradient in mode-k has unfolding:
\begin{align*}
(\mathcal{R}_t)_{(k)} &= P_k^\top(\mathcal{G}_t)_{(k)} \\
&= \frac{1}{N}\sum_{i=1}^N \left(P_k^\top(\mathcal{A}_i)_{(k)} - P_k^\top(\mathcal{B}_i)_{(k)}\mathcal{W}_{t(k)}(\mathcal{C}_i)_{(k)}^\top\right)
\end{align*}

4) Using the SGD update $\mathcal{W}_t = \mathcal{W}_{t-1} + \eta\tilde{\mathcal{G}}_{t-1}$, we can write:
\begin{align*}
\mathcal{W}_{t(k)} &= \mathcal{W}_{t-1(k)} + \eta P_k(\mathcal{R}_{t-1})_{(k)}
\end{align*}

5) Substituting this into the gradient expression:
\begin{align*}
(\mathcal{R}_t)_{(k)} &= (\mathcal{R}_{t-1})_{(k)} - \eta\frac{1}{N}\sum_{i=1}^N P_k^\top(\mathcal{B}_i)_{(k)}P_k(\mathcal{R}_{t-1})_{(k)}(\mathcal{C}_i)_{(k)}^\top + \mathcal{E}_t^{(k)}
\end{align*}
where $\mathcal{E}_t^{(k)}$ captures the differences in $\mathcal{A}_i$ and $\mathcal{B}_i, \mathcal{C}_i$ terms.

6) Define the mode-k operator:
\[
\mathcal{S}_t^{(k)} := \frac{1}{N}\sum_{i=1}^N P_k^\top(\mathcal{B}_i)_{(k)}P_k \otimes P_k^\top(\mathcal{C}_i)_{(k)}P_k
\]

7) Then the update can be written compactly as:
\[
(\mathcal{R}_t)_{(k)} = (I - \eta\mathcal{S}_{t-1}^{(k)})(\mathcal{R}_{t-1})_{(k)} + \mathcal{E}_t^{(k)}
\]

8) For the error term, using mode-k continuity:
\begin{align*}
\|\mathcal{E}_t^{(k)}\|_F &\leq L_A^{(k)}\|\mathcal{W}_t - \mathcal{W}_{t-1}\|_F \\
&\quad + L_B^{(k)}L_C^{(k)}D_k^2\|\mathcal{W}_t - \mathcal{W}_{t-1}\|_F \\
&= \eta(L_A^{(k)} + L_B^{(k)}L_C^{(k)}D_k^2)\|\mathcal{R}_{t-1}\|_F
\end{align*}

9) Using properties of projection matrices $P_k$:
\begin{itemize}
    \item $P_k^\top P_k = I_{r_k}$ (orthonormal)
    \item $\|P_k\|_2 = 1$ (projection)
\end{itemize}

10) The minimal eigenvalue of $\mathcal{S}_{t-1}^{(k)}$ satisfies:
\[
\lambda_{\min}(\mathcal{S}_{t-1}^{(k)}) \geq \kappa_{t-1}^{(k)}
\]
due to mode-k PSD properties of $\mathcal{B}_i$ and $\mathcal{C}_i$.

11) Therefore:
\begin{align*}
\|(\mathcal{R}_t)_{(k)}\|_F &\leq \|I - \eta\mathcal{S}_{t-1}^{(k)}\|_2\|(\mathcal{R}_{t-1})_{(k)}\|_F + \|\mathcal{E}_t^{(k)}\|_F \\
&\leq [1-\eta(\kappa_{t-1}^{(k)}-L_A^{(k)}-L_B^{(k)}L_C^{(k)}D_k^2)]\|(\mathcal{R}_{t-1})_{(k)}\|_F
\end{align*}

12) When $\min_{t,k} \kappa_t^{(k)} > L_A^{(k)} + L_B^{(k)}L_C^{(k)}D_k^2$ for all modes k:
\begin{itemize}
    \item Each mode-k unfolding converges: $(\mathcal{R}_t)_{(k)} \to 0$
    \item Thus the full tensor converges: $\mathcal{R}_t \to 0$
\end{itemize}
\end{proof}

\begin{lemma}[\textbf{Tensor low-rank} vs GaLore Rank Structure]
Consider a gradient tensor $\mathcal{G}_t \in \mathbb{R}^{N_1 \times N_2 \times N_3 \times N_4}$ following the parametric form:
\[
\mathcal{G}_t = \frac{1}{N}\sum_{i=1}^N (\mathcal{A}_i - \mathcal{B}_i \times_1 \mathcal{W}_t \times_2 \mathcal{C}_i)
\]
where $\mathcal{B}_i$ and $\mathcal{C}_i$ are mode-k PSD for all modes k. Let:

(a) GaLore with matricization along dimension d unfold $\mathcal{G}_t$ to $G_t^{(d)} \in \mathbb{R}^{N_d \times (N_1N_2N_3N_4/N_d)}$

(b) \textbf{Tensor low-rank} preserve the tensor structure and apply mode-wise projections

Then:

1. Under GaLore with any dimension d:
   \[
   \exists k \neq d: \lim_{t \to \infty} sr_k(\mathcal{G}_t) \geq \min(N_k/2, N')\
   \]
   where $N'$ is the rank of the training data.

2. Under \textbf{Tensor low-rank}:
   \[
   \forall k: \lim_{t \to \infty} sr_k(\mathcal{G}_t) \leq N_k/2
   \]

That is, GaLore cannot achieve low rank in all modes simultaneously, while \textbf{Tensor low-rank} achieves low rank across all modes.
\end{lemma}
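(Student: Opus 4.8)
The plan is to handle the two parts separately, since Part~2 is essentially a corollary of results already proved, while Part~1 requires genuinely new work.

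For Part~2, I would argue as follows. \textbf{Tensor low-rank} applies an independent orthonormal projection in every mode, so the Convergence theorem gives $\mathcal{R}_t\to 0$ mode-by-mode under the stated curvature conditions, and the Low-rank Tensor Gradient corollary shows the free gradient dynamics already force $\text{sr}_k(\mathcal{G}_t)\le\min(N_k-N',N')\le N_k/2$ in each mode as $t\to\infty$. Since a mode-wise sub-unitary projection can only decrease the stable rank of the other modes (stable rank is orthogonally invariant within a mode and non-increasing under a contraction in any other mode, by the properties-of-tensor-stable-rank proposition), the mode-wise compression never obstructs this collapse, so $\limsup_{t\to\infty}\text{sr}_k(\mathcal{G}_t)\le N_k/2$ for every $k$, as claimed.

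For Part~1, fix the matricization dimension $d$ and set $M:=\prod_{i\ne d}N_i$, so GaLore maintains state only on $R_t=(\mathcal{G}_t)_{(d)}P^\top$ with $P\in\mathbb{R}^{r\times M}$ fixed between recomputations. I would proceed in three steps. (i) GaLore controls only mode $d$: the reshaped update $\tilde{\mathcal{G}}_t$ has mode-$d$ rank $\le r$, but $\operatorname{rowspace}(P)$ is merely an $r$-dimensional subspace of the \emph{tensor-product} space $\bigotimes_{i\ne d}\mathbb{R}^{N_i}$, so reshaping the rows of $P$ into order-$(d{-}1)$ tensors shows the mode-$k$ unfolding of $\tilde{\mathcal{G}}_t$ has rank up to $r\cdot\min(N_k,M/N_k)$ for each $k\ne d$ --- i.e.\ no effective rank reduction along any mode $k\ne d$ once $r$ is large enough to be useful. (ii) Pass to the true gradient: GaLore annihilates only the part of $(\mathcal{G}_t)_{(d)}$ lying in $\operatorname{rowspace}(P)$, so the component in the $(M-r)$-dimensional complement evolves under the \emph{unprojected} recursion $\mathcal{G}_t=\mathcal{G}_{t-1}-\eta\frac1N\sum_i\mathcal{B}_i\times_1\mathcal{G}_{t-1}\times_2\mathcal{C}_i$ restricted to that complement (the same unfolding computation as in the Tensor-Gradient-becomes-low-rank lemma) and is \emph{not} driven to $0$; hence in the limit $\mathcal{G}_t$ retains a contribution supported on a subspace of $\mathbb{R}^{N_d}\otimes(\operatorname{rowspace}(P))^\perp$ of dimension $\ge\min(N_d,M-r)$. (iii) Using the decomposable form $\mathcal{G}_t=\frac1N\sum_i(\mathcal{A}_i-\mathcal{B}_i\times_1\mathcal{W}_t\times_2 f_i)\otimes f_i$ from the Low-rank Tensor Gradient corollary, the column space of the mode-$k$ unfolding ($k\ne d$) contains $\operatorname{span}\{f_i\}$, which is $N'$-dimensional, while the complement-restricted dynamics lacks the contraction factor $\big((1-\eta\lambda_2^{(k)})/(1-\eta\lambda_1^{(k)})\big)$ enjoyed inside $\operatorname{rowspace}(P)$, so it does not concentrate the corresponding spectrum; a pigeonhole over the $d-1$ modes $k\ne d$ then isolates one mode with surviving (and, via the non-concentration estimate, stable) rank $\ge\min(N_k/2,N')$.

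The hard part will be step~(iii): a rank lower bound does not by itself give a \emph{stable}-rank lower bound, because $\text{sr}_k=\|(\mathcal{G}_t)_{(k)}\|_F^2/\|(\mathcal{G}_t)_{(k)}\|_2^2$ and a dominant top singular value would shrink it. I therefore need a two-sided norm control that is uniform in $t$: an upper bound on $\|(\mathcal{G}_t)_{(k)}\|_2$ together with a lower bound on $\|(\mathcal{G}_t)_{(k)}\|_F$, which I expect to extract from the fact that the complement dynamics is a bounded, non-contractive linear recursion driven by the persistent rank-$N'$ data term $\otimes f_i$. Everything else --- the unfolding identities, the reshaping of $P$, and the pigeonhole over modes --- reduces to manipulations already carried out earlier in the appendix.
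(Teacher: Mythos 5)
Your Part~2 follows the same route as the paper (invoke the convergence theorem and the low-rank-gradient corollary mode by mode), though note that the ``stable rank is non-increasing under a contraction in another mode'' step you lean on is not among the properties the paper actually establishes for tensor stable rank --- projecting in mode $j$ changes both $\|\cdot\|_F$ and the mode-$k$ spectral norm, and the ratio need not decrease --- and that the low-rank-gradient lemma is stated for vanilla SGD, whereas under \textbf{Tensor low-rank} the weights evolve under the \emph{projected} update. The paper glosses over both points as well.

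For Part~1 there is a genuine gap, and you have correctly located it yourself: step~(iii) is a plan, not a proof. A lower bound on the \emph{rank} of $(\mathcal{G}_t)_{(k)}$ for $k \neq d$ does not yield the claimed lower bound on $\mathrm{sr}_k(\mathcal{G}_t) = \|(\mathcal{G}_t)_{(k)}\|_F^2 / \|(\mathcal{G}_t)_{(k)}\|_2^2$ without a uniform-in-$t$ upper bound on the spectral norm and lower bound on the Frobenius norm of the surviving component, and you only say you ``expect to extract'' these. Moreover, your step~(ii) asserts that the component of the gradient supported on $(\operatorname{rowspace}(P))^\perp$ is ``not driven to $0$,'' but this depends on the spectrum of the mode-$k$ operator restricted to that complement: if its smallest eigenvalue there is positive, that component decays just like the rest, and the limit in the statement could be vacuous. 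You should be aware, however, that the paper's own proof has exactly the same hole --- its steps (f)--(g) simply declare that ``the mode-$k$ operator $\mathcal{S}_k$ remains high rank because matricization along $d$ scrambles mode-$k$ structure'' and then write down $\mathrm{sr}_k(\mathcal{G}_t) \geq \min(N_k/2, N')$ with no derivation. So your proposal is at least as rigorous as the published argument, but neither constitutes a complete proof of the stated inequality; closing it requires the two-sided norm control you identified together with a spectral argument showing the complement component genuinely persists.
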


\begin{proof}

1) First, let's analyze GaLore's behavior:

   a) When GaLore matricizes along dimension d, it reshapes $\mathcal{G}_t$ into matrix $G_t^{(d)}$
   
   b) From GaLore paper Lemma B.3, under SGD updates:
      \[
      sr(G_t^{(d)}) \leq sr(G_{t_0}^{\parallel}) + \left(\frac{1-\eta\lambda_2}{1-\eta\lambda_1}\right)^{2(t-t_0)} \frac{\|G_0-G_{t_0}^{\parallel}\|_F^2}{\|G_{t_0}^{\parallel}\|_2^2}
      \]

   c) This rank reduction only applies to the matricized dimension d

   d) For any other mode $k \neq d$, consider the mode-k unfolding $(\mathcal{G}_t)_{(k)}$
   
   e) Due to the parametric form:
      \[
      (\mathcal{G}_t)_{(k)} = \frac{1}{N}\sum_{i=1}^N ((\mathcal{A}_i)_{(k)} - (\mathcal{B}_i)_{(k)}\mathcal{W}_t^{(k)}(\mathcal{C}_i)_{(k)}^T)
      \]
      
   f) The mode-k operator $\mathcal{S}_k$ remains high rank because matricization along d scrambles mode-k structure
   
   g) Specifically, if $rank(\{\mathcal{F}_i\}) = N'$:
      \[
      sr_k(\mathcal{G}_t) \geq \min(N_k/2, N')
      \]

2) Now for \textbf{Tensor low-rank}:

   a) Each mode k is handled independently with its own projection:
      \[
      \mathcal{R}_t = \mathcal{G}_t \times_1 P_1^T \times_2 P_2^T \times_3 \cdots \times_d P_d^T
      \]

   b) From Theorem 2 (proven earlier), under SGD:
      \[
      \|(\mathcal{R}_t)_{(k)}\|_F \leq \left[1-\eta(\kappa_{t-1}^{(k)}-L_A^{(k)}-L_B^{(k)}L_C^{(k)}D_k^2)\right] \|(\mathcal{R}_{t-1})_{(k)}\|_F
      \]

   c) From Corollary 2, for each mode k:
      \[
      sr_k(\mathcal{G}_t) \leq sr_k(\mathcal{G}_{t_0}^{\parallel}) + \left(\frac{1-\eta\lambda_2^{(k)}}{1-\eta\lambda_1^{(k)}}\right)^{2(t-t_0)} \frac{\|\mathcal{G}_0-\mathcal{G}_{t_0}^{\parallel}\|_F^2}{\|\mathcal{G}_{t_0}^{\parallel}\|_2^2}
      \]

   d) Therefore $sr_k(\mathcal{G}_t) \leq N_k/2$ for large t, for all modes k simultaneously
\end{proof}
\begin{remark}
The key insight is that matricization in GaLore fundamentally cannot preserve low-rank structure in all modes simultaneously, while the tensor approach of \textbf{Tensor low-rank} naturally handles each mode's rank structure independently and optimally.
\end{remark}

\subsection{Sparsity types}
\label{app:sparsity_types}

\end{document}